\newtheorem{assumption}{Assumption}
\def \A {\mathbf{A}}
\def \R {\mathbb{R}}
\def \one {\mathbf{1}}
\def \ubf {\mathbf{u}}
\def \vbf {\mathbf{v}}
\def \x {\mathbf{x}}
\def \E {\mathbb{E}}
\def \a {\mathbf{a}}
\def \dbf {\mathbf{d}}
\def \z {\mathbf{z}}
\def \y {\mathbf{y}}
\def \Hbf {\mathbf{H}}
\def \g {\mathbf{g}}
\def \F {\mathcal{F}}
\def \C {\mathcal{C}}
\begin{document}
\title{Online Learning for Non-monotone Submodular Maximization: From Full Information to Bandit Feedback} 

\author{\name Qixin Zhang \email qxzhang4-c@my.cityu.edu.hk\\
       \addr School of Data Science\\
       City University of Hong Kong\\
       Kowloon, Hong Kong, China
      \AND
       \name Zengde Deng \email zengde.dzd@cainiao.com \\
      \addr Cainiao Network\\
      Hang Zhou, China
       \AND
       \name Zaiyi Chen \email zaiyi.czy@cainiao.com \\
      \addr Cainiao Network\\
      Hang Zhou, China
      \AND
       \name Kuangqi Zhou\email
       kzhou@u.nus.edu\\
       \addr National University of Singapore\\
       117583, Singapore
      \AND
       \name Haoyuan Hu \email haoyuan.huhy@cainiao.com \\
      \addr Cainiao Network\\
      Hang Zhou, China
      \AND
      \name Yu Yang \email yuyang@cityu.edu.hk\\
      \addr School of Data Science\\
       City University of Hong Kong\\
       Kowloon, Hong Kong, China
      }
\maketitle

\begin{abstract}
In this paper, we revisit the online non-monotone continuous DR-submodular maximization problem over a down-closed convex set, which finds wide real-world applications in the domain of machine learning, economics, and operations research. At first, we present the \textbf{Meta-MFW} algorithm achieving a $1/e$-regret of $O(\sqrt{T})$ at the cost of $T^{3/2}$ stochastic gradient evaluations per round. As far as we know, \textbf{Meta-MFW} is the first algorithm to obtain $1/e$-regret of $O(\sqrt{T})$ for the online non-monotone continuous DR-submodular maximization problem over a down-closed convex set. Furthermore, in sharp contrast with \textbf{ODC} algorithm~\citep{thang2021online}, \textbf{Meta-MFW} relies on the simple online linear oracle without discretization, lifting, or rounding operations. Considering the practical restrictions, we then propose the \textbf{Mono-MFW} algorithm, which reduces the per-function stochastic gradient evaluations from $T^{3/2}$ to 1 and achieves a $1/e$-regret bound of $O(T^{4/5})$. Next, we extend \textbf{Mono-MFW} to the bandit setting and propose the \textbf{Bandit-MFW} algorithm which attains a $1/e$-regret bound of $O(T^{8/9})$. To the best of our knowledge, \textbf{Mono-MFW} and \textbf{Bandit-MFW} are the first sublinear-regret algorithms to explore the one-shot and bandit setting for online non-monotone continuous DR-submodular maximization problem over a down-closed convex set, respectively. Finally, we conduct numerical experiments on both synthetic and real-world datasets to verify the effectiveness of our methods.
\end{abstract}

\section{Introduction}
Continuous DR-submodular maximization draws wide attention since it mathematically depicts the diminishing return phenomenon in continuous domains. Numerous real-world applications in machine learning, operations research, and other related areas, such as non-definite quadratic programming~\citep{ito2016large},
revenue maximization~\citep{soma2017non,bian2020continuous}, viral marketing~\citep{kempe2003maximizing,yang2016continuous}, determinantal point processes~\citep{kulesza2012determinantal,mitra2021submodular+}, to name a few, could be modeled throughout the notion of continuous DR-submodularity. 

In recent years, the prominent paradigm of online optimization~\citep{zinkevich2003online,hazan2016introduction} has led to spectacular successes in modelling the imperfect and complicated environment. 
In this framework, at each step, the online algorithm first chooses an action from a predefined set of feasible actions; then the adversary reveals the utility function. 
The objective of the online algorithm is to minimize the gap between the accumulative reward and that of the best fixed policy in hindsight. 

Previously, a large body of algorithms~\citep{bian2020continuous,mokhtari2020stochastic} with approximation guarantees rely on the monotone assumption of continuous DR-submodular functions. However, many real-world problems, such as the general DR-submodular quadratic programming~\citep{ito2016large} and revenue maximization~\citep{soma2017non}, are instances of non-monotone DR-submodular maximization. Motivated by these real applications, in this paper we focus on the problem of online non-monotone continuous DR-submodular maximization over a down-closed convex set under different feedbacks, i.e., full-information and bandit feedback.

Recently, based on a special online non-convex oracle, ~\citet{thang2021online} presented the first online algorithm~(\textbf{ODC}) for non-monotone continuous DR-submodular maximization over a down-closed convex set. \textbf{ODC} achieves a $1/e$-regret of $O(T^{3/4})$ where $T$ is the horizon. Notably, the non-convex oracles of \textbf{ODC} need to discretize the original constrained domain and lift the $n$-dimensional subroutine problem into a solvable linear programming in a higher $(M\times n)$-dimensional space where $M=(T/n)^{1/4}$, which will incur a heavy computation burden when $T$ is large. Moreover, the rounding operation~\citep{mirrokni2017tight} in the online non-convex oracle assumes the knowledge of the vertices of the down-closed convex set, which is infeasible in many real applications. 
In this paper, we propose a new method to overcome these drawbacks.
Motivated via the measured continuous greedy \citep{feldman2011unified}, we first present the Meta-Measured Frank-Wolfe (\textbf{Meta-MFW}) algorithm, which achieves a faster $1/e$-regret of $O(T^{1/2})$ with only simple online linear oracle.

Note that \textbf{ODC} and \textbf{Meta-MFW} require inquiring $T^{3/4}$ and $T^{3/2}$ stochastic gradient evaluations at each round, respectively.
Therefore, when $T$ is large, the huge amount of gradient estimates at each round makes both algorithms computationally prohibitive.
In many scenarios, the stochastic gradient is 1)~time-consuming to acquire, for instance, in the influence maximization task~\citep{kempe2003maximizing,yang2016continuous}, we need to generate enormous samples on large-scale social graphs to estimate the gradient, or 2)~impossible to compute, e.g., black-box attacks and optimization~\citep{chen_py2017zoo,ilyas2018black,chen2020black}. 
Considering these practical limitations, we also want to extend our proposed \textbf{Meta-MFW} into both one-shot and bandit feedback scenarios. 
As for the one-shot setting, we merge the blocking procedures~\citep{zhang2019online} into \textbf{Meta-MFW} to present \textbf{Mono-MFW} algorithm which yields a result with a $1/e$-regret of $O(T^{4/5})$ and reduces the number of per-function stochastic gradient evaluations from $T^{3/2}$ (or $T^{3/4}$) to $1$. 
Finally, in the bandit feedback where the only observable information is the reward we receive, a new algorithm \textbf{Bandit-MFW} is proposed with the exploration-exploitation policy~\citep{zhang2019online} and achieves $1/e$-regret of $O(T^{8/9})$.

To be specific, we make the following contributions:
\begin{enumerate} 
	\item We first develop a new algorithm \textbf{Meta-MFW} for online non-monotone continuous DR-submodular maximization problem over a down-closed convex set, which only relies on the simple online linear oracle without discretization, lifting, or rounding operations. 
	Moreover, in sharp contrast with \textbf{ODC}~\citep{thang2021online}, \textbf{Meta-MFW} achieves a faster $1/e$-regret of $O(T^{1/2})$ at the cost of $T^{3/2}$ stochastic gradient evaluations for each reward function. 
	It is worth mentioning that the $1/e$-regret of $O(T^{1/2})$ result not only achieves the best-known approximation guarantee for the offline problem~\citep{bian2017continuous} but also matches the optimal $O(\sqrt{T})$ regret~\citep{hazan2016introduction}. Meanwhile, like \textbf{ODC} algorithm, our proposed \textbf{Meta-MFW} also can achieve $1/e$-regret of $O(T^{3/4})$ with $T^{3/4}$ per-function stochastic gradient evaluations.
	
	\item Considering the practical restrictions, we then present the one-shot algorithm \textbf{Mono-MFW} equipped with blocking procedures~\citep{zhang2019online}, which achieves a $1/e$-regret of $O(T^{4/5})$ and reduces the stochastic gradient evaluations from $T^{3/2}$ or $T^{3/4}$ to $1$ at each round. Next, in the bandit setting, we propose the \textbf{Bandit-MFW} algorithm achieving a $1/e$-regret of $O(T^{8/9})$ by only inquiring one-point function value for each reward function. To the best of our knowledge,  \textbf{Mono-MFW} and \textbf{Bandit-MFW} are the first sublinear-regret algorithm to explore the one-shot and bandit settings for online non-monotone continuous DR-submodular maximization problem over a down-closed convex set, respectively.
	
	\item Finally, we empirically evaluate our proposed methods on both synthetic and real-world datasets. Numerical experiments demonstrate the superior performance of our proposed algorithms.
\end{enumerate}

\subsection{Related Work}
Continuous DR-submodular maximization problem has been extensively investigated as it admits efficient approximate maximization routines. In this section, we provide a summary about these known results.

\noindent{\textbf{Monotone Setting:}}
In the deterministic setting, \citet{bian2017guaranteed} first proposed a variant of Frank-Wolfe achieving   $(1-1/e)OPT-\epsilon$ after $O(1/\epsilon)$ iterations where $OPT$ is the optimal objective value. When a stochastic gradient oracle is available, \citet{hassani2017gradient} proved that the stochastic gradient ascent guarantees $(1/2)OPT-\epsilon$ after $O(1/\epsilon^{2})$ iterations. Then, \citet{mokhtari2018conditional} proposed the stochastic continuous greedy algorithm, which achieves a $(1-1/e)$-approximation after $O(1/\epsilon^{3})$ iterations.  After that,  an accelerated stochastic continuous greedy algorithm is presented in \citet{hassani2020stochastic}, which guarantees a $(1-1/e)OPT-\epsilon$ after $O(1/\epsilon^{2})$ iterations. As for the online settings, \citet{chen2018online} first investigated the online gradient ascent with a $(1/2)$-regret of $O(\sqrt{T})$. Then, inspired by the meta actions \citep{streeter2008online}, \citet{chen2018online} proposed the Meta-Frank-Wolfe algorithm with a $(1-1/e)$-regret bound of $O(\sqrt{T})$ under the deterministic setting. With an unbiased gradient oracle,
then \citet{chen2018projection} proposed a variant of the Meta-Frank-Wolfe algorithm having a $(1-1/e)$-regret bound of $O(T^{1/2})$ and requiring $T^{3/2}$ stochastic gradient queries for each function. In order to reduce the number of gradient evaluations, \citet{zhang2019online} presented Mono-Frank-Wolfe taking the blocking procedure, which achieves a $(1-1/e)$-regret bound of $O(T^{4/5})$ with only one stochastic gradient evaluation at each round. Leveraging this one-shot algorithm, \citet{zhang2019online} presented a bandit algorithm Bandit-Frank-Wolfe achieving  $(1-1/e)$-regret bound of $O(T^{8/9})$. Recently, based on a novel auxiliary function, \citet{zhang2022boosting} have presented a variant of gradient ascent improving the approximation ratio of the standard gradient ascent \citep{hassani2017gradient,chen2018online} from $1/2$ to $1-1/e$ in both offline and online settings.

\begin{table}[t]
	\renewcommand\arraystretch{1.35}
	\centering
	\caption{Comparison of regrets for online non-monotone continuous DR-submodular function maximization over a down-closed convex set with stochastic gradient oracles. '\textbf{\# Grad}' means the number of stochastic gradient evaluations at each round; '\textbf{Oracle}' indicates which type of online oracle used in algorithms; '\textbf{Ratio}' means the approximation ratio, and '\textbf{Feedback}' indicates full-information or bandit feedback scenario in the online learning.}
	\resizebox{\textwidth}{!}{
    \begin{tabular}{c|c|c|c|c|c}
    \toprule[1.5pt]
        \textbf{Method} &\textbf{Ratio}& \textbf{Regret}& \textbf{\# Grad}&\textbf{Oracle}&\textbf{Feedback} \\
        \hline
        \textbf{ODC}~\citep{thang2021online}&$1/e$&$O(T^{3/4})$&$T^{3/4}$&non-convex&full\\
        \hline 
        \textbf{Meta-MFW}~(This paper)&$1/e$&$O(T^{1/2})$&$T^{3/2}$&linear&full\\
        \hline 
        \textbf{Meta-MFW}~(This paper)&$1/e$&$O(T^{3/4})$&$T^{3/4}$&linear&full\\
        \hline 
        \textbf{Mono-MFW}~(This paper)&$1/e$&$O(T^{4/5})$&$1$&linear&full\\
        \hline
        \textbf{Bandit-MFW}~(This paper)&$1/e$&$O(T^{8/9})$&$0$&linear&bandit\\
        \midrule[1.5pt]
    \end{tabular}
	}
	\label{tab:online_convergence}
\end{table}

\noindent{\textbf{Non-Monotone Setting:}} Without the monotone property, maximizing the continuous DR-submodular function becomes much harder. Under the down-closed convex
constraint, \citet{bian2017continuous} proposed the deterministic Two-Phase Frank-Wolfe and Non-monotone Frank-Wolfe with $1/4$-approximation and $1/e$-approximation guarantee, respectively. When only an unbiased estimate of gradient is available, \citet{hassani2020stochastic} improved the Non-monotone Frank-Wolfe by variance reduction technique, which yields a result with $1/e$-approximation guarantee after $O(1/\epsilon^{3})$ iterations. Moreover, inspired by the Double Greedy \citep{buchbinder2015tight,buchbinder2018deterministic} for discrete unconstrained submodular set maximization, \citet{niazadeh2018optimal} and \citet{bian2019optimal} proposed a similar $1/2$-approximation algorithms for unconstrained continuous DR-submodular maximization. Note that \citet{vondrak2013symmetry} pointed that any algorithm with a constant-factor approximation for maximizing a non-monotone DR-submodular function over a non-down-closed convex set would require exponentially many value queries and the approximation guarantee of $1/2$ is tight for unconstrained DR-submodular maximization. \citet{thang2021online} is the first work to explore the sublinear-regret online algorithm for the non-monotone continuous DR-submodular maximization problems over a down-closed convex set.

We present a comparison between this work and previous studies in Table~\ref{tab:online_convergence}.

\section{Preliminaries}
\noindent{\textbf{Notation:}} In this paper, a lower boldface $\x$ denotes a vector with suitable dimension and an uppercase boldface $\A$ for a matrix. For each vector $\x$, the $i$-th element of $\x$ is denoted as $(\x)_{i}$. Specially, $\mathbf{0}$ and $\one$ represent the vector whose elements are all zero or one, respectively. For any positive integer number $K$, the symbol $[K]$ denotes the set $\{1,\dots, K\}$. Moreover, the symbol $\odot$ and $\oslash$ denote coordinate-wise multiplication and coordinate-wise division, respectively. For instance, given two vector $\x$ and $\y$, if $\y>\mathbf{0}$, the $i$-th element of vector $\x\oslash\y$ is $\frac{(\x)_{i}}{(\y)_{i}}$. The product $\langle\x,\y\rangle=\sum_{i}(\x)_{i}(\y)_{i}$ and the norm $\left\|\x\right\|=\sqrt{\langle\x,\x\rangle}$. We say the domain $\C\subseteq[0,1]^{n}$ is down-closed, if there exist a lower vector $\underline{\ubf}\in\C$ such that 1) $\y\ge\underline{\ubf}$ for any $\y\in\C$; 2) $\x\in\C$ if there exists a vector $\y\in\C$ satisfying $\underline{\ubf}\le\x\le\y$. Additionally, the radius $r(\mathcal{C})=\max_{\x\in\C} \left\|\x\right\|$ and the diameter $\mathrm{diam}(\C)=\max_{\x,\y\in\C} \left\|\x-\y\right\|$.

\noindent{\textbf{DR-Submodularity:}} 
A differentiable function $f:[0,1]^{n}\rightarrow\R_{+}$ is DR-submodular iff $\nabla f(\x)\le\nabla f(\y)$ when $\x\ge\y$~\citep{bian2020continuous}.

\noindent{\textbf{Smoothness:}} A differentiable function $f$ is called $L_{0}$-$smooth$ if for any $\x,\y\in[0,1]^{n}$, $\left\|\nabla f(\x)-\nabla f(\y)\right\|\le L_{0}\left\|\x-\y\right\|$.

\noindent{\textbf{Problem Settings and $\alpha$-regret:}} In this paper, we revisit the online non-monotone continuous non-monotone DR-submodular maximization problem over a down-closed convex set $\C$. For a $T$-$round$ game, after the learner chooses an action $\x_{t}\in\C$ at each round, the adversary reveals a DR-submodular function $f_{t}:[0,1]^{n}\rightarrow\R_{+}$ and feeds back the reward $f_{t}(\x_{t})$ to the learner. The 
goal is to design efficient algorithms such that the gap between the accumulative reward and that of the best fixed policy in hindsight with scale parameter $\alpha$, i.e., $\mathcal{R}_{\alpha}(T)=\alpha\max_{\x\in\C}\sum_{t=1}^{T}f_{t}(\x)-\sum_{t=1}^{T}f_{t}(\x_{t})$, is sublinear in horizon $T$. That is, $\lim_{T\rightarrow\infty}\mathcal{R}_{\alpha}(T)/T=0$. In this paper, we consider $\alpha=1/e$.

\section{Algorithms and Main Results}
\subsection{Online Non-monotone Continuous DR-submodular maximization}\label{sec:Meta}
In this subsection, we present a new online algorithm (Algorithm~\ref{alg:1}) for non-monotone continuous DR-submodular maximization over a down-closed convex set, which is inspired by the measured continuous greedy~\citep{feldman2011unified,mitra2021submodular+} and the meta-action framework~\citep{streeter2008online,chen2018projection} which utilizes the online linear optimization oracles~\citep{hazan2016introduction}. Note that an online linear optimization oracle is an instance of the off-the-shelf online linear maximization algorithm that sequentially maximizes linear objectives.

In sharp contrast with the Meta-Frank-Wolfe~\citep{chen2018projection} for online monotone continuous DR-submodular maximization, in our Algorithm~\ref{alg:1} we adopt a different update rule (line 6) and a novel feedback (line 11).
Given a series of update directions $\vbf_{t}^{(k)}\in\C,\forall k\in[K]$ and initial point $\x_{t}^{(0)}=\mathbf{0}$, we consider 
\begin{equation}\label{equ:update}
	\x_{t}^{(k)}=\x_{t}^{(k-1)}+\frac{1}{K}\vbf_{t}^{(k)}\odot(\one-\x_{t}^{(k-1)}),
\end{equation} 
where we re-weight the $i$-th element of $\vbf_{t}^{(k)}$ by $(\one-\x_{t}^{(k-1)})_{i}$ at each round and push the iteration point $\x_{t}^{(k-1)}$ along the weighted update direction $\vbf_{t}^{(k)}\odot(\one-\x_{t}^{(k-1)})$ with step size $\frac{1}{K}$. 
Due to the update rule of Equation~\eqref{equ:update}, then Algorithm~\ref{alg:1} feeds back the weighted gradient estimate $\g_{t}^{(k)}\odot(\one-\x_{t}^{(k)})$ for the linear oracle $\mathcal{E}^{(k)}$, where we view the vector $\g_{t}^{(k)}$ as an estimate for $\nabla f_t(\x_{t}^{(k)})$. Our update rule guarantees that $\x_{t}^{(k)} \in \C$ (proof in Appendix B).

Next, we demonstrate how the $K$ different linear oracles work. Each linear oracle $\mathcal{E}^{(k)}$ in Algorithm~\ref{alg:1} tries to online maximize the cumulative linear reward function $\sum_{t=1}^{T}\langle\g_{t}^{(k)}\odot(\one-\x_{t}^{(k)}),\cdot\rangle$. Precisely, after $\mathcal{E}^{(k)}$ commits to the action $\vbf_{t}^{(k)}\in\C$ at $t$-th round, Algorithm~\ref{alg:1} feeds back the vector $\g_{t}^{(k)}\odot(\one-\x_{t}^{(k)})$ and the reward $\langle\g_{t}^{(k)}\odot(\one-\x_{t}^{(k)}),\vbf_{t}^{(k)}\rangle$ to the oracle $\mathcal{E}^{(k)}$; then the oracle $\mathcal{E}^{(k)}$ updates the action via some well-known strategies such as the gradient descent or regularized-follow-the-leader~\citep{hazan2016introduction}. Taking the gradient descent as an example, the oracle $\mathcal{E}^{(k)}$ will choose the next action $\vbf_{t+1}^{(k)}=\arg\min_{\vbf\in\C}\|\vbf-(\vbf_{t}^{(k)}+\frac{1}{\sqrt{T}}\g_{t}^{(k)}\odot(\one-\x_{t}^{(k)}))\|$. Predictably, compared with the complicated online non-convex oracle of \textbf{ODC} (See online vee learning algorithm in \citep{thang2021online}), the online linear oracle in the \textbf{Meta-MFW}, without discretization, lifting, or rounding operations, is simpler and more efficient.
\begin{algorithm}[t]
	\caption{Meta-Measured Frank-Wolfe~(\textbf{Meta-MFW})}
	\begin{algorithmic}[1]\label{alg:1}
		\STATE {\bf Input:} $K$ online linear maximization oracles over $\C$, i.e, $\mathcal{E}^{(1)},\dots,\mathcal{E}^{(K)}$, $\eta_{k}$, $\g_{t}^{(0)}=\x_{t}^{(0)}=\mathbf{0}$.
		\STATE {\bf Output:} $\y_1,\dots,\y_T$.
		\FOR{$t=1,\ldots, T$}
		\FOR{$k=1,\dots,K$}
		\STATE Receive $\vbf_{t}^{(k)}$ which is the output of oracle $\mathcal{E}^{(k)}$.
		\STATE $\x_{t}^{(k)}=\x_{t}^{(k-1)}+\frac{1}{K}\vbf_{t}^{(k)}\odot(\one-\x_{t}^{(k-1)})$.
		\ENDFOR
		\STATE Play $\y_{t} = \x_{t}^{(K)}$ for $f_{t}$ to get reward $f_{t}(\y_{t})$ and observe the stochastic gradient information of $f_{t}$.
		\FOR{$k=1,\dots,K$}
		\STATE $\g_{t}^{(k)}=(1-\eta_{k})\g_{t}^{(k-1)}+\eta_{k}\widetilde{\nabla}f_{t}(\x_{t}^{(k)})$ where $\E(\widetilde{\nabla}f_{t}(\x_{t}^{(k)})|\x_{t}^{(k)})=\nabla f_{t}(\x_{t}^{(k)})$.
		\STATE Feed back $\langle\g_{t}^{(k)}\odot(\one-\x_{t}^{(k)}),\vbf_{t}^{(k)}\rangle$ as the payoff to be received by oracle $\mathcal{E}^{(k)}$.
		\ENDFOR
		\ENDFOR
	\end{algorithmic}
\end{algorithm}

We then make some assumptions for the regret analysis of Algorithm~\ref{alg:1}.
\begin{assumption}\label{assumption1} \
	\begin{enumerate}
		\item[(i)]The domain $\C\subseteq[0,1]^{n}$ is a down-closed convex set including the original point $\mathbf{0}$, where $n$ is the dimensional parameter.
		\item[(ii)] Each $f_{t}:[0,1]^{n}\rightarrow \mathbb{R}_{+}$ is a differentiable, DR-submodular function with smoothness parameter $L_{0}$.
		\item[(iii)] For any linear maximization oracle $\mathcal{E}^{(k)}$, the regret at horizon $t$ is at most $M_{0}\sqrt{t}$, where $M_{0}$ is a parameter.
	\end{enumerate}
\end{assumption} 
\begin{assumption}\label{add_assumption}
	For any $t\in[T]$ and $\x\in[0,1]^{n}$, there exists a stochastic gradient oracle $\widetilde{\nabla}f_{t}(\x)$ with  $\E(\widetilde{\nabla}f_{t}(\x)|\x)=\nabla f_{t}(\x)$ and $\E(\|\nabla f_{t}(\x)-\widetilde{\nabla}f_{t}(\x)\|^{2})\le\sigma^{2}$.
\end{assumption}
 
\begin{theorem}[Proof in \cref{appendix:Meta}]\label{thm:1}
Under Assumption~\ref{assumption1} and ~\ref{add_assumption}, if we set $\eta_{k}=\frac{2}{(k+3)^{2/3}}$, we could verify that Algorithm~\ref{alg:1} achieves:
	\begin{equation*}
		\begin{split}
		\frac{1}{e}\sum_{t=1}^{T}f_{t}(\x^{*})-\sum_{t=1}^{T}\E(f_{t}(\y_{t}))
		\le M_{0}\sqrt{T}+L_{0}r^{2}(\C)\frac{T}{2K}+\frac{r(\C)}{2}(3N_{0}+1)\frac{T}{K^{1/3}},
		\end{split}
	\end{equation*}
	where $N_{0}=\max\{4^{2/3}\max_{t\in[T]}\|\nabla f_{t}(\x_{t}^{(1)})\|^{2},4\sigma^{2}+6(L_{0}r(\C))^{2}\}$ and $\x^{*}=\arg\max_{\x\in\C}\sum_{t=1}^{T}f_{t}(\x)$.
\end{theorem}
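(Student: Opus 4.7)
The plan is to weave together three ingredients that must be coordinated carefully: (i) a variance bound on the momentum estimator $\g_{t}^{(k)}$, (ii) a per-step smoothness ascent along the measured-greedy update, and (iii) the meta-oracle online linear regret combined with a continuous cover lemma for non-monotone DR-submodular functions, which is what injects the $1/e$ factor.

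First I would control $A_{k}:=\E\|\g_{t}^{(k)}-\nabla f_{t}(\x_{t}^{(k)})\|^{2}$. Decomposing the recursion $\g_{t}^{(k)}=(1-\eta_{k})\g_{t}^{(k-1)}+\eta_{k}\widetilde{\nabla}f_{t}(\x_{t}^{(k)})$ into a martingale increment $\eta_{k}(\widetilde{\nabla}f_{t}(\x_{t}^{(k)})-\nabla f_{t}(\x_{t}^{(k)}))$ (variance at most $\eta_{k}^{2}\sigma^{2}$), a tracking term $(1-\eta_{k})(\g_{t}^{(k-1)}-\nabla f_{t}(\x_{t}^{(k-1)}))$, and a drift term $(1-\eta_{k})(\nabla f_{t}(\x_{t}^{(k-1)})-\nabla f_{t}(\x_{t}^{(k)}))$ whose norm is at most $L_{0}r(\C)/K$ by smoothness and the $1/K$ step, Young's inequality with ratio $\eta_{k}$ gives
\begin{equation*}
A_{k}\leq(1-\eta_{k})A_{k-1}+\eta_{k}^{2}\sigma^{2}+O\!\left(\tfrac{L_{0}^{2}r(\C)^{2}}{\eta_{k}K^{2}}\right).
\end{equation*}
The choice $\eta_{k}=2/(k+3)^{2/3}$ together with the definition of $N_{0}$ in the theorem then yields $A_{k}\leq N_{0}/(k+4)^{2/3}$ by induction, so $\E\|\g_{t}^{(k)}-\nabla f_{t}(\x_{t}^{(k)})\|\leq\sqrt{N_{0}}/(k+4)^{1/3}$ by Jensen. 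Summing $\sum_{k=1}^{K}(k+4)^{-1/3}\lesssim\tfrac{3}{2}K^{2/3}$ and dividing by $K$ is precisely what produces the $T/K^{1/3}$ term in the final bound.

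Second, fix $t$ and unroll. By $L_{0}$-smoothness and the update $\x_{t}^{(k)}-\x_{t}^{(k-1)}=\tfrac{1}{K}\vbf_{t}^{(k)}\odot(\one-\x_{t}^{(k-1)})$,
\begin{equation*}
f_{t}(\x_{t}^{(k)})\geq f_{t}(\x_{t}^{(k-1)})+\tfrac{1}{K}\langle\nabla f_{t}(\x_{t}^{(k-1)}),\vbf_{t}^{(k)}\odot(\one-\x_{t}^{(k-1)})\rangle-\tfrac{L_{0}r(\C)^{2}}{2K^{2}}.
\end{equation*}
I would then rewrite the inner product in terms of the feedback $\g_{t}^{(k)}\odot(\one-\x_{t}^{(k)})$ seen by oracle $\mathcal{E}^{(k)}$, splitting the Cauchy--Schwarz slack into the variance piece $\|\g_{t}^{(k)}-\nabla f_{t}(\x_{t}^{(k)})\|$ controlled above, an $O(L_{0}/K)$ smoothness slack from shifting $\x_{t}^{(k-1)}\to\x_{t}^{(k)}$, and an $O(1/K)$ slack from $(\one-\x_{t}^{(k-1)})\to(\one-\x_{t}^{(k)})$. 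Telescoping over $k$ with $f_{t}(\mathbf{0})\geq 0$ lower bounds $f_{t}(\y_{t})$ by $\tfrac{1}{K}\sum_{k}\langle\g_{t}^{(k)}\odot(\one-\x_{t}^{(k)}),\vbf_{t}^{(k)}\rangle$ minus the accumulated slack; summing over $t$ and invoking Assumption~\ref{assumption1}(iii) on each $\mathcal{E}^{(k)}$ with comparator $\x^{*}$ replaces the $K$ oracle rewards by $\sum_{t}\langle\g_{t}^{(k)}\odot(\one-\x_{t}^{(k)}),\x^{*}\rangle$ at the cost of a single $M_{0}\sqrt{T}$ after averaging.

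Third, to bring in the $1/e$ factor I apply the DR-submodular mean-value identity obtained from concavity of $\lambda\mapsto f_{t}(\x_{t}^{(k)}+\lambda\x^{*}\odot(\one-\x_{t}^{(k)}))$, which yields $\langle\nabla f_{t}(\x_{t}^{(k)}),\x^{*}\odot(\one-\x_{t}^{(k)})\rangle\geq f_{t}(\x_{t}^{(k)}\vee\x^{*})-f_{t}(\x_{t}^{(k)})$, together with the continuous cover inequality $f_{t}(\x_{t}^{(k)}\vee\x^{*})\geq(1-\|\x_{t}^{(k)}\|_{\infty})f_{t}(\x^{*})$ and the measured-greedy coordinate bound $\|\x_{t}^{(k)}\|_{\infty}\leq 1-(1-1/K)^{k}$ enforced by line 6 of Algorithm~\ref{alg:1}. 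Unrolling the resulting recursion in the style of the offline measured continuous greedy extracts the geometric factor $(1-1/K)^{K-1}\to 1/e$. The main obstacle is precisely this last step: for non-monotone $f_{t}$, the natural mean-value identity produces $f_{t}(\x_{t}^{(k)}\oplus\x^{*})$, where $\x\oplus\y:=\x+\y-\x\odot\y$ is coordinate-wise larger than $\x\vee\y$, and bridging this gap requires the $\oplus$-version of the continuous Feldman--Naor--Schwartz cover lemma. Carefully coordinating the error streams from the variance bound, the smoothness slack, and the $K$ per-oracle regrets so that their sum collapses to exactly $M_{0}\sqrt{T}+L_{0}r(\C)^{2}T/(2K)+r(\C)(3N_{0}+1)T/(2K^{1/3})$ is the bulk of the remaining bookkeeping.
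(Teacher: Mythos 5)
Your proposal is correct and follows essentially the same route as the paper's proof: a variance-reduction bound giving $\E\|\g_{t}^{(k)}-\nabla f_{t}(\x_{t}^{(k)})\|^{2}\le N_{0}/(k+4)^{2/3}$, a per-step smoothness/linear-oracle inequality unrolled over $k$, and the $1/e$ factor extracted from the coordinate bound $\|\x_{t}^{(k)}\|_{\infty}\le 1-(1-1/K)^{k}$ together with the cover inequality. The ``main obstacle'' you flag at the end is not actually one: the $\oplus$-version of the cover lemma, $f(\x\oplus\y)\ge(1-\|\x\|_{\infty})f(\y)$, is precisely the paper's Lemma~\ref{lemma:4} and follows in two lines from concavity of $z\mapsto f(\y+z(\one-\y)\odot\x)$ on $[0,1/\|\x\|_{\infty}]$ plus nonnegativity of $f$; the only other (cosmetic) divergence is that the paper controls the error terms by Young's inequality on the squared deviation with $b=1/(\mathrm{diam}(\C)K^{1/3})$, which is what produces the stated constant $\frac{r(\C)}{2}(3N_{0}+1)$, whereas your Jensen-on-the-first-moment route would yield a $\sqrt{N_{0}}$-type constant of the same order.
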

\begin{remark}
	According to Theorem~\ref{thm:1}, if we set $K=T^{3/2}$, \textbf{Meta-MFW} yields the first result to achieve a $1/e$-regret of $O(\sqrt{T})$, which is faster than the previous outcome of \textbf{ODC}~\citep{thang2021online}. It is worth mentioning that the $1/e$-regret of $O(\sqrt{T})$ not only achieves the best-known guarantee for the offline problem, but also matches the optimal $O(\sqrt{T})$ regret of online convex optimization~\citep{hazan2016introduction}. 
\end{remark}
\begin{remark}
	Meanwhile, when $K=T^{3/4}$, \textbf{Meta-MFW} achieves a $1/e$-regret of $O(T^{3/4})$, which has the same approximation ratio and regret as \textbf{ODC}~\citep{thang2021online}. Although the oracle number $K=T^{3/4}$ of \textbf{Meta-MFW} is the same as \textbf{ODC}, \textbf{Meta-MFW} is more time-efficient than \textbf{ODC} since we adopt the simple online linear oracles while \textbf{ODC} utilizes complicated online non-convex oracles with discretization, lifting, and rounding operations.
\end{remark}

\subsection{One-shot Online Non-monotone Continuous DR-submodular Maximization}\label{sec:Mono}
In many real-world scenarios, it could be time-consuming or even impossible to compute the stochastic gradient, e.g., influence maximization \citep{yang2016continuous} as well as black-box attacks \citep{ito2016large}.
Thus, our Algorithm~\ref{alg:1}, which needs to inquire $K$ gradient estimates for each reward function $f_{t}$ (line 10 in \textbf{Meta-MFW}), seems to be restrictive for many applications. 
To tackle the practical challenges, we hope to extend our proposed \textbf{Meta-MFW} into one-shot or bandit settings, where we only are permitted to inquire an unbiased gradient or one-point function value for each $f_{t}$, respectively. At first, we investigate the one-shot non-monotone DR-submodular maximization in this subsection. 

We begin by reviewing the fairly known blocking technique in online learning~\citep{hazan2016introduction,zhang2019online}. 
Specifically, we divide the $T$ reward functions $f_{1},\dots,f_{T}$ into $Q$ blocks of the same size $K$, where $T=QK$, i.e., the $q$-th block includes the $K$ different functions $f_{(q-1)K+1},\dots,f_{qK}$. 
We also define the average function in the $q$-th block as $\bar{f}_{q}=\sum_{t=(q-1)K+1}^{qK}f_{t}/K$. 
To reduce the number of per-function stochastic gradient evaluations, the key idea is to view each $\bar{f}_{q}$ as a virtual reward function, such that the original $T$-round online optimization can be transferred into a new $Q$-round game. In this new $Q$-round game, at the $q$-th step, the algorithm first chooses an action $\x_{q}\in\C$, then the adversary reveals the reward $\bar{f}_{q}(\x_{q})$ for the algorithm. 

Since each $\bar{f}_{q}$ is also continuous DR-submodular, we could directly adopt Algorithm~\ref{alg:1} to tackle the new $Q$-round game, which also requires inquiring $K$ unbiased gradient estimates for each $\bar{f}_{q}$. Note that, in $q$-th block, there exist $K$ different stochastic gradient oracles $\{\widetilde{\nabla}f_{(q-1)K+1},\dots,\widetilde{\nabla}f_{qK}\}$. Moreover, for each random permutation $\{t_{q}^{(1)},\dots,t_{q}^{(K)}\}$ of the indices $\{(q-1)K+1,\dots,qK\}$, it could be verified that the $\E(f_{t_{q}^{(k)}}(\x)|\x)=\bar{f}_{q}(\x)$ and $\E(\widetilde{\nabla}f_{t_{q}^{(k)}}(\x)|\x)=\nabla\bar{f}_{q}(\x)$.
As a result, we can construct unbiased gradient estimates of $\bar{f}_{q}$ at $K$ different points via the $K$ existing oracles $\{\widetilde{\nabla}f_{(q-1)K+1},\dots,\widetilde{\nabla}f_{qK}\}$, and each oracle inquires only one gradient evaluation.
In this manner, we successfully reduce the number of per-function gradient evaluations from $K$ to $1$.
Motivated via this high-level idea, we present a one-shot variant in Algorithm~\ref{alg:2}~(\textbf{Mono-MFW}). Note that in the $q$-th block, we play the same point $\y_{t}=\x_{q}^{(K)}$ for each objective function in $\{f_{(q-1)K+1},\dots,f_{qK}\}$.
We provide the regret analysis of Algorithm~\ref{alg:2} in Theorem~\ref{thm:2}.
\begin{algorithm}[t]
	\caption{Mono-Measured Frank-Wolfe~(\textbf{Mono-MFW})}
	\begin{algorithmic}[1]\label{alg:2}
		\STATE{\bf Input:} $K$ online linear maximization oracles over $\C$, i.e., $\mathcal{E}^{(1)},\dots,\mathcal{E}^{(K)}$, $Q=\frac{T}{K}$, $\eta_{k}$, $\g_{q}^{(0)}=\x_{q}^{(0)}=\mathbf{0}$.
		\STATE {\bf Output:} $\y_1,\dots,\y_T$.
		\FOR{$q=1,\ldots, Q$}
		\FOR{$k=1,\dots,K$}
		\STATE Receive the update direction $\vbf_{q}^{(k)}$ which is the output of oracle $\mathcal{E}^{(k)}$.
		\STATE $\x_{q}^{(k)}=\x_{q}^{(k-1)}+\frac{1}{K}\vbf_{q}^{(k)}\odot(\one-\x_{q}^{(k-1)})$.
		\ENDFOR
		\STATE Generate a random permutation $\{t_{q}^{(1)},\dots,t_{q}^{(K)}\}$ for $\{(q-1)K+1,\dots,qK\}$.
		\FOR{$t=(q-1)K+1,\dots,qK$} 
		\STATE Play $\y_{t}=\x_{q}^{(K)}$ to get reward $f_{t}(\y_{t})$ and observe the stochastic gradient information of $f_{t}$.
		\ENDFOR
		\FOR{$k=1,\dots,K$}
		\STATE $\g_{q}^{(k)}=(1-\eta_{k})\g_{q}^{(k-1)}+\eta_{k}\widetilde{\nabla}f_{t_{q}^{(k)}}(\x_{q}^{(k)})$.
		\STATE Feed back $\langle(\one-\x_{q}^{(k)})\odot \g_{q}^{(k)},\vbf_{q}^{(k)}\rangle$ as the payoff to be received by oracle $\mathcal{E}^{(k)}$.
		\ENDFOR
		\ENDFOR
	\end{algorithmic}
\end{algorithm}
\begin{theorem}[Proof in \cref{appendix:Mono}]\label{thm:2}
    Under Assumption~\ref{assumption1}-\ref{add_assumption} and $\max_{\x\in\mathcal{C}}\|\nabla f_{t}(\x)\|\le G$ for any $t\in[T]$, if we set $\eta_{k}=\frac{2}{(k+3)^{2/3}}$, when $1\le k\le\frac{K}{2}+1$, and $\eta_{k}=\frac{1.5}{(K-k+2)^{2/3}}$, when $\frac{K}{2}+2\le k\le K$, then Algorithm~\ref{alg:2} achieves:
	\begin{equation*}
	\begin{split}
		\frac{1}{e}\sum_{t=1}^{T}f_{t}(\x^{*})-\sum_{t=1}^{T}\E(f_{t}(\y_{t}))
		\le 2\mathrm{diam}(\C)(N_{1}+1)QK^{2/3}+\frac{L_{0}r^{2}(\C)}{2}Q+M_{0}\sqrt{Q}K,
	\end{split}
	\end{equation*}
	where $\x^{*}=\arg\max_{\x\in\C}\sum_{t=1}^{T}f_{t}(\x)$ and $N_{1}=\max\{5^{2/3}G^{2},8(\sigma^{2}+G^{2})+32(2G+L_{0}r(\C))^{2},4.5(\sigma^{2}+G^{2})+7(2G+L_{0}r(\C))^{2}/3\}$.
\end{theorem}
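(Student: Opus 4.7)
My plan is to reduce the $T$-round one-shot problem to a $Q$-round full-information problem over the block-average functions $\bar{f}_q=\frac{1}{K}\sum_{t=(q-1)K+1}^{qK} f_t$, and then adapt the analysis of Theorem~\ref{thm:1} to this new game, with a careful variance bound for the recursive gradient estimator $\g_q^{(k)}$ that accounts for the permutation-based sampling.

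First I would rewrite the objective. Since Algorithm~\ref{alg:2} plays the same $\y_t=\x_q^{(K)}$ for every $t$ in block $q$, we have $\sum_{t=(q-1)K+1}^{qK} f_t(\y_t)=K\bar{f}_q(\x_q^{(K)})$, and similarly $\sum_{t=1}^T f_t(\x^\star)=K\sum_{q=1}^Q \bar{f}_q(\x^\star)$. Hence it suffices to show
\[
\frac{1}{e}\sum_{q=1}^{Q}\bar{f}_q(\x^\star)-\sum_{q=1}^{Q}\E[\bar f_q(\x_q^{(K)})]
\le 2\,\mathrm{diam}(\C)(N_1+1)QK^{-1/3}+\frac{L_0 r^2(\C)}{2}\frac{Q}{K}+M_0\sqrt{Q}.
\]
Each $\bar{f}_q$ is DR-submodular and $L_0$-smooth, so the geometric/smoothness part of the Meta-MFW analysis carries over verbatim: combining the measured-continuous-greedy recursion (Equation~\eqref{equ:update}) with smoothness gives a one-block telescoping bound whose dominant error term is $\frac{1}{K}\sum_{k=1}^{K}\langle\nabla\bar{f}_q(\x_q^{(k)})-\g_q^{(k)},\,(\one-\x_q^{(k)})\odot(\vbf_q^{(k)}-\ubf_q^{(k)})\rangle$ for any reference $\ubf_q^{(k)}\in\C$, plus an $M_0\sqrt{Q}$ term coming from the $K$ online linear oracles applied over the $Q$ rounds.

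The heart of the proof, and the main obstacle, is the gradient variance lemma: bounding $\E\|\g_q^{(k)}-\nabla\bar{f}_q(\x_q^{(k)})\|^{2}$. Here two subtleties appear. First, although the random permutation $\{t_q^{(1)},\dots,t_q^{(K)}\}$ makes each single sample unbiased in the sense that $\E[\widetilde\nabla f_{t_q^{(k)}}(\x_q^{(k)})\mid\x_q^{(k)}]=\nabla\bar{f}_q(\x_q^{(k)})$, the samples across different $k$ in the same block are \emph{without replacement} and thus correlated; I would handle this by conditioning first on all iterates $\x_q^{(1)},\dots,\x_q^{(K)}$ (which are determined before the permutation is drawn) and then reusing the classical argument that sampling without replacement has variance no larger than with replacement, so each $\widetilde\nabla f_{t_q^{(k)}}(\x_q^{(k)})-\nabla\bar f_q(\x_q^{(k)})$ has second moment at most $\sigma^2+G^2$ (the extra $G^2$ absorbing $\max_t\|\nabla f_t-\nabla\bar f_q\|^2$). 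Second, because a momentum estimator of the form $\g_q^{(k)}=(1-\eta_k)\g_q^{(k-1)}+\eta_k\widetilde\nabla f_{t_q^{(k)}}(\x_q^{(k)})$ picks up an $L_0\|\x_q^{(k)}-\x_q^{(k-1)}\|\le L_0 r(\C)/K$ "drift" at every step, the standard contraction lemma yields $\E\|\g_q^{(k)}-\nabla\bar f_q(\x_q^{(k)})\|^2\le N_1/(k+3)^{2/3}$ on the increasing phase $1\le k\le K/2+1$ (exactly as in Theorem~\ref{thm:1}); but on the decreasing phase $K/2+2\le k\le K$ I need the \emph{symmetric} estimate $N_1/(K-k+2)^{2/3}$, which is precisely why the step size is flipped to $\eta_k=1.5/(K-k+2)^{2/3}$ there. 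A direct induction, verified separately on each phase and glued at $k=K/2+1$, gives both bounds and in particular $\E\|\g_q^{(K)}-\nabla\bar f_q(\x_q^{(K)})\|^2\le N_1$ at the endpoint.

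Finally I would plug these variance estimates into the inner-product error above: by Cauchy--Schwarz, the $k$-th term is bounded by $\mathrm{diam}(\C)\sqrt{\E\|\g_q^{(k)}-\nabla\bar f_q(\x_q^{(k)})\|^2}$, and summing $\sum_{k=1}^{K}\min\{(k+3)^{-1/3},(K-k+2)^{-1/3}\}=O(K^{2/3})$ gives the $\mathrm{diam}(\C)(N_1+1)K^{2/3}$ per-block term. Adding the $L_0r^2(\C)/(2K)$ smoothness term per block, summing over the $Q$ blocks, and adding the $M_0\sqrt{Q}$ regret from the $K$ linear oracles yields the claimed bound. The first two steps are routine once the analysis of Theorem~\ref{thm:1} is in hand; it is the two-phase variance lemma that requires genuine care.
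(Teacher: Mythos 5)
Your overall architecture matches the paper's: reduce to a $Q$-round game over the block averages $\bar f_q$, rerun the Meta-MFW recursion per block, prove a two-phase variance lemma for $\g_q^{(k)}$, and combine via a Cauchy--Schwarz/Young step plus the oracle regret $M_0\sqrt{Q}$ per oracle. The gap is in the central variance lemma. You treat $\widetilde\nabla f_{t_q^{(k)}}(\x_q^{(k)})$ as (conditionally) unbiased for $\nabla\bar f_q(\x_q^{(k)})$ and propose to absorb the without-replacement correlations by the classical ``without replacement has smaller variance'' argument. That argument controls the variance of a sample \emph{mean}, but it does not restore what the recursive estimator actually needs, namely unbiasedness conditional on the filtration $\F_{q,k-1}$ generated by the already-drawn indices $t_q^{(1)},\dots,t_q^{(k-1)}$. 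Conditioned on $\F_{q,k-1}$, the index $t_q^{(k)}$ is uniform over the \emph{remaining} indices, so $\E[\widetilde\nabla f_{t_q^{(k)}}(\x_q^{(k)})\mid\F_{q,k-1}]=\nabla\bar f_{q,k-1}(\x_q^{(k)})$ where $\bar f_{q,k-1}=\frac{1}{K-k+1}\sum_{m=k}^{K}f_{t_q^{(m)}}$, not $\nabla\bar f_q(\x_q^{(k)})$. The conditional bias relative to $\nabla\bar f_q$ is of order $kG/K$, i.e.\ $\Theta(G)$ for $k$ near $K$, so it cannot be swept into a variance term and the standard contraction lemma does not yield $\E\|\g_q^{(k)}-\nabla\bar f_q(\x_q^{(k)})\|^2\le N_1/(k+3)^{2/3}$.

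The paper's fix is to re-target the estimator at the remaining-average functions: it bounds $\E\|\g_q^{(k)}-\nabla\bar f_{q,k-1}(\x_q^{(k)})\|^2$, and the relevant inter-step drift is $\|\nabla\bar f_{q,k-1}(\x_q^{(k)})-\nabla\bar f_{q,k-2}(\x_q^{(k-1)})\|\le\frac{L_0 r(\C)+2G}{K-k+2}$, which \emph{grows} as $k\to K$. This, not a desire for a ``symmetric estimate,'' is why the two-phase step size $\eta_k=1.5/(K-k+2)^{2/3}$ is needed in the second half (your stated drift $L_0 r(\C)/K$ would be a constant $O(1/K)$ and would not require it), and it is why the resulting bound degrades to $N_1/(K-k+1)^{2/3}$, i.e.\ $O(N_1)$, at $k=K$. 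The telescoping must then also be done with the time-varying targets $\bar f_{q,k-1}$, using the permutation-symmetry identities $\E[\bar f_{q,j}(\x)]=\bar f_q(\x)$ for iterates $\x$ determined before the permutation is drawn, to recover $\E[\bar f_q(\x_q^{(K)})]$ at the end. Without re-targeting to $\bar f_{q,k-1}$, both your variance lemma and your telescoping step have uncontrolled bias terms, so the proof as proposed would not go through.
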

\begin{remark}
	According to Theorem~\ref{thm:2}, if we set $K=T^{3/5}$ and $Q=T^{2/5}$, the \textbf{Mono-MFW} achieves a $1/e$-regret of $O(T^{4/5})$. To the best of our knowledge, this is the first result with sublinear regret for one-shot online non-monotone DR-submodular maximization over a down-closed convex set.
\end{remark}

\subsection{Bandit Online Non-monotone Continuous DR-Submodular Maximization}\label{sec:Bandit}
\noindent In this subsection, we turn to the bandit setting for online non-monotone continuous DR-submodular maximization. To begin, we review the one-point estimator~\citep{flaxman2005online}, which is of great importance to our proposed bandit algorithm.

\subsubsection{One-point Estimator}\label{sec:one_point}
For any function $f:[0,1]^{n}\rightarrow\R_{+}$, define the $\delta$-smooth version of $f$ as $\hat{f}_{\delta}(\x)=\E_{\vbf\sim B^{n}}(f(\x+\delta\vbf))$ where $\vbf\sim B^{n}$ represents that the vector $\vbf$ is uniformly sampled from the $n$-dimensional unit ball $B^{n}$. If $\|\nabla f(\x)\|\le G$, we have $|f(\x)-\hat{f}_{\delta}(\x)|\le G\delta$. Thus, $\hat{f}_{\delta}$ can be viewed as an approximation of $f$, when $\delta$ is small. Roughly speaking, we can approximately maximize $f$ via the maximizer of $\hat{f}_{\delta}$. Note that if $f$ is continuous DR-submodular and $L_{0}$-smooth, so is $\hat{f}_{\delta}$. Moreover, according to \cite{flaxman2005online}, $\nabla\hat{f}_{\delta}(\x)=\frac{n}{\delta}\E_{\vbf\sim S^{n-1}}(f(\x+\delta\vbf)\vbf)$ where $\vbf\sim S^{n-1}$ implies that the vector $\vbf$ is uniformly sampled from the unit sphere $S^{n-1}$, which sheds light on the possibility of estimating the gradient of $\hat{f}_{\delta}(\x)$ via the function value at a random point $\x+\delta\vbf$.

However, we cannot use this estimate method directly. The point $\x+\delta\vbf$ may fall outside of the constraint set $\C$, when $\x$ is close to the boundary of $\C$. To tackle this challenge, we introduce the concept of $\delta$-interior. We say that a subset $\C^{'}$ is a $\delta$-interior of $\C$, if the ball $B(\x,\delta)$ centered at $\x$ with radius $\delta$, is included in $\C$ for any $\x\in\C^{'}$. As a result, for every point $\x\in\C^{'}$, $\x+\delta\vbf$ is included in $\C$, which enables us to use the one-point estimator. Recently, for a down-closed convex set $\C$, \citet{zhang2019online} provided a method to construct a $\delta$-interior down-closed convex set $\C^{'}$. Next, we show this outcome in Lemma~\ref{lemma:construct_set}.
\begin{lemma}[\citet{zhang2019online}]\label{lemma:construct_set}
Under Assumption~\ref{assumption1}, if there exists a positive number $r$ such that $rB^{n}_{\ge0}\subseteq\C$ where $B^{n}_{\ge0}=B^{n}\cap\R^{n}_{+}$, and $\delta<\frac{r}{\sqrt{n}+1}$, the set $\C^{'}=(1-\alpha)\C+\delta\one$ is a down-closed convex $\delta$-interior of $\C$ with $\sup_{\x\in\C,\y\in\C^{'}}\|\x-\y\|\le((\sqrt{n}+1)\frac{r(\C)}{r}+\sqrt{n})\delta$, where $\alpha=\frac{(\sqrt{n}+1)\delta}{r}$.
\end{lemma}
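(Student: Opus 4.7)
The plan is to verify three things for $\C' = (1-\alpha)\C + \delta\one$ with $\alpha = (\sqrt{n}+1)\delta/r$: (i) $\C'$ is convex and down-closed, (ii) $B(\x,\delta) \subseteq \C$ for every $\x \in \C'$ (the $\delta$-interior property), and (iii) the distance bound between points of $\C$ and $\C'$. Property (i) is routine: convexity is inherited from $\C$ under the affine map $\z \mapsto (1-\alpha)\z + \delta\one$. For down-closedness of $\C'$, I would show that $\delta\one$ serves as the new lower vector (every element of $\C'$ dominates it since $\C\subseteq[0,1]^n$ and $1-\alpha>0$) and, given $\y' = (1-\alpha)\y + \delta\one \in \C'$ with $\y\in\C$ and any $\x'$ with $\delta\one \le \x' \le \y'$, the preimage $\x = (\x'-\delta\one)/(1-\alpha)$ satisfies $\mathbf{0}\le\x\le\y$; down-closedness of $\C$ (with lower vector $\mathbf{0}$) then yields $\x\in\C$, hence $\x'\in\C'$.

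The heart of the lemma is (ii). Given $\x = (1-\alpha)\z + \delta\one \in \C'$ with $\z\in\C$ and any $\vbf$ with $\|\vbf\| \le 1$, my plan is to exhibit the decomposition
\[
\x + \delta\vbf \;=\; (1-\alpha)\z \;+\; \alpha\cdot\frac{\delta(\one+\vbf)}{\alpha},
\]
and argue that the second point lies in $rB^{n}_{\ge 0}\subseteq\C$. Two observations do the work. First, $\|\vbf\|\le 1$ forces each coordinate of $\vbf$ into $[-1,1]$, so $\one+\vbf\ge\mathbf{0}$. Second, $\|\one+\vbf\| \le \sqrt{n}+1$. The choice $\alpha=(\sqrt{n}+1)\delta/r$ is precisely calibrated so that $\delta(\one+\vbf)/\alpha$ has nonnegative entries and norm at most $r$, placing it in $rB^{n}_{\ge 0}$. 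Since $\x+\delta\vbf$ is then a convex combination of two points of $\C$, convexity closes the argument. For (iii), the natural companion to an arbitrary $\x\in\C$ is $\y=(1-\alpha)\x+\delta\one\in\C'$; then $\x-\y = \alpha\x - \delta\one$ gives $\|\x-\y\|\le \alpha\,r(\C)+\sqrt{n}\,\delta = \bigl((\sqrt{n}+1)\tfrac{r(\C)}{r}+\sqrt{n}\bigr)\delta$ by the triangle inequality and the definition of $\alpha$.

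The main obstacle is the geometric reasoning in (ii): one cannot directly assert that $\delta\vbf$ stays in $\C$ because $\C$ only contains the positive quadrant $rB^{n}_{\ge 0}$ of a small ball, not a full ball about the origin. The translation by $\delta\one$ built into $\C'$ is exactly what lifts the perturbation $\delta\vbf$ into the positive cone $\R^{n}_{+}$ once it is re-routed through the above affine combination, while the tightness of $\alpha$ is what keeps the resulting vector within norm $r$. Once this geometric picture is in place, the remaining assertions reduce to elementary norm estimates and the use of convexity and down-closedness of $\C$.
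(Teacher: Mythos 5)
Your proof is correct. Note that the paper itself does not prove this lemma---it is imported verbatim from \citet{zhang2019online}---so there is no in-paper argument to compare against; your construction (writing $\x+\delta\vbf=(1-\alpha)\z+\alpha\cdot\frac{\delta(\one+\vbf)}{\alpha}$ and checking that $\frac{\delta(\one+\vbf)}{\alpha}\in rB^{n}_{\ge0}\subseteq\C$) is exactly the standard argument from that reference, and your treatment of down-closedness and the distance bound is sound. One small point worth making explicit: the bound $\sup_{\x\in\C,\y\in\C^{'}}\|\x-\y\|\le((\sqrt{n}+1)\frac{r(\C)}{r}+\sqrt{n})\delta$ cannot literally hold as a supremum over all pairs (take $\x=\mathbf{0}$ and $\y$ far away in $\C^{'}$); you correctly prove the intended one-sided statement, namely that every $\x\in\C$ has a companion $(1-\alpha)\x+\delta\one\in\C^{'}$ within that distance, which is precisely how the lemma is invoked later in the regret analysis.
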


\subsubsection{Bandit Measured Frank-Wolfe}
\begin{algorithm}[t]
	\caption{Bandit-Measured Frank-Wolfe~(\textbf{Bandit-MFW})}
	\begin{algorithmic}[1]\label{alg:3}
		\STATE {\bf Input:} $\delta$, $r$, $\alpha=\frac{(\sqrt{n}+1)\delta}{r}$, $\delta$-interior down-closed convex set $\C^{'}=(1-\alpha)\C+\delta\one$, $K$ online linear maximization oracles on $\C^{'}$, i.e., $\mathcal{E}^{(1)},\dots,\mathcal{E}^{(K)}$, $L$, $Q=\frac{T}{L}$, $\eta_{k}$, $\g_{q}^{(0)}=\mathbf{0}$, $\x_{q}^{(0)}=\delta\one$.
		\STATE {\bf Output:} $\y_1,\dots,\y_T$.
		\FOR{$q=1,\ldots, Q$}
		\FOR{$k=1,\dots,K$}
		\STATE Receive $\vbf_{q}^{(k)}$ which is the output of oracle $\mathcal{E}^{(k)}$.
		\STATE $\tilde{\vbf}_{q}^{(k)}=(\vbf_{q}^{(k)}-\delta\one)\oslash(\one-\delta\one)$.
		\STATE $\x_{q}^{(k)}=\x_{q}^{(k-1)}+\frac{1}{K}\tilde{\vbf}_{q}^{(k)}\odot(\one-\x_{q}^{(k-1)})$.
		\ENDFOR
		\STATE Generate a random permutation $\{t_{q}^{(1)},\dots,t_{q}^{(L)}\}$ for $\{(q-1)L+1,\dots,qL\}$.
		\FOR{$t=(q-1)L+1,\dots,qL$}
		\IF{$t\in\{t_{q}^{(1)},\dots,t_{q}^{(K)}\}$}
		\STATE Play $\y_{t}=\x_{q}^{(k)}+\delta\ubf_{q}^{(k)}$ for $f_{t}$, where $\ubf_{q}^{(k)}\sim S^{n-1}$. \quad \quad \COMMENT{$\triangleright$ Exploration}
		\ENDIF
		\IF{$t\in\{(q-1)L+1,\dots,qL\}\setminus\{t_{q}^{(1)},\dots,t_{q}^{(K)}\}$}
		\STATE Play $\y_{t}=\x_{q}^{(K)}$ for $f_{t}$. \quad \quad \COMMENT{$\triangleright$ Exploitation}
		\ENDIF
		\ENDFOR
		\FOR{$k=1,\dots,K$}
		\STATE $\g_{q}^{(k)}=(1-\eta_{k})\g_{q}^{(k-1)}+\eta_{k}\frac{n}{\delta}f_{t^{(k)}_{q}}(\x_{q}^{(k)}+\delta\ubf_{q}^{(k)})\ubf_{q}^{(k)}$.
		\STATE $\tilde{\x}_{q}^{(k)}=(\x_{q}^{(k)}-\delta\one)\oslash(\mathbf{1}-\delta\one)$.
		\STATE Feed back $\langle(\one-\tilde{\x}_{q}^{(k)})\odot \g_{q}^{(k)},\vbf_{q}^{(k)}\rangle$ as the payoff to be received by oracle $\mathcal{E}^{(k)}$.
		\ENDFOR
		\ENDFOR
	\end{algorithmic}
\end{algorithm}

\noindent To design an efficient algorithm in the bandit setting, a simple idea is to replace the stochastic gradient in Algorithm~\ref{alg:2} with the one-point estimator and run it on the $\delta$-interior $\C^{'}$. However, we cannot take this simple policy directly. In Algorithm~\ref{alg:2}, for each $t$ in the $q$-th block, we play $\x_{q}^{(K)}$ for $f_{t}$, but we may require inquiring the gradient at a different point $\x_{q}^{(k)}$. Therefore, we could not construct the one-point gradient estimate at point $\x_{q}^{(k)}$ via the reward $f_{t}(\x_{q}^{(K)})$, when $k\neq K$. 

To circumvent this drawback, we take the exploration-exploitation trade-off strategy in \cite{zhang2019online}. Specifically, we divide the $T$ reward functions into $Q$ blocks of size $L$, where $T=LQ$. Then, we cut each block into two phases~(i.e., exploration and exploitation). Taking the $q$-th block as an example, in the exploration phase, we select $K$ random reward functions to play the $\x_{q}^{(k)}+\delta\ubf_{q}^{(k)}$ which provide the one-point gradient estimators. Then, in the exploitation phase, we commit to the point $\x_{q}^{(K)}$ for the remaining $(L-K)$ reward functions. Combining Algorithm~\ref{alg:2} with this strategy, we present Algorithm~\ref{alg:3}~(\textbf{Bandit-MFW}). Moreover, we make an additional assumption and provide the regret bound of Algorithm~\ref{alg:3}.

\begin{assumption}\label{assumption2}\
	\begin{enumerate}
		\item[(i)] There exists a positive number $r$ such that $rB^{n}_{\ge0}\subseteq\C$ where $B^{n}_{\ge0}=B^{n}\cup\R^{n}_{+}$.
		\item[(ii)] For each $t\in[T]$, $\sup_{x\in\C}f_{t}(\x)\le M_{1}$.
	\end{enumerate}
\end{assumption} 

\begin{theorem}[Proof in \cref{appendix:Bandit}]\label{thm:3}
	Under Assumption~\ref{assumption1}, \ref{assumption2}, and $\max_{\x\in\mathcal{C}}\|\nabla f_{t}(\x)\|\le G$ for any $t\in[T]$, if we set $\eta_{k}=\frac{2}{(k+3)^{2/3}}$ for $k\in[K]$, then Algorithm~\ref{alg:3} achieves:
	\begin{equation*}
	\begin{split}
		\ &\frac{1}{e}\sum_{t=1}^{T}f_{t}(\x^{*})-\sum_{t=1}^{T}\E(f_{t}(\y_{t}))
		\le
		C_{1}\frac{LQ}{K}+M_{0}L\sqrt{Q} \\ 
		\ &+\frac{C_{2}LQ}{2\delta K^{1/3}} 
		+\frac{C_{3}\delta LQ}{2K^{1/3}}+2M_{1}KQ+C_{4}T\delta ,
	\end{split}	
	\end{equation*}
	where $\x^{*}=\arg\max_{\x\in\C}\sum_{t=1}^{T}f_{t}(\x)$, $C_{1}=\frac{L_{0}r^{2}(\C)}{2}$, $C_{2}=(8n^{2}M_{1}^{2}+1)\mathrm{diam}(\C)$, $C_{3}=\max\{3^{2/3}G^{2},8G^{2}+3(4.5L_{0}r(\C)+3G)^{2}/2\}\mathrm{diam}(\C)$ and $C_{4}=((\sqrt{n}+1)\frac{r(\C)}{r}+\sqrt{n}+2)G$.
\end{theorem}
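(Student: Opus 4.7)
The plan is to reduce the bandit analysis to the measured-continuous-greedy analysis of Theorems~\ref{thm:1} and~\ref{thm:2} applied to the smoothed surrogates $\hat{f}_{\delta,t}(\x) = \E_{\vbf\sim B^{n}}[f_{t}(\x+\delta\vbf)]$ restricted to the down-closed $\delta$-interior $\C'=(1-\alpha)\C+\delta\one$. First I decompose the regret into four sources: (i) the approximation gap between $\sum_{t} f_{t}(\x^{*})$ and $\sum_{t} \hat{f}_{\delta,t}(\x^{*}_{\C'})$ for a near-maximizer $\x^{*}_{\C'}\in\C'$, which by $|f_{t}-\hat{f}_{\delta,t}|\le G\delta$ and the displacement bound in Lemma~\ref{lemma:construct_set} contributes $C_{4}T\delta$; (ii) the exploration cost, where on the $K$ perturbed rounds per block we suffer at most $2M_{1}$ extra per round, giving $2M_{1}KQ$ in total; (iii) the averaging identity $\sum_{t\in\mathrm{exploit}(q)}f_{t}(\x_{q}^{(K)})\approx L\bar{\hat{f}}_{\delta,q}(\x_{q}^{(K)})$, where $\bar{\hat{f}}_{\delta,q}=\frac{1}{L}\sum_{t\in\mathrm{block}(q)}\hat{f}_{\delta,t}$; and (iv) the resulting $Q$-round virtual regret $\frac{1}{e}\sum_{q}\bar{\hat{f}}_{\delta,q}(\x^{*}_{\C'})-\sum_{q}\E[\bar{\hat{f}}_{\delta,q}(\x_{q}^{(K)})]$, which is the main object to bound.

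For the virtual regret I follow the measured continuous greedy template of Theorems~\ref{thm:1} and~\ref{thm:2}. Since $\x_{q}^{(k)}=\x_{q}^{(k-1)}+\frac{1}{K}\tilde{\vbf}_{q}^{(k)}\odot(\one-\x_{q}^{(k-1)})$, DR-submodularity combined with the $L_{0}$-smoothness of $\bar{\hat{f}}_{\delta,q}$ yields a per-block inequality of the form
\begin{equation*}
\bar{\hat{f}}_{\delta,q}(\x_{q}^{(K)})\ge\frac{1}{e}\bar{\hat{f}}_{\delta,q}(\x^{*}_{\C'})-\frac{L_{0}r^{2}(\C)}{2K}-\frac{1}{K}\sum_{k=1}^{K}\bigl\langle(\one-\x_{q}^{(k)})\odot\nabla\bar{\hat{f}}_{\delta,q}(\x_{q}^{(k)}),\,\vbf^{*}_{q}-\vbf_{q}^{(k)}\bigr\rangle,
\end{equation*}
where $\vbf^{*}_{q}$ is the appropriate comparator in $\C'$. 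Summing over $q$, the right-hand residual splits into the $K$ online linear-oracle regrets (each at most $M_{0}\sqrt{Q}$ by Assumption~\ref{assumption1}(iii)) plus a gradient-estimation error $\mathrm{diam}(\C)\cdot\frac{1}{K}\sum_{k}\E\|\g_{q}^{(k)}-\nabla\bar{\hat{f}}_{\delta,q}(\x_{q}^{(k)})\|$. After multiplying the virtual bound by $L$ to translate back to the $T$ real rounds, the oracle term becomes $M_{0}L\sqrt{Q}$ and the smoothness term becomes $C_{1}LQ/K$.

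The quantitative core is the bound on $\E\|\g_{q}^{(k)}-\nabla\bar{\hat{f}}_{\delta,q}(\x_{q}^{(k)})\|^{2}$. The one-point estimator $\frac{n}{\delta}f_{t_{q}^{(k)}}(\x_{q}^{(k)}+\delta\ubf_{q}^{(k)})\ubf_{q}^{(k)}$ is unbiased for $\nabla\bar{\hat{f}}_{\delta,q}(\x_{q}^{(k)})$ under the random permutation $t_{q}^{(k)}$ and has second moment at most $n^{2}M_{1}^{2}/\delta^{2}$, while consecutive true gradients drift by at most $L_{0}r(\C)/K$ because of the measured update. Plugging these inputs into the standard variance-reduction recursion driven by $\eta_{k}=2/(k+3)^{2/3}$ — the same recursion used in the proofs of Theorems~\ref{thm:1} and~\ref{thm:2} — yields $\E\|\g_{q}^{(k)}-\nabla\bar{\hat{f}}_{\delta,q}(\x_{q}^{(k)})\|^{2}=O\bigl((n^{2}M_{1}^{2}/\delta^{2}+(G+L_{0}r(\C))^{2})/k^{2/3}\bigr)$; applying Jensen's inequality, averaging over $k$, and multiplying by $\mathrm{diam}(\C)\cdot L$ produces the $\frac{C_{2}LQ}{2\delta K^{1/3}}$ and $\frac{C_{3}\delta LQ}{2K^{1/3}}$ contributions. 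Combining the six pieces gives the stated bound. The main obstacle will be carrying the $1/\delta^{2}$ variance cleanly through the recursion while simultaneously accommodating the rescaling $\tilde{\vbf}_{q}^{(k)}=(\vbf_{q}^{(k)}-\delta\one)\oslash(\one-\delta\one)$, which distorts the iterate dynamics relative to Algorithm~\ref{alg:2}; a secondary subtlety is ensuring $\x_{q}^{(k)}+\delta\ubf_{q}^{(k)}\in\C$ so that $f_{t}(\y_{t})$ is well-defined, which is handled by Lemma~\ref{lemma:construct_set} provided $\delta<r/(\sqrt{n}+1)$.
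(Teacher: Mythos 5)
Your proposal is correct and follows essentially the same route as the paper: the same four-way decomposition (smoothing error and $\delta$-interior restriction giving $C_4T\delta$, exploration cost $2M_1KQ$, block averaging, and a $Q$-round measured-Frank-Wolfe analysis of the smoothed averages with the one-point estimator's $n^2M_1^2/\delta^2$ variance fed into the $\eta_k=2/(k+3)^{2/3}$ recursion and split by a $\delta$-weighted AM--GM into the $C_2/\delta$ and $C_3\delta$ terms). The "rescaling obstacle" you flag is resolved in the paper exactly along the lines you set up, by working in the coordinates $\tilde{\x}_q^{(k)}=(\x_q^{(k)}-\delta\one)\oslash(\one-\delta\one)$, in which the iterates obey the plain measured update with directions $\tilde{\vbf}_q^{(k)}$ and the surrogate objective $\bar{F}_q(\x)=\frac{1}{L}\sum_m\hat{f}_{m,\delta}((\one-\delta\one)\odot\x+\delta\one)$ has gradient $(\one-\delta\one)\odot\g_q^{(k)}$ as its unbiased estimate.
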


\begin{remark}
	According to Theorem~\ref{thm:3}, if we set $L=T^{7/9}$, $Q=T^{2/9}$, $K=T^{2/3}$, and $\delta=\frac{r}{(\sqrt{n}+2)T^{1/9}}$, \textbf{Bandit-MFW} achieves a $1/e$-regret of $O(T^{8/9})$ . As far as we know, this is the first sublinear-regret online algorithm for continuous non-monotone DR-submodular maximization with bandit feedback.
\end{remark}

\section{Empirical Evaluation}\label{sec:Experiment}
In this section, we compare the performance of the following algorithms with the help of CVX optimization tool~\citep{grant2014cvx}:

\noindent\textbf{Meta-Measured Frank-Wolfe~($\beta$-Meta)}: In Algorithm~\ref{alg:1}, we set $K=T^{\beta}$ and $\eta_{k}=\frac{2}{(k+3)^{2/3}}$ for any $k\in[K]$. In the experiments, we consider $\beta=\frac{3}{4}$ or $\beta=\frac{3}{2}$.\\
\noindent\textbf{Mono-Measured Frank-Wolfe~(Mono)}: In Algorithm~\ref{alg:2}, we set $K=T^{3/5}$ and $Q=T^{2/5}$. Simultaneously, $\eta_{k}=\frac{2}{(k+3)^{2/3}}$ for any $1\le k\le\frac{K}{2}+1$ and $\eta_{k}=\frac{1.5}{(K-k+2)^{2/3}}$ for any $\frac{K}{2}+2\le k\le K$.\\
\noindent\textbf{Bandit-Measured Frank-Wolfe~(Bandit)}: In Algorithm~\ref{alg:3}, we set $L=T^{7/9}$, $Q=T^{2/9}$, $K=T^{2/3}$, $\delta=\frac{r}{(\sqrt{n}+2)T^{1/9}}$ as well as $\eta_{k}=\frac{2}{(k+3)^{2/3}}$ for any $k\in[K]$.\\
\noindent\textbf{Online algorithm for down-closed convex sets~(ODC)}: We consider Algorithm 2 in \citep{thang2021online} where  $L=T^{3/4}$ and $\rho_{l}=\frac{2}{(l+3)^{2/3}}$ for all $1\le l\le L$.

\subsection{Non-Convex/Non-Concave Quadratic Programming}
We consider the quadratic objective $f(\x) = \frac{1}{2}\x^{T}\Hbf\x + \mathbf{h}^{T}\x+c$ and constraints $\mathcal{C}=\{\x\in \R^{n}_{+} | \mathbf{A}\x\le\mathbf{b}, \mathbf{0}\le\x\le\ubf, \mathbf{A}\in \R^{m\times n}_{+}, \mathbf{b}\in\R_{+}^{m}\}$. Following \cite{bian2017continuous,bian2017guaranteed,chen2018online}, we choose the matrix $\boldsymbol{H}\in \mathbb{R}^{n\times n}$ to be a randomly generated symmetric matrix with entries $H_{ij}$ uniformly distributed in $[-10,0]$, and the matrix $\boldsymbol{A}$ to be a random matrix with entries uniformly distributed in $[0,1]$. It can be verified that $f$ is a continuous DR-submodular function and $P$ is down-closed. We set $\mathbf{b}=\ubf=\one$. Meanwhile, we set $\mathbf{h}=-0.1*\Hbf^{T}\ubf$, which ensures the non-monotone property. To make $f$ non-negative, we choose $c=-0.5*\sum_{i,j}H_{ij}$. We consider the  Gaussian noise for gradient, i.e., $(\widetilde{\nabla}f_{t}(\boldsymbol{x}))_{i}=(\nabla f_{t}(\boldsymbol{x}))_{i}+\delta\mathcal{N}(0,1)$ for any $i\in[n]$ and $\x\in[0,1]^{n}$, where we set $\delta=0.1$ in the experiments.

In our simulations, we first generate $T=200$ reward functions $f_{1},\dots,f_{T}$ with associated matrices $\Hbf_{1},\dots,\Hbf_{T}$. Next, we run the well-studied offline algorithms \citep{bian2017continuous,mitra2021submodular+} to produce an effective solution $\x^{*}_{t}$ that is a $(1/e)$-approximation to the optimum of the objective $\sum_{m=1}^{t}f_{m}$ for each $t\in[T]$. Then, under different $n$ and $m$, we present the trend of the ratio between regret and horizon, namely, $(\sum_{m=1}^{t}f_{m}(\x^{*}_{t})-\sum_{m=1}^{t}f_{m}(\y_{m}))/t$ in Figure~\ref{fig1:(a)}-\ref{fig1:(c)}. 
Simultaneously, we report the $200$-round running time in Table~\ref{tab:2}.

As shown in Figure~\ref{fig1}, our proposed Meta-MFW with $\beta=3/2$ and $3/4$~(i.e., $3/2$-Meta and $3/4$-Meta) achieve lower regret in contrast with ODC~\citep{thang2021online}. 
Interestingly, the regret curves of both $3/2$-Meta and $3/4$-Meta are nearly the same in Figure~\ref{fig1}. When the iteration index increases, Mono~(Algorithm~\ref{alg:2}) also outperforms ODC in all three settings.
Moreover, according to Table~\ref{tab:2}, 3/4-Meta and Mono effectively save running time compared with ODC. For example, when $n=50,m=50$, we spend $16.61$ and $0.31$ seconds in running 3/4-Meta and Mono, respectively, while the ODC takes $45.08$ seconds. 
It is worth mentioning that although the bandit algorithm~(Algorithm~\ref{alg:3}) with only one-point reward information exhibits the lowest convergence rate among all algorithms, it has the least running time as demonstrated in Table~\ref{tab:2}.

\begin{figure*}[t]
\subfigure[$n=25,m=15$\label{fig1:(a)}]{\includegraphics[width=0.32\linewidth]{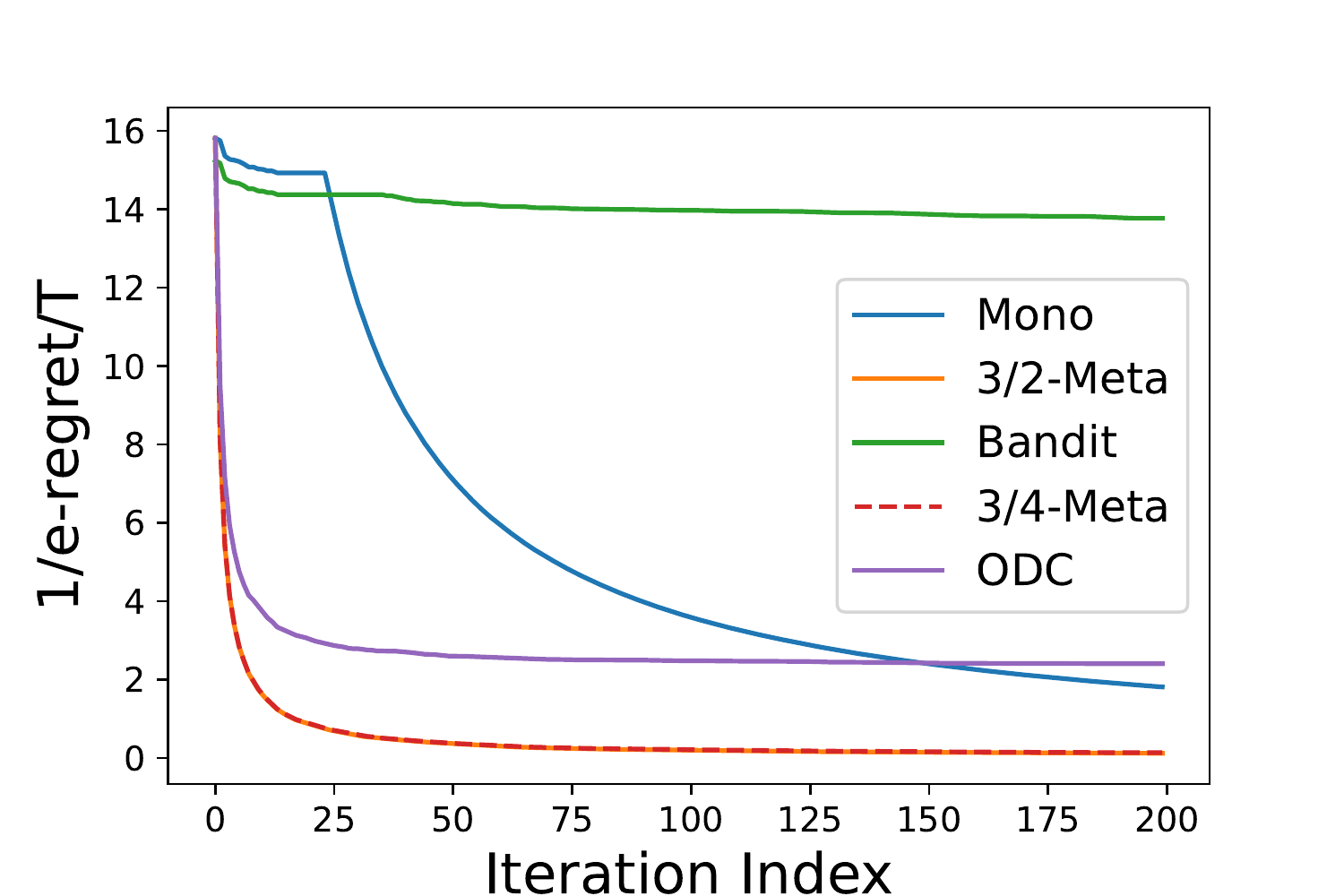}}
\subfigure[$n=40,m=20$\label{fig1:(b)}]{\includegraphics[width=0.32\linewidth]{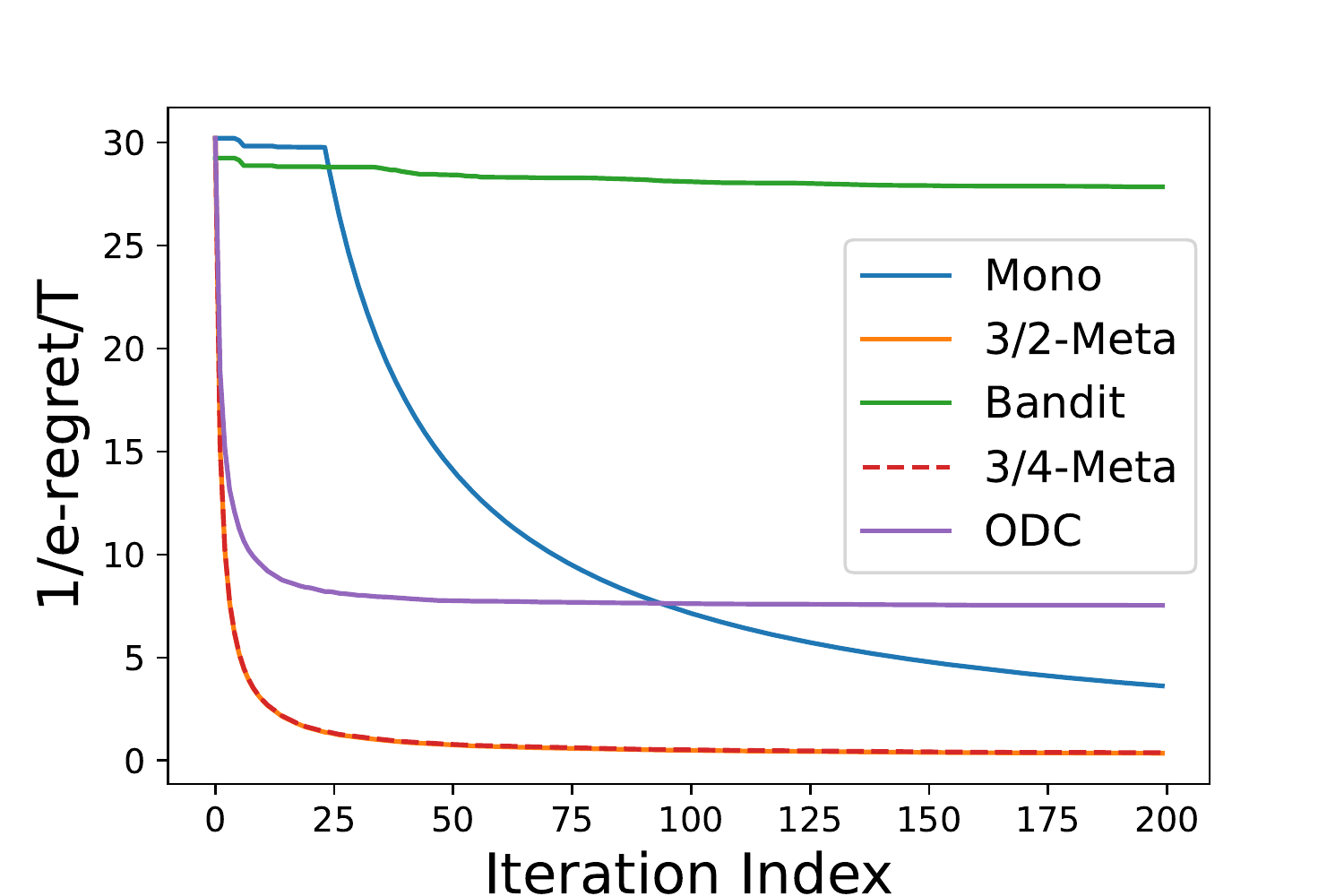}}
\subfigure[$n=50,m=50$\label{fig1:(c)}]{\includegraphics[width=0.32\linewidth]{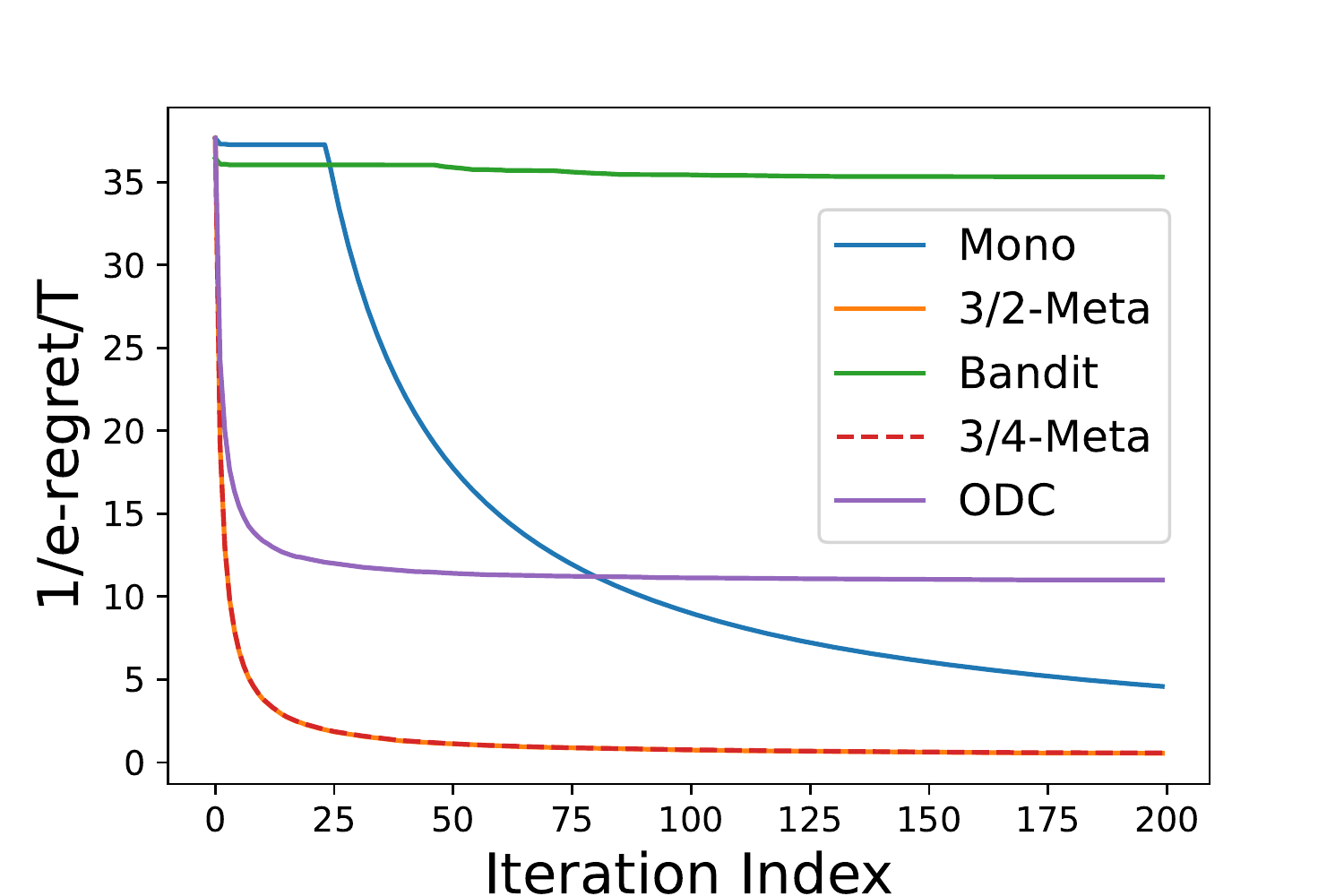}}
\caption{We test the performance of the \textbf{3/2-Meta}, \textbf{3/4-Meta}, \textbf{Mono}, \textbf{Bandit}, and \textbf{ODC} in the simluated continuous DR-submodular quadratic programming under different dimension $n$ and number of linear constraints $m$.}
\label{fig1}
\end{figure*}

\begin{table}[t]
	\centering
	\caption{Running time~(in seconds)}
    \begin{tabular}{c|c|c|c}
    \toprule[1.5pt]
    \diagbox{Method }{$(n,m)$}& $(25,15)$ & $(40,20)$ & $(50,50)$\\
    \hline 
    ODC & $14.14$ & $26.38$ & $45.08$\\
    \hline
    \hline
    3/2-Meta & $471.99$ & $609.50$ & $895.97$\\
    \hline 
    3/4-Meta & $8.75$ & $11.54$ & $16.61$\\

    \hline 
    Mono & $0.16$ & $0.21$ & $0.31$\\
    \hline
    Bandit & $0.11$ & $0.14$ & $0.21$\\
    \midrule[1.5pt]
    \end{tabular}
	\label{tab:2}
\end{table}

\subsection{Revenue Maximization}

\begin{figure*}[t]
\subfigure[CA-HepPH\label{fig2:(a)}]{\includegraphics[width=0.32\linewidth]{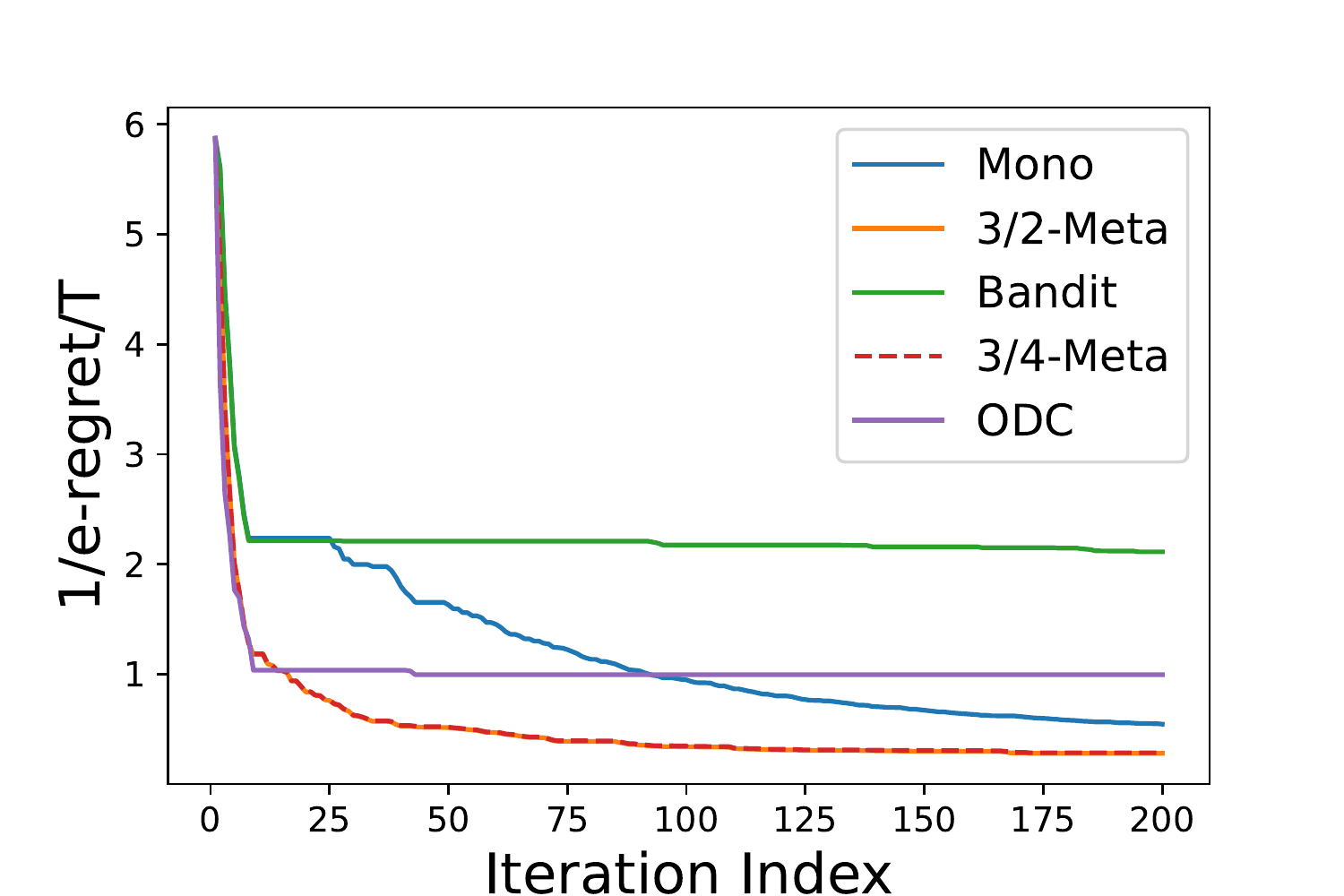}}
\subfigure[CA-GrQc\label{fig2:(b)}]{\includegraphics[width=0.32\linewidth]{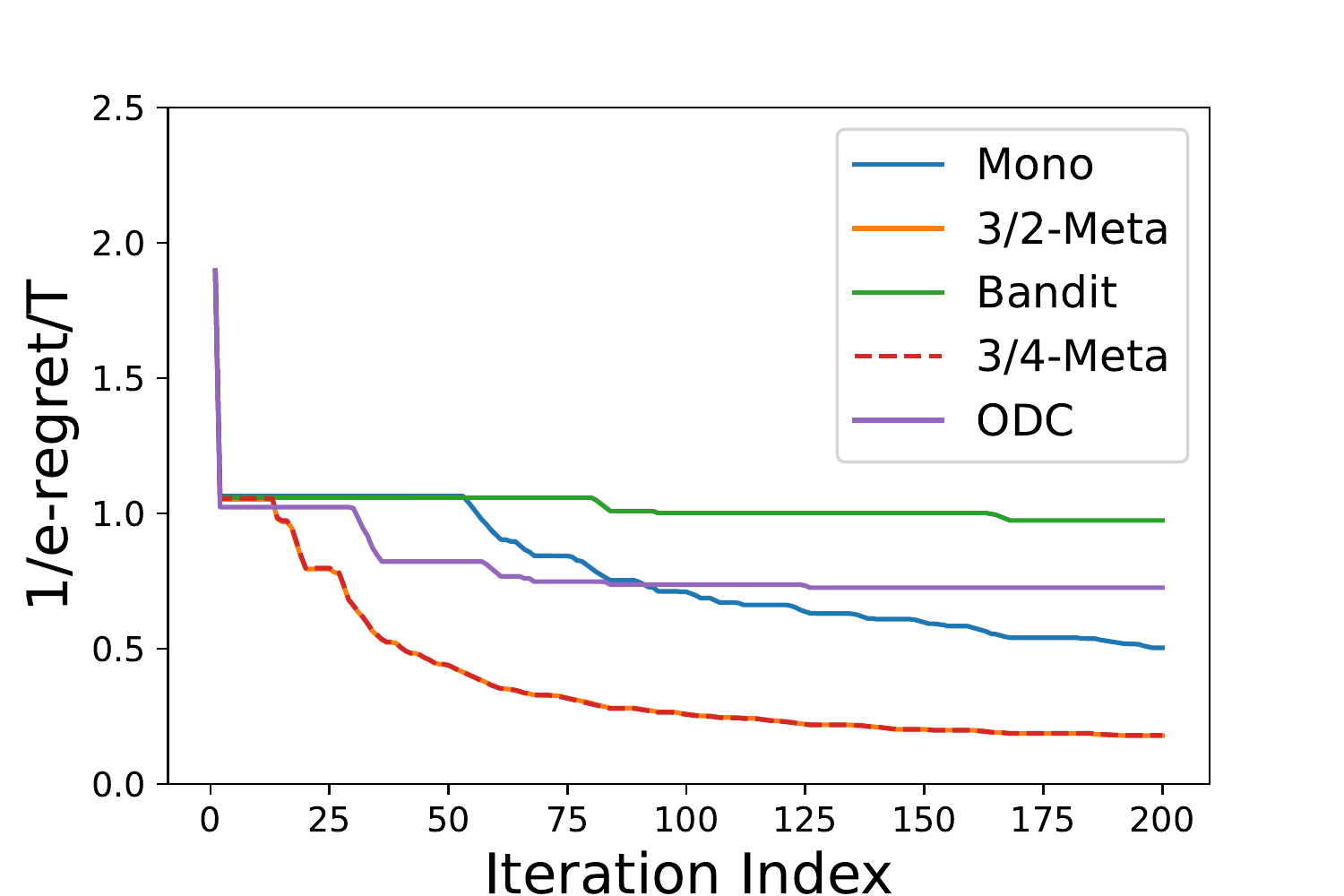}}
\subfigure[CA-HeTPH\label{fig2:(c)}]{\includegraphics[width=0.32\linewidth]{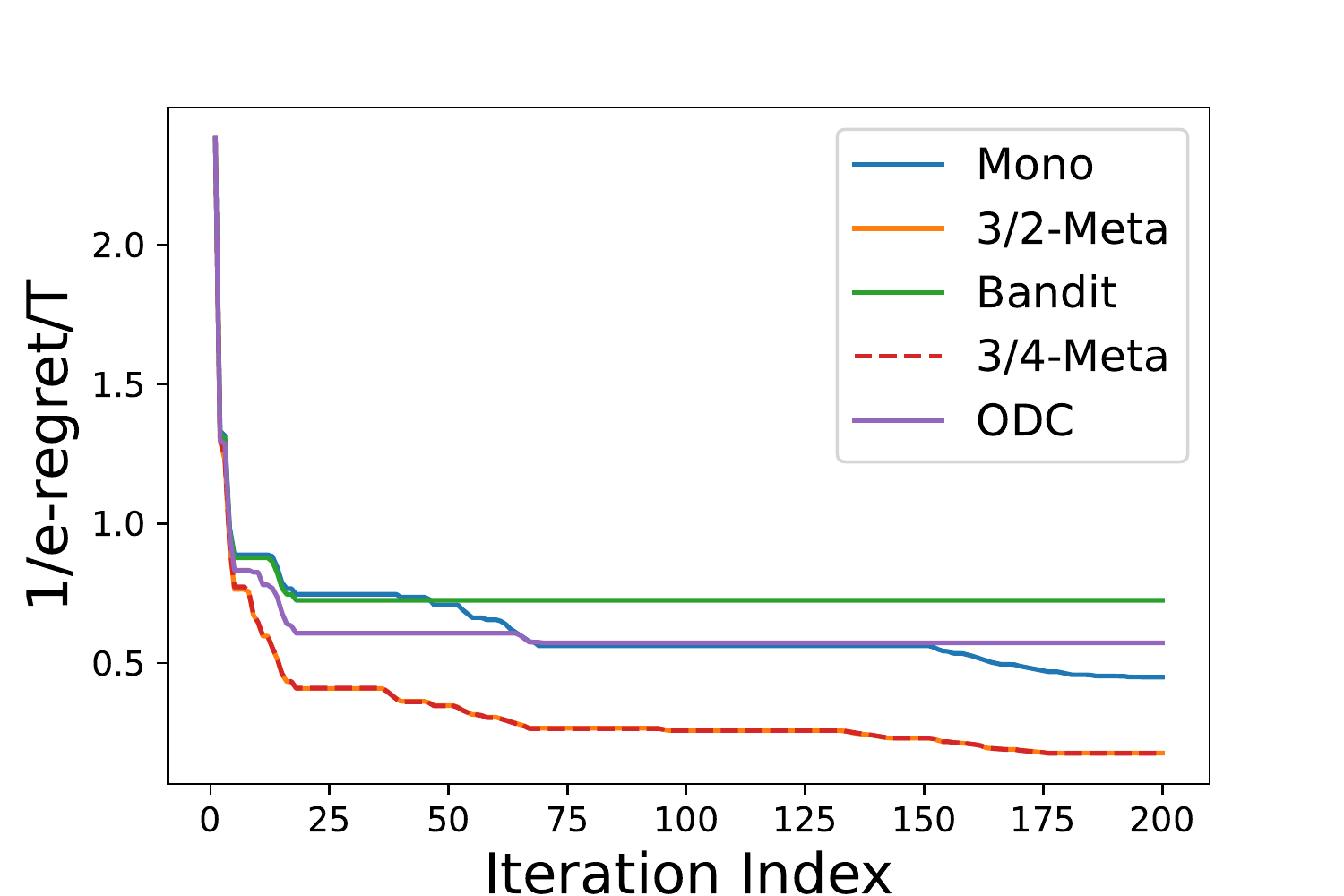}}
\caption{We test the performance of the \textbf{3/2-Meta}, \textbf{3/4-Meta}, \textbf{Mono}, \textbf{Bandit}, and \textbf{ODC} in revenue maximization on social network \textbf{CA-HepPH}, \textbf{CA-GrQc} and \textbf{CA-HepTH}.}
\label{fig2}
\end{figure*}

\begin{table}[t]
	\centering
	\caption{Running time~(in seconds)}
    \begin{tabular}{c|c|c|c}
    \toprule[1.5pt]
    Method & CA-HepPH & CA-GrQc & CA-HepTH\\
    \hline 
    ODC & $51.70$ & $94.21$&$161.27$\\
    \hline
    \hline
    3/2-Meta & $1302.50$ & $1818.33$& $3156.41$\\
    \hline 
    3/4-Meta & $24.97$ & $35.26$&$59.56$\\
    \hline 
    Mono & $0.52$ & $0.64$&$1.08$\\
    \hline
    Bandit & $0.24$ & $0.33$&$0.68$ \\
    \midrule[1.5pt]
    \end{tabular}
	\label{tab:3}
\end{table}

In this application, we consider revenue maximization on an undirected social network $G=(V,W)$ where $V$ is the set of nodes, and $w_{ij}\in W$ represents the weight of the edge between node $i$ and node $j$. If we invest $x$ proportion of the budget $B$ on a user~(node) $i\in V$, the user becomes an advocate of some product with probability $1-(1-p)^{xB}$, where $p\in(0,1)$ is a parameter. Intuitively, for investing a unit cost to user~(node) $i$, we have an extra chance that the user $i$ becomes an advocate with probability $p$. Let $S\subseteq V$ be a random set of users who advocate the product. Following~\citet{thang2021online}, the revenue with respect to $S$ is defined as $\sum_{i\in S}\sum_{j\in V\setminus S}w_{ij}$. Let $f: [0,1]^{|V|}\rightarrow\R_{+}$ be the expected revenue obtained in this model, that is $f(\x)=\sum_{i}\sum_{j\neq i}w_{ij}(1-(1-p)^{(\x)_{i}B})(1-p)^{(\x)_{j}B}$. It has been shown that $f$ is a non-monotone continuous DR-submodular function~\citep{soma2017non}.

In our experiments, we first sample three subgraphs from social networks~\citep{leskovec2007graph} to simulate the online revenue maximization, i.e., a part of Arxiv Hep-Ph (High Energy Physics - Phenomenology) collaboration network (316 edges and 56 vertices), a part of Arxiv Gr-Qc (General Relativity and Quantum Cosmology) collaboration network (316 edges and 81 vertices) as well as a part of Arxiv Hep-Th (High Energy Physics - Theory) collaboration network (658 edges and 106 vertices). At each round $t\in[T]$, we randomly select 20 vertices $V_{t}\subseteq V$ and construct $W_{t}$ with edge-weight $w^{t}_{ij}=100$ if $i,j\in V_{t}$ and edge $(i,j)$ exists in the network. If $i$ or $j$ is not in $V_{t}$, $w^{t}_{ij}=0$. As a result, the reward function $f_{t}(\x)=\sum_{i}\sum_{j\neq i}w^{t}_{ij}(1-(1-p)^{(\x)_{i}B})(1-p)^{(\x)_{j}B}$.
We also impose a constraint as $\mathcal{C}=\{\x\in \R^{n}_{+} | \mathbf{A}\x\le\mathbf{b}, \sum_{i}(\x)_{i}\le 1,  \mathbf{0}\le\x\le\one\}$ where $\boldsymbol{A}$ is a random matrix with entries uniformly distributed in $[0,1]$. We set $p=0.002$, $m=25$ as well as $B=5$. Similarly, we consider the  Gaussian noise for gradient with $\delta=0.1$. Then, we report the trend of the ratio between regret and time horizon in Figure~\ref{fig2:(a)}-\ref{fig2:(c)} and running time in Table~\ref{tab:3}. 

As shown in Figure~\ref{fig2}, our proposed \textbf{Meta-MFW} with $\beta=3/2$ and $3/4$~(i.e., $3/2$-Meta and $3/4$-Meta) have nearly the same curves and outperform the ODC~\citep{thang2021online}. 
Similarly, compared to the \textbf{ODC}, the \textbf{Mono-MFW}~(Algorithm~\ref{alg:2}) achieves lower regret in all three real-world social networks, when $T$ is large. 
Moreover, according to Table~\ref{tab:3}, our proposed 3/4-Meta and Mono take less running time than the \textbf{ODC} algorithm. Note that the bandit algorithm~(Algorithm~\ref{alg:3}) exhibits the lowest convergence rate among all algorithms with the fastest running time. 

\section{Conclusion}
In this paper, we design three online no-regret algorithms for non-monotone continuous DR-submodular maximization over a down-closed convex set. The first one, \textbf{Meta-MFW}, attains a $1/e$-regret bound of $O(\sqrt{T})$ while requiring inquiring the $T^{3/2}$ amounts of gradient evaluations for each reward function. The second one, \textbf{Mono-MFW}, reduces the number of per-function gradient evaluations from $T^{3/2}$ to 1, and achieves a $1/e$-regret bound of $O(T^{4/5})$. Finally, we present the \textbf{Bandit-MFW} algorithm, which is the first bandit algorithm for online continuous non-monotone DR-submodular maximization over a down-closed convex set and achieves a $1/e$-regret bound of $O(T^{8/9})$. Numerical experiments demonstrate the superior performance of our algorithms.

\bibliography{references}
\appendix
\section{Varience Reduction Techniques}\label{appendix:VR}
Our algorithms rely on the well-studied variance reduction techniques in \citep{chen2018projection,zhang2019online,mokhtari2020stochastic}. Next, we demonstrate some results about variance reduction in the following lemmas.
\begin{lemma}[\citet{chen2018projection,mokhtari2020stochastic}]\label{lemma:vr1}
Let $\{\a_{t}\}_{t=0}^{K}$ be a sequence of points in $\R^{n}$ such that $\|\a_{t}-\a_{t-1}\|\le\frac{G}{t+s}$ for all $1\le t\le K$ with fixed constant $G\ge0$ and $s\ge 3$. Let $\{\widetilde{\a}_{t}\}_{t=0}^{K}$ be a sequence of random variables such that $\E(\widetilde{\a}_{t}|\F_{t-1})=\a_{t}$ and $\E(\|\widetilde{\a}_{t}-\a_{t}\|^{2}|\F_{t-1})\le\sigma^{2}$ for every $t\ge0$, where $\F_{t-1}$ is the $\sigma$-field generated by $\{\widetilde{\a}_{k}\}_{k=0}^{t-1}$ and $\F_{0}=\emptyset$. Let $\{\dbf_{t}\}_{t=0}^{K}$ be a sequence of random variables where $\dbf_{0}$ is fixed and subsequent $\dbf_{t}$ are obtained by $\dbf_{t}=(1-\eta_{t})\dbf_{t-1}+\eta_{t}\widetilde{\a}_{t}$. If we set $\eta_{t}=\frac{2}{(t+s)^{2/3}}$, we have
\begin{equation}
    \E(\|\dbf_{t}-\a_{t}\|^{2})\le\frac{N}{(t+s+1)^{2/3}},
\end{equation}
where $N=\max\{\|\a_{0}-\dbf_{0}\|^{2}(s+1)^{2/3},4\sigma^{2}+3G^{2}/2\}$.
\end{lemma}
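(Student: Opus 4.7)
The plan is to prove the bound by induction on $t$. The base case $t=0$ follows directly from the choice $N \ge \|\a_0-\dbf_0\|^2(s+1)^{2/3}$: since $\dbf_0$ is fixed, $\E\|\dbf_0-\a_0\|^2 = \|\a_0-\dbf_0\|^2 \le N/(s+1)^{2/3}$. For the inductive step at $t \ge 1$, I would start by substituting $\dbf_t = (1-\eta_t)\dbf_{t-1}+\eta_t\widetilde{\a}_t$ and performing the standard ``add and subtract $\a_{t-1}$ and $\a_t$'' decomposition,
\[
\dbf_t-\a_t \;=\; (1-\eta_t)(\dbf_{t-1}-\a_{t-1}) \;+\; (1-\eta_t)(\a_{t-1}-\a_t) \;+\; \eta_t(\widetilde{\a}_t-\a_t).
\]
The first two summands are $\F_{t-1}$-measurable while the third has conditional mean zero; therefore, expanding $\|\cdot\|^2$ and taking $\E(\cdot\mid\F_{t-1})$ wipes out both cross terms involving $\widetilde{\a}_t-\a_t$, leaving only the squared norm of the deterministic part plus the bounded variance contribution $\eta_t^2\sigma^2$.

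To control the piece $(1-\eta_t)^2\|(\dbf_{t-1}-\a_{t-1})+(\a_{t-1}-\a_t)\|^2$, I would apply Young's inequality with a tuned parameter $\alpha = \eta_t/(1-\eta_t)$, engineered so that $(1-\eta_t)^2(1+\alpha)$ collapses exactly to the contraction factor $(1-\eta_t)$. The companion coefficient $(1-\eta_t)^2(1+1/\alpha) = (1-\eta_t)^2/\eta_t$, together with the jump bound $\|\a_{t-1}-\a_t\| \le G/(t+s)$ and the step size $\eta_t = 2/(t+s)^{2/3}$, turns this auxiliary term into at most $G^2/\bigl(2(t+s)^{4/3}\bigr)$. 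Combining with the variance $\eta_t^2\sigma^2 = 4\sigma^2/(t+s)^{4/3}$ and the induction hypothesis $\E\|\dbf_{t-1}-\a_{t-1}\|^2 \le N/(t+s)^{2/3}$ yields the one-step recursion
\[
\E\|\dbf_t-\a_t\|^2 \;\le\; \frac{N}{(t+s)^{2/3}} \;-\; \frac{2N-G^2/2-4\sigma^2}{(t+s)^{4/3}}.
\]

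Because $N \ge 4\sigma^2 + 3G^2/2$, the numerator in the correction term is at least $N$, so the recursion tightens to $\E\|\dbf_t-\a_t\|^2 \le N/(t+s)^{2/3} - N/(t+s)^{4/3}$. The final step is the scalar inequality $u^{-2/3} - u^{-4/3} \le (u+1)^{-2/3}$ for $u = t+s \ge s \ge 3$, which I would verify via the mean value theorem: $u^{-2/3} - (u+1)^{-2/3} \le \tfrac{2}{3}\,u^{-5/3} \le u^{-4/3}$ once $u \ge 8/27$. This closes the induction. The main obstacle is the coupled tuning required in two places at once: selecting the Young's parameter so that $(1-\eta_t)^2(1+\alpha)$ becomes a clean $(1-\eta_t)$ contraction, and then confirming that the constant $N$ stipulated in the hypothesis is large enough for the negative $-N/(t+s)^{4/3}$ correction to strictly absorb the gap $(t+s)^{-2/3} - (t+s+1)^{-2/3}$; both ultimately reduce to the explicit numerical lower bounds on $N$ in the statement.
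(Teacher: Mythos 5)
The paper does not prove this lemma itself --- it is quoted from \citet{chen2018projection,mokhtari2020stochastic} --- and your argument is precisely the standard one from those references: the three-term decomposition with vanishing cross terms, Young's inequality with $\alpha=\eta_t/(1-\eta_t)$ to get the $(1-\eta_t)$ contraction, and the scalar inequality $u^{-2/3}-u^{-4/3}\le (u+1)^{-2/3}$ to close the induction. All the computations check out (your argument in fact only needs $N\ge 4\sigma^2+G^2/2$, which the stated $N\ge 4\sigma^2+3G^2/2$ comfortably supplies, and $\eta_t<1$ holds since $t+s\ge 4$), so the proof is correct and consistent with the cited source.
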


\begin{lemma}[\citet{zhang2019online}]\label{lemma:vr2}
Let $\{a_{t}\}_{t=0}^{K}$ be a sequence of points in $\R^{n}$ such that $\|\a_{t}-\a_{t-1}\|\le\frac{G}{K+2-t}$ for all $1\le t\le K$ with fixed constant $G\ge0$. Let $\{\widetilde{\a}_{t}\}_{t=0}^{K}$ be a sequence of random variables such that $\E(\widetilde{\a}_{t}|\F_{t-1})=\a_{t}$ and $\E(\|\widetilde{\a}_{t}-\a_{t}\|^{2}|\F_{t-1})\le\sigma^{2}$ for every $t\ge0$, where $\F_{t-1}$ is the $\sigma$-field generated by $\{\widetilde{\a}_{k}\}_{k=0}^{t-1}$ and $\F_{0}=\emptyset$. Let $\{\dbf_{t}\}_{t=0}^{K}$ be a sequence of random variables where $\dbf_{0}$ is fixed and subsequent $\dbf_{t}$ are obtained by $\dbf_{t}=(1-\eta_{t})\dbf_{t-1}+\eta_{t}\widetilde{\a}_{t}$. If we set $\eta_{t}=\frac{2}{(t+3)^{2/3}}$, when $1\le t\le\frac{K}{2}+1$, and when $\frac{K}{2}+2\le t\le K$, $\eta_{t}=\frac{1.5}{(K-t+2)^{2/3}}$, we have
\begin{equation}
    \E(\|\dbf_{t}-\a_{t}\|^{2})\le\left\{\begin{aligned}
       &\frac{N}{(t+4)^{2/3}}& 1\le t\le\frac{K}{2}+1\\
       &\frac{N}{(K-t+1)^{2/3}}& \frac{K}{2}+2\le t\le K
    \end{aligned}\right.
\end{equation} 
where $N=\max\{5^{2/3}\|\a_{0}-\dbf_{0}\|^{2},4\sigma^{2}+32G^{2},2.25\sigma^{2}+7G^{2}/3\}$.
\end{lemma}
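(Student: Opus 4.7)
The plan is to derive a single one-step recursion for $\E(\|\dbf_t - \a_t\|^2)$ and then prove the claimed bound by induction in two separate phases, one for each piece of the step-size schedule. Starting from the update rule, I decompose
$$\dbf_t - \a_t = (1-\eta_t)(\dbf_{t-1} - \a_{t-1}) + (1-\eta_t)(\a_{t-1} - \a_t) + \eta_t(\widetilde{\a}_t - \a_t),$$
square, and condition on $\F_{t-1}$; the hypothesis $\E(\widetilde{\a}_t - \a_t \mid \F_{t-1}) = 0$ kills the noise cross-terms, and the remaining cross term is handled by $\|x+y\|^2 \le (1+\eta_t)\|x\|^2 + (1+\eta_t^{-1})\|y\|^2$ together with $(1-\eta_t)^2(1+\eta_t)\le 1-\eta_t$ and $(1-\eta_t)^2(1+\eta_t^{-1}) \le \eta_t^{-1}$. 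After substituting the drift bound $\|\a_{t-1}-\a_t\|^2 \le G^2/(K+2-t)^2$, this yields the master recursion
$$\E(\|\dbf_t - \a_t\|^2) \le (1-\eta_t)\,\E(\|\dbf_{t-1}-\a_{t-1}\|^2) + \frac{G^2}{\eta_t(K+2-t)^2} + \eta_t^2\sigma^2.$$

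For Phase 1, $1\le t \le K/2+1$ with $\eta_t = 2/(t+3)^{2/3}$, I would induct on $\E(\|\dbf_{t-1}-\a_{t-1}\|^2)\le N/(t+3)^{2/3}$; the base case holds since $N \ge 5^{2/3}\|\dbf_0 - \a_0\|^2 \ge 4^{2/3}\|\dbf_0 - \a_0\|^2$. Plugging in and using the concavity estimate $(t+4)^{2/3}-(t+3)^{2/3} \le (2/3)(t+3)^{-1/3}$, the inductive step reduces to the numerical inequality
$$\Bigl(2 - \frac{2}{3(t+3)^{1/3}}\Bigr)\frac{N}{(t+3)^{4/3}} \ge \frac{(t+3)^{2/3}G^2}{2(K+2-t)^2} + \frac{4\sigma^2}{(t+3)^{4/3}}.$$
The key observation is that throughout this range $K+2-t \ge K/2+1$, so $(t+3)^2/(K+2-t)^2$ is bounded by an absolute constant (worst case near $t = K/2+1$); the prescribed $N \ge 4\sigma^2 + 32G^2$ is precisely the buffer needed to absorb this endpoint ratio.

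For Phase 2, $K/2+2 \le t \le K$ with $\eta_t = 1.5/(K-t+2)^{2/3}$, I substitute $s=K-t+2 \in \{2,\dots,K/2\}$, turning the master recursion into
$$\E(\|\dbf_t-\a_t\|^2) \le \Bigl(1-\frac{1.5}{s^{2/3}}\Bigr)\E(\|\dbf_{t-1}-\a_{t-1}\|^2) + \frac{2G^2/3+2.25\sigma^2}{s^{4/3}}.$$
I would then show by induction that $\E(\|\dbf_{t-1}-\a_{t-1}\|^2)\le N/s^{2/3}$ implies $\E(\|\dbf_t-\a_t\|^2)\le N/(s-1)^{2/3}$; this reduces to $N\bigl[(s-1)^{-2/3}-s^{-2/3}+1.5\,s^{-4/3}\bigr]\ge (2G^2/3+2.25\sigma^2)\,s^{-4/3}$, and since the bracket is at least $1.5\,s^{-4/3}$, the sufficient condition $1.5N \ge 2G^2/3+2.25\sigma^2$ is strictly weaker than the stated $N \ge 2.25\sigma^2 + 7G^2/3$.

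The two phases are glued at $t=K/2+2$: Phase 1 delivers $\E(\|\dbf_{K/2+1}-\a_{K/2+1}\|^2)\le N/(K/2+5)^{2/3}$, which is already strictly smaller than the Phase 2 target $N/(K/2-1)^{2/3}$ at $t=K/2+2$, so one further application of the master recursion with $s=K/2$ establishes the Phase 2 base case. The main obstacle I anticipate is the Phase 1 arithmetic near $t=K/2+1$, where the drift term $\eta_t^{-1}G^2/(K+2-t)^2$ is \emph{not} automatically of the same order as $\eta_t^2 G^2$ (because $(t+3)/(K+2-t) = \Theta(1)$ at that endpoint). This is precisely why the $32G^2$ constant in $N$ is so much larger than the $3G^2/2$ appearing in the forward-schedule \cref{lemma:vr1}; keeping the mean-value differences of $(t+3)^{2/3}$-powers tight enough to not spoil this matching is the only genuinely delicate piece of the calculation.
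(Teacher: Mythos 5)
The paper does not prove this lemma itself—it is imported verbatim from \citet{zhang2019online}—so there is no in-paper argument to compare against; your reconstruction follows the same standard route as the original source (the one-step recursion from the decomposition $\dbf_t-\a_t=(1-\eta_t)(\dbf_{t-1}-\a_{t-1})+(1-\eta_t)(\a_{t-1}-\a_t)+\eta_t(\widetilde{\a}_t-\a_t)$, Young's inequality with parameter $\eta_t$, and a two-phase induction glued at $t=K/2+2$). The argument is sound: the only delicate point, the endpoint ratio $(t+3)^2/(K+2-t)^2\le 6.25$ at $t=K/2+1$, is indeed absorbed by the stated constants, so I see no gap.
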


\section{Proofs in \cref{sec:Meta}}\label{appendix:Meta}
\subsection{The Properties of New Update Rule}
In our Algorithm~\ref{alg:1}, we take a novel update rule (Equation~(\ref{equ:update})). Before going into the detail, we first demonstrate some important properties of this new update rule. Next, we use new symbols to retell this update rule: Given a series of update directions $\dbf_{k}\in\C,\forall k\in[K]$ and initial point $\y_{0}=\mathbf{0}$, we consider the following update rule, i.e., 
\begin{equation}\label{equ:update1}
   \y_{k}=\y_{k-1}+\frac{1}{K}\dbf_{k}\odot(\one-\y_{k-1}).
\end{equation}
A prompt benefit of this rule is shown in the following lemma.
\begin{lemma}
When $\C\subseteq[0,1]^{n}$ is down-closed convex set and $\mathbf{0}\in\C$, then $\y_{k}\in\C$ for any $k\in[K]$.
\end{lemma}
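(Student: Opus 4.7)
My plan is to induct on $k$ and reduce membership in $\C$ to a coordinate-wise upper bound by something that is manifestly in $\C$, using down-closedness to conclude.

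First, I would rewrite the recursion in terms of the complementary variable $\one - \y_k$. A short calculation from \eqref{equ:update1} gives $\one - \y_k = (\one - \y_{k-1}) \odot (\one - \tfrac{1}{K}\dbf_k)$, and iterating with $\y_0 = \mathbf{0}$ yields the closed form
\begin{equation*}
\one - \y_k \;=\; \prod_{j=1}^{k}\Bigl(\one - \tfrac{1}{K}\dbf_j\Bigr),
\end{equation*}
where the product is coordinate-wise (Hadamard). Since each $\dbf_j \in \C \subseteq [0,1]^n$, every factor lies in $[0,1]^n$, so $\y_k \in [0,1]^n$; in particular $\y_k \geq \mathbf{0}$.

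Second, I would apply Bernoulli's inequality coordinate-wise, namely $\prod_{j=1}^{k}(1-x_j) \geq 1 - \sum_{j=1}^{k}x_j$ for $x_1,\dots,x_k \in [0,1]$, to derive the linear upper bound $\y_k \leq \tfrac{1}{K}\sum_{j=1}^{k}\dbf_j$. Writing
\begin{equation*}
\frac{1}{K}\sum_{j=1}^{k}\dbf_j \;=\; \sum_{j=1}^{k}\frac{1}{K}\dbf_j + \Bigl(1 - \frac{k}{K}\Bigr)\mathbf{0},
\end{equation*}
exhibits the right-hand side as a convex combination of $\dbf_1,\dots,\dbf_k,\mathbf{0} \in \C$ with nonnegative weights summing to one (using $k \leq K$), so it lies in $\C$ by convexity.

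Finally, I would close the argument via down-closedness. Because $\mathbf{0} \in \C$, the lower vector $\underline{\ubf}$ in the definition must be $\mathbf{0}$ itself: condition~(1) forces $\underline{\ubf} \leq \mathbf{0}$, and $\underline{\ubf} \in \C \subseteq [0,1]^n$ forces $\underline{\ubf} \geq \mathbf{0}$. Hence $\mathbf{0} \leq \y_k \leq \tfrac{1}{K}\sum_{j=1}^{k}\dbf_j \in \C$ and condition~(2) of down-closedness immediately gives $\y_k \in \C$, completing the induction. I do not foresee any real obstacle; the only mild subtleties are the implicit identification $\underline{\ubf} = \mathbf{0}$ and the bookkeeping that makes $1 - \tfrac{k}{K}$ a legitimate convex weight, which is exactly why the step size $1/K$ is chosen so that $k \leq K$ throughout.
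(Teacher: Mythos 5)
Your proof is correct and takes essentially the same route as the paper's: both dominate $\y_k$ coordinate-wise by the convex combination $\frac{1}{K}\sum_{j}\dbf_j$ (padded with $\mathbf{0}\in\C$) and then invoke down-closedness. The only difference is cosmetic — you derive the bound from the product form $\one-\y_k=\prod_{j=1}^{k}(\one-\frac{1}{K}\dbf_j)$ together with Bernoulli's inequality, whereas the paper uses monotonicity of the iterates and the identity $\y_K=\frac{1}{K}\sum_{k=1}^{K}\dbf_k\odot(\one-\y_{k-1})$; your explicit identification of the lower vector $\underline{\ubf}=\mathbf{0}$ is a nice touch of rigor the paper leaves implicit.
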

\begin{proof}
First, we prove that $\y_{k}\le\mathbf{1}$ for any $k\le K$. By induction, we know $\y_{0}=\mathbf{0}$. If we assume $\y_{k-1}\le\mathbf{1}$, then
\begin{equation*}
    \begin{aligned}
     \y_{k}&=\y_{k-1}+\frac{1}{K}\dbf_{k}\odot(\one-\y_{k-1})\\
     =&\frac{1}{K}\dbf_{k}+\y_{k-1}\odot(\one-\frac{1}{K}\dbf_{k})\\
     \le&\frac{1}{K}\dbf_{k}+\one-\frac{1}{K}\dbf_{k}\\
     =&\one.
    \end{aligned}
\end{equation*} 
As a result, $\y_{k}\le\mathbf{1}$ for any $k\le K$. Next, we verify that $\y_{k}\in\mathcal{C}$. According to \cref{equ:update1}, we could conclude that $\y_{K}=\frac{1}{K}\sum_{k=1}^{K}\dbf_{k}\odot(\one-\y_{k-1})$. Due to convexity and each $\dbf_{k}\in\mathcal{C}$, we know $\frac{1}{K}\sum_{k=1}^{K}\dbf_{k}\in\C$. Also, we know that $\mathbf{0}\le\y_{1}\le\y_{2}\le\dots\y_{K}\le\frac{1}{K}\sum_{k=1}^{K}\dbf_{k}$ ($\y_{k}\le\mathbf{1}$) so that $\y_{k}\in\C$ for any $k\in[K]$(the down-closed property).
\end{proof}
Moreover, we could derive a upper bound about every element of $\y_{k}$, i.e., 
\begin{lemma}\label{lemma:3}
For $i\in[n]$ and $k\in[K]$, we have $(\y_{k})_{i}\le1-(1-\frac{1}{K})^{k}$.
\end{lemma}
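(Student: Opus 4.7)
The plan is a straightforward induction on $k$, worked out coordinate-wise. Fix an index $i \in [n]$ and write $a_k = (\y_k)_i$, $b_k = (\dbf_k)_i$. The update rule \eqref{equ:update1} reads componentwise as
\begin{equation*}
a_k = a_{k-1} + \tfrac{1}{K} b_k (1 - a_{k-1}) = \bigl(1 - \tfrac{b_k}{K}\bigr) a_{k-1} + \tfrac{b_k}{K}.
\end{equation*}
Since $\dbf_k \in \C \subseteq [0,1]^n$, we have $b_k \in [0,1]$, and from the previous lemma we know $a_{k-1} \le 1$. The base case $k=0$ is immediate because $a_0 = 0 = 1 - (1-1/K)^0$.

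For the inductive step, view the right-hand side as a function of $b_k$: because $a_{k-1} \le 1$, its derivative with respect to $b_k$ is $(1 - a_{k-1})/K \ge 0$, so the expression is non-decreasing in $b_k$. Thus the worst case is $b_k = 1$, which gives
\begin{equation*}
a_k \;\le\; \bigl(1 - \tfrac{1}{K}\bigr) a_{k-1} + \tfrac{1}{K}.
\end{equation*}
Plugging in the induction hypothesis $a_{k-1} \le 1 - (1 - 1/K)^{k-1}$ and simplifying yields
\begin{equation*}
a_k \;\le\; \bigl(1 - \tfrac{1}{K}\bigr)\bigl(1 - (1 - 1/K)^{k-1}\bigr) + \tfrac{1}{K} \;=\; 1 - (1 - 1/K)^k,
\end{equation*}
completing the induction.

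There is no real obstacle here: the only subtlety is making sure the multiplier $(1 - b_k/K)$ is nonnegative so the induction hypothesis can be inserted without flipping an inequality, which is guaranteed by $b_k \le 1 \le K$. The argument proceeds independently in each coordinate, so no joint reasoning about $\y_k$ as a point in $\C$ is needed beyond the fact already established that $\y_{k-1} \le \one$.
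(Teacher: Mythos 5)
Your proof is correct and follows essentially the same route as the paper's: both work coordinate-wise, bound $(\dbf_k)_i \le 1$ using $1-(\y_{k-1})_i \ge 0$ to obtain $(\y_k)_i \le (1-\tfrac{1}{K})(\y_{k-1})_i + \tfrac{1}{K}$, and then close the induction. Your explicit remark about the sign of the multiplier is a slightly more careful phrasing of the same step the paper performs implicitly.
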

\begin{proof}
From \cref{equ:update1}, we have
\begin{equation}
    \begin{aligned}
 (\y_{k})_{i}&=(\y_{k-1})_{i}+\frac{1}{K}(\dbf_{k}\odot(\mathbf{1}-\y_{k-1}))_{i}\\
 &=(\y_{k-1})_{i}+\frac{1}{K}(\dbf_{k})_{i}*(1-(\y_{k-1})_{i})\\
 &\le(\y_{k-1})_{i}+\frac{1}{K}(1-(\y_{k-1})_{i})\\
 &=(1-\frac{1}{K})(\y_{k-1})_{i}+\frac{1}{K},
    \end{aligned}
\end{equation} where the inequality follows from $(\dbf_{k})_{i}\le1$ and $(\y_{k-1})_{i}\le 1$ .

First, we have $(\y_{0})_{i}=0\le0$. If $(\y_{k})_{i}\le1-(1-\frac{1}{K})^{k}$, we have
\begin{equation}
    \begin{aligned}
 (\y_{k})_{i}&\le(1-\frac{1}{K})(\y_{k-1})_{i}+\frac{1}{K}\\
 &\le(1-\frac{1}{K})(1-(1-\frac{1}{K})^{k})+\frac{1}{K}\\
 &=1-(1-\frac{1}{K})^{k+1}.
    \end{aligned}
\end{equation}
Therefore, we have $(\y_{k})_{i}\le1-(1-\frac{1}{K})^{k}$ by induction.
\end{proof}Next, for any continuous DR-submodular function $f:[0,1]^{n}\rightarrow\R_{+}$, we show the relationship between $f(\z)$ and $f(\x)$ when the vector $\z$ take a similar form of the update rule (Equation~\eqref{equ:update1}), namely, $\z=\y+(\one-\y)\odot\x$ where $\x,\y\in[0,1]^{n}$. Noticeably, $\z\ge\x$.
\begin{lemma}\label{lemma:4}
For any continuous DR-submodular function $f:[0,1]^{n}\rightarrow\R_{+}$, when $\z=\y+(\one-\y)\odot\x$ where $\x,\y\in[0,1]^{n}$, we have 
\begin{equation*}
     f(\z)\ge(1-\|\y\|_{\infty})f(\x).
\end{equation*}
\end{lemma}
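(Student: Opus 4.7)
The plan is to reduce the claim to a one-dimensional concavity statement along a carefully chosen line segment, exploiting that DR-submodularity makes $f$ concave along any coordinate-wise non-negative direction (since $t \mapsto \nabla f(\x + t\dbf)$ is coordinate-wise non-increasing whenever $\dbf \geq \mathbf{0}$, so its inner product with $\dbf \geq \mathbf{0}$ is non-increasing in $t$).

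Setting $\lambda := \|\y\|_{\infty}$, the trivial case $\lambda = 0$ gives $\y = \mathbf{0}$ and hence $\z = \x$, so assume $\lambda > 0$. I would then introduce the rescaled direction $\dbf := (\y/\lambda)\odot(\one-\x) \geq \mathbf{0}$ together with the endpoint $\w := \x + \dbf$, which lies in $[0,1]^{n}$ because $\|\y/\lambda\|_{\infty} = 1$ forces $\w \leq \x + (\one - \x) = \one$ coordinate-wise. Now study the scalar function $h(t) := f(\x + t\dbf)$ on $[0,1]$: by the concavity observation above, $h$ is concave on $[0,1]$; moreover $h(0) = f(\x)$, $h(1) = f(\w) \geq 0$ by non-negativity of $f$, and crucially $h(\lambda) = f(\x + \lambda(\y/\lambda)\odot(\one-\x)) = f(\x + \y\odot(\one-\x)) = f(\z)$ since $\z = \y + (\one-\y)\odot\x = \x + \y\odot(\one-\x)$. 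Applying the chord inequality for concave $h$ at the interior point $\lambda = (1-\lambda)\cdot 0 + \lambda\cdot 1$ yields $h(\lambda) \geq (1-\lambda) h(0) + \lambda h(1) \geq (1-\lambda) h(0)$, where the last step discards the non-negative term $\lambda h(1)$. Translating back, $f(\z) \geq (1-\lambda) f(\x)$, as desired.

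I do not anticipate a serious obstacle. The only conceptual subtlety is the choice of reparameterization: the naive segment from $\x$ to $\z$ would arrive at $\z$ at parameter $t=1$ and give no useful concavity bound, whereas extending the segment to $\w$ places $\z$ at the convex combination $(1-\lambda)\x + \lambda\w$, which is exactly where the chord inequality becomes informative. The non-negativity hypothesis $f \geq 0$ is essential, since otherwise we could not discard the $\lambda h(1)$ term.
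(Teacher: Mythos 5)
Your proof is correct and is essentially the paper's own argument up to a reparameterization: the paper studies $g(z)=f(\x+z(\one-\x)\odot\y)$ on $[0,1/\|\y\|_{\infty}]$ and writes $1$ as the convex combination $\|\y\|_{\infty}\cdot\frac{1}{\|\y\|_{\infty}}+(1-\|\y\|_{\infty})\cdot 0$, which is exactly your chord inequality after the substitution $t=\|\y\|_{\infty}z$. Both arguments rest on the same two ingredients, concavity of $f$ along non-negative directions and non-negativity of $f$ to discard the far-endpoint term; your explicit handling of the $\|\y\|_{\infty}=0$ case is a small tidiness improvement over the paper, nothing more.
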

\begin{proof}
First, we set $g(z)=f(\x+z(\one-\x)\odot\y)$. Moreover, we know $\x+\frac{1}{\|\y\|_{\infty}}(\one-\x)\odot\y\in[0,1]^{n}$. According to \citep{bian2020continuous,thang2021online}, we know continuous DR-submodular function $f$ is concave along the any positive direction. Therefore, $g$ is a concave function in the interval $[0,\frac{1}{\|\y\|_{\infty}}]$.
 As a result, we have
\begin{equation}
    \begin{aligned}
     f(\y+(\one-\y)\odot\x)&=f(\x+(\one-\x)\odot\y)\\
     &=g(1)\\
     &=g(\|\y\|_{\infty}*\frac{1}{\|\y\|_{\infty}}+(1-\|\y\|_{\infty})*0)\\
     &\ge(1-\|\y\|_{\infty})g(0)+\|\y\|_{\infty}g(\frac{1}{\|\y\|_{\infty}})\\
     &\ge(1-\|\y\|_{\infty})g(0)\\
     &=(1-\|\y\|_{\infty})f(\x),
    \end{aligned}
\end{equation}where the first inequality comes from the concave property of $g$; the second from $g(\frac{1}{\|\y\|_{\infty}})\ge0$.
\end{proof}
Thus, according to Equation~\eqref{equ:update1} and \cref{lemma:3}-\ref{lemma:4}, we have $f(\y_{k}+(\one-\y_{k})\odot\y^{*})\ge(1-\|\y_{k}\|_{\infty})f(\y^{*})\ge(1-\frac{1}{K})^{k}f(\y^{*})$ where $\y^{*}=\arg\max_{\y\in\C}f(\y)$, which sheds light on the possibility to derive a constant-factor approximation for maximizing a non-monotone DR-submodular function for our proposed algorithms.

\subsection{Proof of Theorem~\ref{thm:1}}
First, we present a frequently used lemma.
\begin{lemma}\label{lemma:use}
For any continuous DR-submodular function $f:[0,1]^{n}\rightarrow\R_{+}$ with smoothness parameter $L_{0}$, if $\x_{k}=\x_{k-1}+\frac{1}{K}\vbf_{k}\odot(\one-\x_{k-1})$ for any $0\le k\le K$, then we have, for  $\forall\dbf\in\R^{n}$ and $\forall\y\in[0,1]^{n}$,
\begin{equation}
    \begin{aligned}
     f(\x_{k})\ge & (1-\frac{1}{K})f(\x_{k-1})+\frac{1}{K}f(\x_{k-1}+(\one-\x_{k-1})\odot\y)+\frac{1}{K}\langle (\one-\x_{k-1})\odot \dbf,\vbf_{k}-\y\rangle\\
     &+\frac{1}{K}\langle(\vbf_{k}-\y)\odot(\one-\x_{k-1}),\nabla f(\x_{k-1})-\dbf\rangle-\frac{L_{0}}{2}\|\x_{k}-\x_{k-1}\|^{2}.
    \end{aligned}
\end{equation}
\end{lemma}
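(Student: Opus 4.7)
The plan is to combine two standard tools: the descent lemma obtained from $L_0$-smoothness, and the concavity of a DR-submodular function along any non-negative direction. These together will deliver the desired inequality after a single algebraic splitting of the inner product.

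First I would apply smoothness in its lower form, giving
\begin{equation*}
f(\x_k) \ge f(\x_{k-1}) + \langle \nabla f(\x_{k-1}),\, \x_k - \x_{k-1}\rangle - \frac{L_0}{2}\|\x_k - \x_{k-1}\|^2.
\end{equation*}
Substituting the update rule $\x_k - \x_{k-1} = \tfrac{1}{K}\vbf_k \odot (\one - \x_{k-1})$ and using the elementary identity $\langle \a, \b \odot \c\rangle = \langle \a \odot \b, \c\rangle = \langle \a \odot \c, \b\rangle$, the linear term becomes $\tfrac{1}{K}\langle (\one - \x_{k-1})\odot \nabla f(\x_{k-1}),\, \vbf_k\rangle$.

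Next I would decompose this inner product twice. Writing $\vbf_k = \y + (\vbf_k - \y)$ splits it into a $\y$-piece and a $(\vbf_k-\y)$-piece; then on the $(\vbf_k-\y)$-piece I would insert $\nabla f(\x_{k-1}) = \dbf + (\nabla f(\x_{k-1}) - \dbf)$, producing exactly the two last inner-product terms of the claim, namely
\begin{equation*}
\tfrac{1}{K}\langle (\one-\x_{k-1})\odot \dbf,\, \vbf_k - \y\rangle + \tfrac{1}{K}\langle (\vbf_k - \y)\odot(\one-\x_{k-1}),\, \nabla f(\x_{k-1}) - \dbf\rangle.
\end{equation*}

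It remains to handle the $\y$-piece $\tfrac{1}{K}\langle (\one - \x_{k-1})\odot \nabla f(\x_{k-1}),\, \y\rangle = \tfrac{1}{K}\langle \nabla f(\x_{k-1}),\, (\one - \x_{k-1})\odot \y\rangle$. Since $f$ is DR-submodular, the univariate map $g(t)=f(\x_{k-1}+t(\one-\x_{k-1})\odot \y)$ is concave on $[0,1]$ (this is the same fact used in Lemma~\ref{lemma:4}); concavity yields $g(1) - g(0) \le g'(0)$, i.e.\
\begin{equation*}
\langle \nabla f(\x_{k-1}),\, (\one - \x_{k-1})\odot \y\rangle \ge f(\x_{k-1} + (\one - \x_{k-1})\odot \y) - f(\x_{k-1}).
\end{equation*}
Substituting this lower bound back and collecting the resulting $-\tfrac{1}{K}f(\x_{k-1})$ with the leading $f(\x_{k-1})$ gives the coefficient $(1-\tfrac{1}{K})$ on $f(\x_{k-1})$, completing the inequality.

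The only non-routine step is the concavity reduction in the last paragraph, but it follows at once from the standard fact that DR-submodular functions are concave along non-negative directions, together with the observation that $(\one-\x_{k-1})\odot\y \ge \mathbf{0}$ whenever $\x_{k-1}\in[0,1]^n$ and $\y\in[0,1]^n$. Everything else is bookkeeping: applying smoothness, rewriting Hadamard inner products, and splitting $\vbf_k$ and $\nabla f(\x_{k-1})$ additively against $\y$ and $\dbf$.
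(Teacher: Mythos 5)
Your proof is correct and follows essentially the same route as the paper's: apply the $L_0$-smoothness lower bound, rewrite the linear term via the update rule, split the inner product against $\y$ and $\dbf$, and invoke concavity of $f$ along the non-negative direction $(\one-\x_{k-1})\odot\y$. The only cosmetic difference is the order in which you perform the two additive splits, which yields the same three terms.
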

\begin{proof}
According to the $L_{0}$-smooth condition, we have
\begin{equation}
\begin{aligned}
f(\x_{k})-f(\x_{k-1})&\ge\langle \x_{k}-\x_{k-1},\nabla f(\x_{k-1})\rangle-\frac{L_{0}}{2}\|\x_{k}-\x_{k-1}\|^{2}\\
 &=\frac{1}{K}\langle \vbf_{k}\odot(\one-\x_{k-1}),\nabla f(\x_{k-1})\rangle-\frac{L_{0}}{2}\|\x_{k}-\x_{k-1}\|^{2}.
\end{aligned}
\end{equation}
Then,
\begin{equation}
    \begin{aligned}
    \langle &\vbf_{k}\odot(\one-\x_{k-1}),\nabla f(\x_{k-1})\rangle\\
    = & \langle \vbf_{k}\odot(\one-\x_{k-1}),\dbf\rangle+\langle\vbf_{k}\odot(\one-\x_{k-1}),\nabla f(\x_{k-1})-\dbf\rangle\\
    = & \langle(\one-\x_{k-1})\odot \dbf,\vbf_{k}\rangle+\langle \vbf_{k}\odot(\one-\x_{k-1}),\nabla f(\x_{k-1})-\dbf\rangle\\
    = & \langle (\one-\x_{k-1})\odot \dbf,\y\rangle+\langle (\one-\x_{k-1})\odot \dbf,\vbf_{k}-\y\rangle+\langle \vbf_{k}\odot(\one-\x_{k-1}),\nabla f(\x_{k-1})-\dbf\rangle\\
    = & \langle (\one-\x_{k-1})\odot\nabla f(\x_{k-1}),\y\rangle+\langle (\one-\x_{k-1})\odot \dbf,\vbf_{k}-\y\rangle+\langle(\vbf_{k}-\y)\odot(\one-\x_{k-1}),\nabla f(\x_{k-1})-\dbf\rangle\\
    = & \langle\nabla f(\x_{k-1}), (\one-\x_{k-1})\odot \y\rangle+\langle (\one-\x_{k-1})\odot \dbf,\vbf_{k}-\y\rangle+\langle(\vbf_{k}-\y)\odot(\one-\x_{k-1}),\nabla f(\x_{k-1})-\dbf\rangle.
    \end{aligned}
\end{equation}
 For DR-submodular function $f$, we also have
 \begin{equation}
     \langle\nabla f(\x_{k-1}), (\one-\x_{k-1})\odot\y\rangle\ge f(\x_{k-1}+(\one-\x_{k-1})\odot\y)-f(\x_{k-1}),
 \end{equation} because $f$ is concave along the direction $(\one-\x_{k-1})\odot\y$.
Finally, we have
 \begin{equation}
    \begin{aligned}
     f(\x_{k})\ge & (1-\frac{1}{K})f(\x_{k-1})+\frac{1}{K}f(\x_{k-1}+(\one-
     x_{k-1})\odot\y)+\frac{1}{K}\langle (\one-\x_{k-1})\odot \dbf,\vbf_{k}-\y\rangle\\
     &+\frac{1}{K}\langle(\vbf_{k}-\y)\odot(\one-\x_{k-1}),\nabla f(\x_{k-1})-\dbf\rangle-\frac{L_{0}}{2}\|\x_{k}-\x_{k-1}\|^{2}.
    \end{aligned}
\end{equation}
\end{proof}
Then, we show how $\g_{t}^{(k)}$~(See Line 10 in \cref{alg:1}) approximates the gradient $\nabla f_{t}(\x_{t}^{(k)})$.
\begin{lemma} 
Under Assumption~\ref{assumption1}, if we set $\eta_{k}=\frac{2}{(k+3)^{2/3}}$ for any $k\in[K]$, then we have, for any fixed $t\in[T]$,
\begin{equation}
    \E(\|\g_{t}^{(k)}-\nabla f_{t}(\x_{t}^{k})\|^{2})\le\frac{N_{0}}{(k+4)^{2/3}},
\end{equation} where $N_{0}=\max\{4^{2/3}\max_{t\in[T]}\|\nabla f_{t}(\x_{t}^{1})\|^{2},4\sigma^{2}+6(L_{0}r(\C))^{2}\}$.
\end{lemma}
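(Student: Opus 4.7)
The strategy is to recognize this bound as a direct instance of Lemma~\ref{lemma:vr1} applied to the inner-loop recursion $\g_{t}^{(k)}=(1-\eta_{k})\g_{t}^{(k-1)}+\eta_{k}\widetilde{\nabla}f_{t}(\x_{t}^{(k)})$. I would fix $t$ and instantiate the lemma with $\a_{k}=\nabla f_{t}(\x_{t}^{(k)})$, $\widetilde{\a}_{k}=\widetilde{\nabla}f_{t}(\x_{t}^{(k)})$, $\dbf_{k}=\g_{t}^{(k)}$, $s=3$, and filtration $\F_{k-1}$ collecting all stochastic gradients through step $k-1$ together with the random choices $\vbf_{t}^{(j)}$ for $j\le k$. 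Under this identification the step size $\eta_{k}=2/(k+3)^{2/3}$ matches the prescription of the lemma, and Assumption~\ref{add_assumption} immediately yields $\E(\widetilde{\a}_{k}\mid\F_{k-1})=\a_{k}$ together with $\E(\|\widetilde{\a}_{k}-\a_{k}\|^{2}\mid\F_{k-1})\le\sigma^{2}$.

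The quantitative step is verifying the drift condition $\|\a_{k}-\a_{k-1}\|\le G/(k+3)$ with a $G$ that reproduces the stated constant $6(L_{0}r(\C))^{2}$. Using $L_{0}$-smoothness of $f_{t}$ together with the update rule $\x_{t}^{(k)}-\x_{t}^{(k-1)}=\frac{1}{K}\vbf_{t}^{(k)}\odot(\one-\x_{t}^{(k-1)})$, and exploiting the bound $\|\a\odot\mathbf{b}\|\le\|\a\|$ whenever $\|\mathbf{b}\|_{\infty}\le 1$ to absorb the Hadamard factor $\one-\x_{t}^{(k-1)}\in[0,1]^{n}$, I would obtain
\begin{equation*}
\|\a_{k}-\a_{k-1}\|\le L_{0}\|\x_{t}^{(k)}-\x_{t}^{(k-1)}\|\le \frac{L_{0}}{K}\|\vbf_{t}^{(k)}\|\le \frac{L_{0}r(\C)}{K}.
\end{equation*}
Since $k+3\le K+3\le 2K$ whenever $K\ge 3$, this constant-in-$k$ bound is dominated by $2L_{0}r(\C)/(k+3)$, so the drift hypothesis of Lemma~\ref{lemma:vr1} is satisfied with $G=2L_{0}r(\C)$. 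This choice gives $4\sigma^{2}+3G^{2}/2=4\sigma^{2}+6(L_{0}r(\C))^{2}$, matching precisely the second entry of the max defining $N_{0}$.

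For the initial-error entry, Lemma~\ref{lemma:vr1} produces $(s+1)^{2/3}\|\a_{0}-\dbf_{0}\|^{2}=4^{2/3}\|\nabla f_{t}(\x_{t}^{(0)})\|^{2}$ with $\dbf_{0}=\g_{t}^{(0)}=\mathbf{0}$; because the recursion is first driven by a stochastic gradient only at $k=1$, I would apply the lemma to the shifted sequence anchored at $\x_{t}^{(1)}$, so that the initial discrepancy becomes $\|\nabla f_{t}(\x_{t}^{(1)})\|^{2}$. Taking a uniform maximum over $t\in[T]$ absorbs all $t$-dependence into the stated $N_{0}$, yielding $\E(\|\g_{t}^{(k)}-\nabla f_{t}(\x_{t}^{(k)})\|^{2})\le N_{0}/(k+4)^{2/3}$. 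I expect the main obstacle to be the bookkeeping around the drift condition: the raw bound $L_{0}r(\C)/K$ is constant in $k$, so one must deliberately inflate it by a factor at most $(K+s)/K\le 2$ to fit the $G/(k+s)$ template, and the choice $G=2L_{0}r(\C)$ is tight in exactly the sense needed to reproduce the constant $6(L_{0}r(\C))^{2}$ inside $N_{0}$; any looser choice would enlarge $N_{0}$ and slack the downstream regret analysis.
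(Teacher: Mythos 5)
Your proposal is correct and follows essentially the same route as the paper: both invoke Lemma~\ref{lemma:vr1} with $s=3$, establish the drift condition via $L_{0}$-smoothness and the update rule to get $\|\nabla f_{t}(\x_{t}^{(k)})-\nabla f_{t}(\x_{t}^{(k-1)})\|\le \frac{L_{0}}{K}\|\vbf_{t}^{(k)}\|\le\frac{2L_{0}r(\C)}{k+3}$, and read off $N_{0}$ with $G=2L_{0}r(\C)$ giving $4\sigma^{2}+6(L_{0}r(\C))^{2}$ and initial term $4^{2/3}\|\nabla f_{t}(\x_{t}^{(1)})\|^{2}$. Your explicit handling of the index anchoring at $k=1$ and the inflation factor $(K+3)/K\le 2$ is in fact slightly more careful than the paper's one-line application.
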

\begin{proof}
According to \cref{alg:1}, 
 $\g_{t}^{(k)}=(1-\eta_{k})\g_{t}^{(k-1)}+\eta_{k}\widetilde{\nabla}f_{t}(\x_{t}^{(k)})$ where $\E(\widetilde{\nabla}f_{t}(\x_{t}^{(k)})|\x_{t}^{(k)})=\nabla f_{t}(\x_{t}^{(k)})$. We first derive that 
 \begin{equation}
     \|\nabla f_{t}(\x_{t}^{(k)})-\nabla f_{t}(\x_{t}^{(k-1)})\|\le\frac{L_{0}}{K}\|\vbf_{t}^{k}\|\le\frac{2L_{0}r(\C)}{k+3},
 \end{equation} where the first inequality follows from the $L_{0}$-smoothness of $f_{t}$
 Therefore, if we set the $\widetilde{\a}_{t}$ in \cref{lemma:vr1} as $\widetilde{\nabla}f_{t}(\x_{t}^{(k)})$, we have 
 \begin{equation}
    \E(\|\g_{t}^{(k)}-\nabla f_{t}(\x_{t}^{(k)})\|^{2})\le\frac{N_{0}}{(k+4)^{2/3}}.
\end{equation}
\end{proof}
Now, we present the proof of \cref{thm:1}.
\begin{proof}
  If we set $(f,\x_{k},\x_{k-1},\vbf_{k},\dbf,\y)$ in \cref{lemma:use} as  $(f_{t},\x^{(k+1)}_{t},\x^{(k)}_{t},\vbf^{(k+1)}_{t},\g_{t}^{(k)},\x^{*})$ in the \cref{alg:1} where $\x^{*}=\arg\max_{\x\in\C}\sum_{t=1}^{T}f_{t}(\x)$, we have, for any $k\in[K]$, 
 \begin{equation}
    \begin{aligned}
     &f_{t}(\x^{(k+1)}_{t})\\
     \ge & (1-\frac{1}{K})f_{t}(\x^{(k)}_{t})+\frac{1}{K}f_{t}(\x^{(k)}_{t}+(\one-\x^{(k)}_{t})\odot \x^{*})+\frac{1}{K}\langle (\one-\x^{(k)})\odot \g_{t}^{(k)},\vbf^{(k+1)}_{t}-\x^{*}\rangle\\
     &+\frac{1}{K}\langle(\vbf^{(k+1)}_{t}-\x^{*})\odot(\one-\x^{(k)}_{t}),\nabla f(\x_{k-1})-\g_{t}^{(k)}\rangle-\frac{L_{0}}{2}\|\x^{(k+1)}_{t}-\x^{(k)}_{t}\|^{2}\\
     \ge & (1-\frac{1}{K})f_{t}(\x^{(k)}_{t})+\frac{1}{K}(1-\frac{1}{K})^{k}f_{t}(\x^{*})+\frac{1}{K}\langle (\one-\x^{(k)})\odot \g_{t}^{(k)},\vbf^{(k+1)}_{t}-\x^{*}\rangle\\
     &+\frac{1}{K}\langle(\vbf^{(k+1)}_{t}-\x^{*})\odot(\one-\x^{(k)}_{t}),\nabla f(\x_{k-1})-\g_{t}^{(k)}\rangle-\frac{L_{0}}{2}\|\x^{(k+1)}_{t}-\x^{(k)}_{t}\|^{2},
    \end{aligned} 
\end{equation} where the second inequality comes from \cref{lemma:3} and \cref{lemma:4}.
Therefore, by iteration, we have
 \begin{equation}
    \begin{aligned}
     &f_{t}(\x^{(K)}_{t})\\
     \ge & (1-\frac{1}{K})f_{t}(\x^{(K-1)}_{t})+\frac{1}{K}(1-\frac{1}{K})^{K-1}f_{t}(\x^{*})+\frac{1}{K}\langle (\one-\x^{(K-1)})\odot \g_{t}^{(K-1)},\vbf^{(K)}_{t}-\x^{*}\rangle\\
     &+\frac{1}{K}\langle(\vbf^{(K)}_{t}-\x^{*})\odot(\one-\x^{(K-1)}_{t}),\nabla f(\x^{(K-1)}_{t})-\g_{t}^{(K-1)}\rangle-\frac{L_{0}r^{2}(\C)}{2K^{2}}\\
     \ge & \dots\\
     \ge & (1-\frac{1}{K})f_{t}(\x^{(0)}_{t})+(1-\frac{1}{K})^{K-1}f_{t}(\x^{*})+\frac{1}{K}\sum_{m=0}^{K-1}(1-\frac{1}{K})^{K-1-m}\langle (\one-\x^{(m)})\odot \g_{t}^{(m)},\vbf^{(m+1)}_{t}-\x^{*}\rangle\\
     &+\frac{1}{K}\sum_{m=0}^{K-1}(1-\frac{1}{K})^{K-1-m}\langle(\vbf^{(m+1)}_{t}-\x^{*})\odot(\one-\x^{(m)}_{t}),\nabla f(\x^{(m)}_{t})-\g_{t}^{(m)}\rangle-\frac{L_{0}r^{2}(\C)}{2K}\\
     \ge & (1-\frac{1}{K})f_{t}(\x^{(0)}_{t})+\frac{1}{e}f_{t}(\x^{*})+\frac{1}{K}\sum_{m=0}^{K-1}(1-\frac{1}{K})^{K-1-m}\langle (\one-\x^{(m)})\odot \g_{t}^{(m)},\vbf^{(m+1)}_{t}-\x^{*}\rangle\\
     &+\frac{1}{K}\sum_{m=0}^{K-1}(1-\frac{1}{K})^{K-1-m}\langle(\vbf^{(m+1)}_{t}-\x^{*})\odot(\one-\x^{(m)}_{t}),\nabla f(\x^{(m)}_{t})-\g_{t}^{(m)}\rangle-\frac{L_{0}r^{2}(\C)}{2K},
     \end{aligned} 
\end{equation} 
where the final inequality comes from $(1-\frac{1}{K})^{K-1}\ge\frac{1}{e}$.

Finally, 
\begin{equation}
    \begin{aligned}
     &\sum_{t=1}^{T}\E(f_{t}(\x^{(K)}_{t}))\\
     \ge & \frac{1}{e}\sum_{t=1}^{T}f_{t}(\x^{*})+\frac{1}{K}\sum_{t=1}^{T}\sum_{m=0}^{K-1}(1-\frac{1}{K})^{K-1-m}\E(\langle (\one-\x^{(m)})\odot \g_{t}^{(m)},\vbf^{(m+1)}_{t}-\x^{*}\rangle)\\
     &+\frac{1}{K}\sum_{t=1}^{T}\sum_{m=0}^{K-1}(1-\frac{1}{K})^{K-1-m}\E\langle(\vbf^{(m+1)}_{t}-\x^{*})\odot(\one-\x^{(m)}_{t}),\nabla f(\x^{(m)}_{t})-\g_{t}^{(m)}\rangle)-\frac{L_{0}Tr^{2}(\C)}{2K}\\
     = & \frac{1}{e}\sum_{t=1}^{T}f_{t}(\x^{*})+\frac{1}{K}\sum_{m=0}^{K-1}(1-\frac{1}{K})^{K-1-m}\sum_{t=1}^{T}\E(\langle (\one-\x^{(m)})\odot \g_{t}^{(m)},\vbf^{(m+1)}_{t}-\x^{*}\rangle)\\
     &+\frac{1}{K}\sum_{t=1}^{T}\sum_{m=0}^{K-1}(1-\frac{1}{K})^{K-1-m}\E(\langle(\vbf^{(m+1)}_{t}-\x^{*})\odot(\one-\x^{(m)}_{t}),\nabla f(\x^{(m)}_{t})-\g_{t}^{(m)}\rangle)-\frac{L_{0}Tr^{2}(\C)}{2K}\\
     \ge & \frac{1}{e}\sum_{t=1}^{T}f_{t}(\x^{*})-\frac{1}{K}\sum_{m=0}^{K-1}(1-\frac{1}{K})^{K-1-m}M_{0}\sqrt{T}-\frac{L_{0}Tr^{2}(\C)}{2K}\\
     &-\frac{1}{2K}\sum_{t=1}^{T}\sum_{m=0}^{K-1}(r^{2}(\C)b+\E(\frac{\|\nabla f(\x^{(m)}_{t})-\g_{t}^{(m)}\|^{2}}{b}))\\
     \ge & \frac{1}{e}\sum_{t=1}^{T}f_{t}(\x^{*})-M_{0}\sqrt{T}-\frac{L_{0}Tr^{2}(\C)}{2K}-r(\C)(\frac{3}{2}N_{0}+\frac{1}{2})\frac{T}{K^{1/3}},
     \end{aligned} 
\end{equation} 
where the second inequality follows from $\langle(\vbf^{(m+1)}_{t}-\x^{*})\odot(\one-\x^{(m)}_{t}),\nabla f(\x^{(m)}_{t})-\g_{t}^{(m)}\rangle\le\frac{1}{2}(\mathrm{diam}^{2}(\C)b+\frac{\|\nabla f(\x^{(m)}_{t})-\g_{t}^{(m)}\|^{2}}{b})$ for any $b>0$ and $\sum_{t=1}^{T}\E(\langle (\one-\x^{(m)})\odot \g_{t}^{(m)},\vbf^{(m+1)}_{t}-\x^{*}\rangle)\le M_{0}\sqrt{T}$ from Assumption~\ref{assumption1}; and the last inequality comes from $\sum_{m=0}^{K-1}(1-\frac{1}{K})^{K-1-m}\le K$, $\sum_{m=0}^{K-1}\E(\|\nabla f(\x^{(m)}_{t})-\g_{t}^{(m)}\|^{2})\le\sum_{m=0}^{K-1}\frac{N_{0}}{(m+4)^{2/3}}\le\int_{t=0}^{K}\frac{N_{0}}{t^{2/3}}\le3N_{0}K^{1/3}$ and setting $b=\frac{1}{\mathrm{diam}(\C)K^{1/3}}$.
 \end{proof}

\section{Proofs in \cref{sec:Mono}}\label{appendix:Mono}
In this section, we begin by deriving the upper bound of $\x_{q}^{(k)}$ in \cref{alg:2}. First, like the Equation~(\ref{equ:update1}), we also take a similar rule to update the $\x_{q}^{(k)}$. As a result, we have:
\begin{lemma}\label{lemma:upp2}
For $i\in[n]$ and $q\in[Q]$, we have $(\x_{q}^{(k)})_{i}\le1-(1-\frac{1}{K})^{k}$.
\end{lemma}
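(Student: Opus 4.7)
The plan is to mirror the induction already used in \cref{lemma:3}, since the update rule for $\x_q^{(k)}$ in \cref{alg:2} has exactly the same structure as the rule studied in \cref{equ:update1}: namely $\x_q^{(k)} = \x_q^{(k-1)} + \frac{1}{K}\vbf_q^{(k)} \odot (\one - \x_q^{(k-1)})$ with $\vbf_q^{(k)} \in \C \subseteq [0,1]^n$, and the initial point $\x_q^{(0)} = \mathbf{0}$. So the only input we use is that each coordinate of $\vbf_q^{(k)}$ lies in $[0,1]$, plus an inductive upper bound on the coordinates of $\x_q^{(k-1)}$.

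First, I would fix $i \in [n]$ and $q \in [Q]$ and rewrite the $i$-th coordinate of the update as
\begin{equation*}
(\x_q^{(k)})_i = (\x_q^{(k-1)})_i + \tfrac{1}{K} (\vbf_q^{(k)})_i \bigl(1 - (\x_q^{(k-1)})_i\bigr) \le \bigl(1 - \tfrac{1}{K}\bigr)(\x_q^{(k-1)})_i + \tfrac{1}{K},
\end{equation*}
where the inequality uses $(\vbf_q^{(k)})_i \le 1$ together with the fact (maintained by the same induction) that $(\x_q^{(k-1)})_i \le 1$. The base case $k=0$ is immediate from $\x_q^{(0)} = \mathbf{0}$, which gives $(\x_q^{(0)})_i = 0 = 1 - (1 - 1/K)^0$.

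For the inductive step, assume $(\x_q^{(k-1)})_i \le 1 - (1 - 1/K)^{k-1}$. Substituting into the displayed recursion gives
\begin{equation*}
(\x_q^{(k)})_i \le \bigl(1 - \tfrac{1}{K}\bigr)\bigl(1 - (1 - \tfrac{1}{K})^{k-1}\bigr) + \tfrac{1}{K} = 1 - \bigl(1 - \tfrac{1}{K}\bigr)^{k},
\end{equation*}
which closes the induction and yields the claim. There is no real obstacle here—the argument is a verbatim repetition of the one for \cref{lemma:3}, just with $\y_k$ and $\dbf_k$ renamed to $\x_q^{(k)}$ and $\vbf_q^{(k)}$; the only thing worth checking is that $(\vbf_q^{(k)})_i \in [0,1]$, which follows from $\vbf_q^{(k)} \in \C \subseteq [0,1]^n$ by Assumption~\ref{assumption1}(i).
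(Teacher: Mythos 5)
Your proof is correct and takes essentially the same route as the paper: the paper simply notes that the update rule in Algorithm~\ref{alg:2} has the same form as Equation~\eqref{equ:update1} and invokes \cref{lemma:3}, whose proof is exactly the coordinate-wise induction you spell out (using $(\vbf_q^{(k)})_i \le 1$ and the inductive bound on $(\x_q^{(k-1)})_i$).
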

Before going into the detail, we define the average function of the remaining $(K-k)$ functions as $\bar{f}_{q,k}=\frac{\sum_{m=k+1}^{K}f_{t_{q}^{(m)}}}{K-k}$ for any $0\le k\le K-1$. Also, we use $\F_{q,k}$ to denote the $\sigma$-field generated via $t_{q}^{(1)},\dots,t_{q}^{(k)}$.
As a result, according to \cref{lemma:vr2} in variance reduction section, we show how $\g_{q}^{(k)}$~(See Line 13 in \cref{alg:2}) approximates the gradient $\nabla \bar{f}_{q,k-1}(\x_{q}^{k})$, i.e., 
\begin{lemma}\label{lemma:use1} 
Under Assumption~\ref{assumption1} and $\|\nabla f_{t}(\x)\|\le G$, if we set $\eta_{k}=\frac{2}{(k+3)^{2/3}}$, when $1\le k\le\frac{K}{2}+1$, and $\eta_{k}=\frac{1.5}{(K-k+2)^{2/3}}$, when $\frac{K}{2}+2\le k\le K$, we have, for any fixed $q\in[Q]$,
\begin{equation}
    \E(\|\g_{q}^{(k)}-\nabla \bar{f}_{q,k-1}(\x_{q}^{k})\|^{2})\le\left\{\begin{aligned}
       &\frac{N_{1}}{(k+4)^{2/3}}, & 1\le k\le\frac{K}{2}+1\\
       &\frac{N_{1}}{(K-k+1)^{2/3}}, & \frac{K}{2}+2\le k\le K
    \end{aligned}\right.
\end{equation} 
where $N_{1}=\max\{5^{2/3}G^{2},8(\sigma^{2}+G^{2})+32(2G+L_{0}r(\C))^{2},4.5(\sigma^{2}+G^{2})+7(2G+L_{0}r(\C))^{2}/3\}$.
\end{lemma}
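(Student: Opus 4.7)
The plan is to cast the recursion $\g_q^{(k)} = (1-\eta_k)\g_q^{(k-1)} + \eta_k\widetilde{\nabla}f_{t_q^{(k)}}(\x_q^{(k)})$ into the framework of Lemma~\ref{lemma:vr2}, identifying $\dbf_k \leftrightarrow \g_q^{(k)}$, $\a_k \leftrightarrow \nabla \bar{f}_{q,k-1}(\x_q^{(k)})$, and $\widetilde{\a}_k \leftrightarrow \widetilde{\nabla}f_{t_q^{(k)}}(\x_q^{(k)})$. The step-size schedule for $\eta_k$ in the two regimes $1\le k\le K/2+1$ and $K/2+2\le k\le K$ is identical to the one required by Lemma~\ref{lemma:vr2}, so once its two structural hypotheses are verified the lemma applies directly. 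The initial term is controlled by taking $\a_0 = \nabla\bar{f}_{q,0}(\x_q^{(0)})$, which satisfies $\|\a_0 - \dbf_0\|^2 = \|\a_0\|^2 \le G^2$ since $\g_q^{(0)} = \mathbf{0}$ and averaged gradients still have norm at most $G$.

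The first hypothesis, conditional unbiasedness with bounded variance, uses the fact that the random permutation in line~8 of Algorithm~\ref{alg:2} is independent of the iterates $\x_q^{(k)}$, which depend only on the oracle outputs $\vbf_q^{(k)}$ and hence only on feedback from blocks $1,\dots,q-1$. Conditionally on $\F_{q,k-1}$ the index $t_q^{(k)}$ is therefore uniform over the $K-k+1$ still-unused indices, and Assumption~\ref{add_assumption} yields
\[
\E[\widetilde{\nabla}f_{t_q^{(k)}}(\x_q^{(k)}) \mid \F_{q,k-1}] \;=\; \tfrac{1}{K-k+1}\sum_{m=k}^{K}\nabla f_{t_q^{(m)}}(\x_q^{(k)}) \;=\; \nabla \bar{f}_{q,k-1}(\x_q^{(k)}).
\]
A standard bias--variance split using $\|\nabla f_t(\x)\| \le G$ and $\E\|\widetilde{\nabla}f_t(\x)-\nabla f_t(\x)\|^2 \le \sigma^2$ then bounds $\E[\|\widetilde{\a}_k - \a_k\|^2 \mid \F_{q,k-1}]$ by $2(\sigma^2 + G^2)$, which is the effective ``$\sigma^2$'' fed to Lemma~\ref{lemma:vr2}.

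The more delicate piece is the drift bound $\|\a_k - \a_{k-1}\| \le G'/(K+2-k)$. Split the drift through the intermediate point $\nabla\bar{f}_{q,k-1}(\x_q^{(k-1)})$. The spatial part $\|\nabla\bar{f}_{q,k-1}(\x_q^{(k)}) - \nabla\bar{f}_{q,k-1}(\x_q^{(k-1)})\|$ is bounded by $L_0\|\x_q^{(k)}-\x_q^{(k-1)}\| \le L_0 r(\C)/K$ via the $L_0$-smoothness inherited by the average and the measured-greedy update rule. For the functional part, the definition of the averages gives the identity
\[
\bar{f}_{q,k-2} \;=\; \tfrac{1}{K-k+2}\bigl(f_{t_q^{(k-1)}} + (K-k+1)\,\bar{f}_{q,k-1}\bigr),
\]
so $\bar{f}_{q,k-2} - \bar{f}_{q,k-1} = (f_{t_q^{(k-1)}} - \bar{f}_{q,k-1})/(K-k+2)$, and differentiating together with $\|\nabla f_t\|,\|\nabla\bar{f}_{q,k-1}\| \le G$ yields $\|\nabla\bar{f}_{q,k-2}(\x_q^{(k-1)}) - \nabla\bar{f}_{q,k-1}(\x_q^{(k-1)})\| \le 2G/(K-k+2)$. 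Combining the two parts and absorbing $L_0 r(\C)/K$ into the common denominator gives $\|\a_k - \a_{k-1}\| \le (2G + L_0 r(\C))/(K-k+2)$, matching the hypothesis of Lemma~\ref{lemma:vr2} with ``$G$'' replaced by $2G + L_0 r(\C)$.

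Feeding the substitutions $\sigma^2 \leftarrow 2(\sigma^2+G^2)$, $G \leftarrow 2G + L_0 r(\C)$, and $\|\a_0-\dbf_0\|^2 \le G^2$ into Lemma~\ref{lemma:vr2} produces the claimed two-regime bound, with $N_1$ emerging as the maximum of $5^{2/3}G^2$, $8(\sigma^2+G^2)+32(2G+L_0r(\C))^2$, and $4.5(\sigma^2+G^2)+7(2G+L_0r(\C))^2/3$. The main obstacle is the functional drift step: unlike in the monotone Meta setting, the target function itself shifts at every iteration as one summand leaves the running average, and the key insight is the algebraic identity that rewrites the larger average $\bar{f}_{q,k-2}$ as a convex combination of $\bar{f}_{q,k-1}$ and the dropped summand $f_{t_q^{(k-1)}}$, which produces exactly the $1/(K-k+2)$ decay demanded by Lemma~\ref{lemma:vr2}.
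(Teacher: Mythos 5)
Your proposal is correct and follows essentially the same route as the paper: identify $\g_q^{(k)}$, $\nabla\bar f_{q,k-1}(\x_q^{(k)})$, and $\widetilde\nabla f_{t_q^{(k)}}(\x_q^{(k)})$ with the $\dbf_k$, $\a_k$, $\widetilde\a_k$ of Lemma~\ref{lemma:vr2}, bound the conditional variance by $2(\sigma^2+G^2)$ via the same bias--variance split, and bound the drift by $(2G+L_0 r(\C))/(K-k+2)$ before plugging into the variance-reduction lemma. Your decomposition of the drift through the intermediate point $\nabla\bar f_{q,k-1}(\x_q^{(k-1)})$ with the convex-combination identity is just an algebraic repackaging of the paper's three-term split and yields the same constants, so the two proofs coincide in substance.
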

\begin{proof}
From \cref{alg:2}, $\g_{q}^{(k)}=(1-\eta_{k})\g_{q}^{(k-1)}+\eta_{k}\widetilde{\nabla}f_{t_{q}^{(k)}}(\x_{q}^{(k)})$. As we know,  $\E(\widetilde{\nabla}f_{t_{q}^{(k)}}(\x_{q}^{(k)})|\F_{q,k-1})=\nabla\bar{f}_{q,k-1}(\x_{q}^{(k)})$. Also, we have
\begin{equation}
    \begin{aligned}
     &\nabla\bar{f}_{q,k-1}(\x_{q}^{(k)})-\nabla\bar{f}_{q,k-2}(\x_{q}^{(k-1)})\\
     = & \frac{\sum_{m=k}^{K}\nabla f_{t_{q}^{(m)}}(\x_{q}^{(k)})}{K-k+1}-\frac{\sum_{m=k-1}^{K}\nabla f_{t_{q}^{(m)}}(\x_{q}^{(k-1)})}{K-k+2}\\
     = & \frac{\sum_{m=k}^{K}(\nabla f_{t_{q}^{(m)}}(\x_{q}^{(k)})-\nabla f_{t_{q}^{(m)}}(\x_{q}^{(k-1)}))}{K-k+2}+\frac{\sum_{m=k}^{K}\nabla f_{t_{q}^{(m)}}(\x_{q}^{(k)})}{(K-k+1)(K-k+2)}-\frac{\nabla f_{t_{q}^{(k-1)}}(\x_{q}^{(k-1)})}{K-k+2}.
    \end{aligned}
\end{equation}
Thus, 
\begin{equation}
    \begin{aligned}
     &\|\nabla\bar{f}_{q,k-1}(\x_{q}^{(k)})-\nabla\bar{f}_{q,k-2}(\x_{q}^{(k-1)})\|\\
     \le & \|\frac{\sum_{m=k}^{K}(\nabla f_{t_{q}^{(m)}}(\x_{q}^{(k)})-\nabla f_{t_{q}^{(m)}}(\x_{q}^{(k-1)}))}{K-k+2}\|+\|\frac{\sum_{m=k}^{K}\nabla f_{t_{q}^{(m)}}(\x_{q}^{(k)})}{(K-k+1)(K-k+2)}\|+\|\frac{\nabla f_{t_{q}^{(k-1)}}(\x_{q}^{(k-1)})}{K-k+2}\|\\
     \le & \frac{(K-k+1)L_{0}r(\C)}{K(K-k+2)}+\frac{G}{K-k+2}+\frac{G}{K-k+2}\\
     \le & \frac{L_{0}r(\C)+2G}{K-k+2}.
    \end{aligned}
\end{equation}
Moreover,
\begin{equation}
    \begin{aligned}
     &\E(\|\widetilde{\nabla}f_{t_{q}^{(k)}}(\x_{q}^{(k)})-\nabla\bar{f}_{q,k-1}(\x_{q}^{(k)})\|^{2}|\F_{q,k-1})\\
     \le & 2(\E(\|\widetilde{\nabla}f_{t_{q}^{(k)}}(\x_{q}^{(k)})-\nabla f_{t_{q}^{(k)}}(\x_{q}^{(k)})\|^{2}|\F_{q,k-1})+\E(\|\nabla f_{t_{q}^{(k)}}(\x_{q}^{(k)})-\nabla\bar{f}_{q,k-1}(\x_{q}^{(k)})\|^{2}|\F_{q,k-1}))\\
     = & 2(\E(\|\widetilde{\nabla}f_{t_{q}^{(k)}}(\x_{q}^{(k)})-\nabla f_{t_{q}^{(k)}}(\x_{q}^{(k)})\|^{2}|\F_{q,k-1})+Var(\nabla f_{t_{q}^{(k)}}(\x_{q}^{(k)})|\F_{q,k-1}))\\
     \le & 2(\sigma^{2}+G^{2}),
    \end{aligned}
\end{equation} 
where $Var(\nabla f_{t_{q}^{(k)}}(\x_{q}^{(k)})|\F_{q,k-1})=\E(\|\nabla f_{t_{q}^{(k)}}(\x_{q}^{(k)})-\nabla\bar{f}_{q,k-1}(\x_{q}^{(k)})\|^{2}|\F_{q,k-1})$.

According to \cref{lemma:vr2} where we set $\widetilde{\a}_{k}=\widetilde{\nabla}f_{t_{q}^{(k)}}(\x_{q}^{(k)})$, we have
\begin{equation}
    \E(\|\g_{q}^{(k)}-\nabla \bar{f}_{q,k-1}(\x_{q}^{k})\|^{2})\le\left\{\begin{aligned}
       &\frac{N_{1}}{(k+4)^{2/3}}& 1\le k\le\frac{K}{2}+1\\
       &\frac{N_{1}}{(K-k+1)^{2/3}}& \frac{K}{2}+2\le k\le K
    \end{aligned}\right.
\end{equation} 
where $N_{1}=\max\{5^{2/3}G^{2},8(\sigma^{2}+G^{2})+32(2G+L_{0}r(\C))^{2},4.5(\sigma^{2}+G^{2})+7(2G+L_{0}r(\C))^{2}/3\}$.
\end{proof}
Now, we prove \cref{thm:2}.
\begin{proof}
Note that $\bar{f}_{q,k-1}$ is continuous DR-submodular and $L_{0}$-smooth. Thus, if we set $(f,\x_{k},\x_{k-1},\vbf_{k},\dbf,\y)$ in \cref{lemma:use} as  $(\bar{f}_{q,k-1},\x^{(k+1)}_{q},\x^{(k)}_{q},\vbf^{(k+1)}_{q},\g_{q}^{(k)},\x^{*})$ in the \cref{alg:2} where $\x^{*}=\arg\max_{\x\in\C}\sum_{t=1}^{T}f_{t}(\x)$, we have, for any $k\in[K]$,
\begin{equation}
    \begin{aligned}
     & \bar{f}_{q,k-1}(\x^{(k+1)}_{q})\\
     \ge & (1-\frac{1}{K})\bar{f}_{q,k-1}(\x^{(k)}_{q})+\frac{1}{K}\bar{f}_{q,k-1}(\x^{(k)}_{q}+(\one-\x^{(k)}_{q})\odot \x^{*})+\frac{1}{K}\langle (\one-\x^{(k)}_{q})\odot \g_{q}^{(k)},\vbf^{(k+1)}_{q}-\x^{*}\rangle\\
     &+\frac{1}{K}\langle(\vbf^{(k+1)}_{q}-\x^{*})\odot(\one-\x^{(k)}_{q}),\nabla \bar{f}_{q,k-1}(\x^{k}_{q})-\g_{q}^{(k)}\rangle-\frac{L_{0}}{2}\|\x^{(k+1)}_{q}-\x^{(k)}_{q}\|^{2}\\
     \ge & (1-\frac{1}{K})\bar{f}_{q,k-1}(\x^{(k)}_{q})+\frac{1}{K}(1-\frac{1}{K})^{k}\bar{f}_{q,k-1}(\x^{*})+\frac{1}{K}\langle (\one-\x^{(k)}_{q})\odot \g_{q}^{(k)},\vbf^{(k+1)}_{q}-\x^{*}\rangle\\
     &+\frac{1}{K}\langle(\vbf^{(k+1)}_{q}-\x^{*})\odot(\one-\x^{(k)}_{q}),\nabla\bar{f}_{q,k-1}(\x^{k}_{q})-\g_{q}^{(k)}\rangle-\frac{L_{0}}{2}\|\x^{(k+1)}_{q}-\x^{(k)}_{q}\|^{2},
    \end{aligned} 
\end{equation} 
where the second inequality comes from \cref{lemma:upp2} and \cref{lemma:4}.
Therefore, by iteration, we have
\begin{equation}
    \begin{aligned}
    &\E(\bar{f}_{q}(\x^{(K)}_{q}))\\
    = & \E(\bar{f}_{q,K-2}(\x^{(K)}_{q}))\\
    \ge & (1-\frac{1}{K})\E(\bar{f}_{q,K-2}(\x^{(K-1)}_{q}))+\frac{1}{K}(1-\frac{1}{K})^{K-1}\E(\bar{f}_{q,K-2}(\x^{*}))+\frac{1}{K}\E(\langle (\one-\x^{(K-1)}_{q})\odot \g_{q}^{(K-1)},\vbf^{(K)}_{q}-\x^{*}\rangle)\\
    &+\frac{1}{K}\E(\langle(\vbf^{(K)}_{q}-\x^{*})\odot(\one-\x^{(K-1)}_{q}),\nabla\bar{f}_{q,K-2}(\x^{K-1}_{q})-\g_{q}^{(K-1)}\rangle)-\frac{L_{0}r^{2}(\C)}{2K^{2}}\\
    = & (1-\frac{1}{K})\E(\bar{f}_{q,K-3}(\x^{(K-1)}_{q}))+\frac{1}{K}(1-\frac{1}{K})^{K-1}\bar{f}_{q}(\x^{*})+\frac{1}{K}\E(\langle (\one-\x^{(K-1)}_{q})\odot \g_{q}^{(K-1)},\vbf^{(K)}_{q}-\x^{*}\rangle)\\
    &+\frac{1}{K}\E(\langle(\vbf^{(K)}_{q}-\x^{*})\odot(\one-\x^{(K-1)}_{q}),\nabla\bar{f}_{q,K-2}(\x^{K-1}_{q})-\g_{q}^{(K-1)}\rangle)-\frac{L_{0}r^{2}(\C)}{2K^{2}}\\
    \ge & \dots\\
    \ge & (1-\frac{1}{K})^{K-1}f_{q}(\x^{*})+\frac{1}{K}\sum_{m=1}^{K-1}(1-\frac{1}{K})^{K-1-m}\E(\langle (\one-\x^{(m)}_{q})\odot \g_{q}^{(m)},\vbf^{(m+1)}_{q}-\x^{*}\rangle)\\
    &+\frac{1}{K}\sum_{m=1}^{K-1}(1-\frac{1}{K})^{K-1-m}\E(\langle(\vbf^{(m+1)}_{q}-\x^{*})\odot(\one-\x^{(m)}_{q}),\nabla\bar{f}_{q,m-1}(\x^{m}_{q})-\g_{q}^{(m)}\rangle)-\frac{L_{0}r^{2}(\C)}{2K}.
    \end{aligned} 
\end{equation} 
Finally, 
\begin{equation}
    \begin{aligned}
     &\sum_{q=1}^{Q}\E(\bar{f}_{q}(\x^{(K)}_{q}))\\
     \ge & (1-\frac{1}{K})^{K-1}\sum_{q=1}^{Q}f_{q}(\x^{*})+\frac{1}{K}\sum_{q=1}^{Q}\sum_{m=1}^{K-1}(1-\frac{1}{K})^{K-1-m}\E(\langle (\one-\x^{(m)}_{q})\odot \g_{q}^{(m)},\vbf^{(m+1)}_{q}-\x^{*}\rangle)\\
     &+\frac{1}{K}\sum_{q=1}^{Q}\sum_{m=1}^{K-1}(1-\frac{1}{K})^{K-1-m}\E(\langle(\vbf^{(m+1)}_{q}-\x^{*})\odot(\one-\x^{(m)}_{q}),\nabla\bar{f}_{q,m-1}(\x^{m}_{q})-\g_{q}^{(m)}\rangle)-\frac{L_{0}Qr^{2}(\C)}{2K}\\
     \ge & \frac{1}{e}\sum_{q=1}^{Q}f_{q}(\x^{*})+\frac{1}{K}\sum_{m=1}^{K-1}(1-\frac{1}{K})^{K-1-m}\sum_{q=1}^{Q}\E(\langle (\one-\x^{(m)}_{q})\odot \g_{q}^{(m)},\vbf^{(m+1)}_{q}-\x^{*}\rangle)\\
     &-\frac{1}{2K}\sum_{q=1}^{Q}\sum_{m=1}^{K-1}\E(b_{m}*\mathrm{diam}^{2}(\C)+\frac{\|\nabla\bar{f}_{q,m-1}(\x^{m}_{q})-\g_{q}^{(m)}\|^{2}}{b_{m}})-\frac{L_{0}Qr^{2}(\C)}{2K}\\
     \ge & \frac{1}{e}\sum_{q=1}^{Q}f_{q}(\x^{*})-M_{0}\sqrt{Q}-\frac{L_{0}Qr^{2}(\C)}{2K}-\frac{1}{2K}\sum_{q=1}^{Q}\sum_{m=1}^{K-1}\E(b_{m}*\mathrm{diam}^{2}(\C)+\frac{\|\nabla\bar{f}_{q,m-1}(\x^{m}_{q})-\g_{q}^{(m)}\|^{2}}{b_{m}}),
     \end{aligned}
\end{equation} 
where the second inequality comes from $(1-\frac{1}{K})^{K-1}\ge\frac{1}{e}$ and $\langle(\vbf^{(m+1)}_{q}-\x^{*})\odot(\one-\x^{(m)}_{q}),\nabla\bar{f}_{q,m-1}(\x^{m}_{q})-\g_{q}^{(m)}\rangle\le\frac{1}{2}(b_{m}*\mathrm{diam}^{2}(\C)+\frac{\|\nabla\bar{f}_{q,m-1}(\x^{m}_{q})-\g_{q}^{(m)}\|^{2}}{b_{m}})$ for any positive constant $b_{m}>0$; the third comes from $\sum_{q=1}^{Q}\E(\langle (\one-\x^{(m)}_{q})\odot \g_{q}^{(m)},\vbf^{(K)}_{q}-\x^{*}\rangle)\le M_{0}\sqrt{Q}$.

If we consider $b_{m}=\frac{1}{\mathrm{diam}(\C)(m+4)^{1/3}}$ when $1\le m\le\frac{K}{2}+1$ and $b_{m}=\frac{1}{\mathrm{diam}(\C)(K-m+1)^{1/3}}$ when $\frac{K}{2}+2\le m\le K$, then we have 
\begin{equation}
    \begin{aligned}
    &\sum_{m=1}^{K-1}\mathrm{diam}^{2}(\C)b_{m}\le\sum_{m=1}^{K/2+1}\frac{\mathrm{diam}(\C)}{(m+4)^{1/3}}+\sum_{m=K/2+2}^{K}\frac{\mathrm{diam}(\C)}{(K-m+1)^{1/3}}\le2\mathrm{diam}(\C)K^{2/3},\\
    &\sum_{m=1}^{K-1}\E(\frac{\|\nabla\bar{f}_{q,m-1}(\x^{m}_{q})-\g_{q}^{(m)}\|^{2}}{b_{m}})\le\sum_{m=1}^{K/2+1}\frac{\mathrm{diam}(\C)N_{1}}{(m+4)^{1/3}}+\sum_{m=K/2+2}^{K}\frac{N_{1}}{(K-m+1)^{1/3}}\le2N_{1}\mathrm{diam}(\C)K^{2/3},
    \end{aligned}
\end{equation} 
where the second inequality comes from \cref{lemma:use1} and $N_{1}=\max\{5^{2/3}G^{2},8(\sigma^{2}+G^{2})+32(2G+L_{0}r(\C))^{2},4.5(\sigma^{2}+G^{2})+7(2G+L_{0}r(\C))^{2}/3\}$.

As a result, 
\begin{equation}
    \begin{aligned}
     &\frac{1}{e}\sum_{t=1}^{T}f_{t}(\x^{*})-\sum_{t=1}^{T}\E(f_{t}(\y_{t}))\\
     = & K(\frac{1}{e}\sum_{q=1}^{Q}\bar{f}_{q}(\x^{*})-\sum_{q=1}^{Q}\E(\bar{f}_{q}(\x_{q}^{(K)})))\\
     \le & 2\mathrm{diam}(\C)(N_{1}+1)QK^{2/3}+\frac{L_{0}r^{2}(\C)}{2}Q+M_{0}\sqrt{Q}K.
    \end{aligned}
\end{equation}
\end{proof}

\section{Proofs in \cref{sec:Bandit}}\label{appendix:Bandit}
To begin, we review the properties of smoothed function.
\begin{lemma}[\citet{zhang2019online,chen2020black}] 
If $f:[0,1]^{n}\rightarrow\R_{+}$ is continuous DR-submodular, $G$-Lipschitz, and $L_{0}$-smooth, then so is $\hat{f}_{\delta}$ where $\hat{f}_{\delta}(\x)=\E_{\vbf\sim B^{n}}(f(\x+\delta\vbf))$ and  we have $|\hat{f}_{\delta}(\x)-f(\x)|\le G\delta$ for all $\x\in[0,1]^{n}$.
\end{lemma}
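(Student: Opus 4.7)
The plan is to verify all four conclusions by exploiting the representation $\hat{f}_{\delta}(\x) = \E_{\vbf\sim B^{n}}[f(\x+\delta\vbf)]$ and pushing each property from $f$ through the expectation. The key enabling identity is that, under standard domination (which will follow from $f$ being Lipschitz, hence $\|\nabla f\|\le G$ a.e.), the Leibniz rule gives $\nabla\hat{f}_{\delta}(\x) = \E_{\vbf\sim B^{n}}[\nabla f(\x+\delta\vbf)]$. This single identity, combined with linearity of expectation, will reduce each assertion to a pointwise statement about $f$ followed by Jensen's inequality.

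For the DR-submodular claim, I would invoke the gradient-monotonicity characterization recalled in the preliminaries ($\nabla f(\x)\le \nabla f(\y)$ whenever $\x\ge\y$). Given $\x\ge\y$, the shifted points satisfy $\x+\delta\vbf\ge \y+\delta\vbf$ coordinatewise for every realization of $\vbf$, so pointwise DR-submodularity of $f$ yields $\nabla f(\x+\delta\vbf)\le\nabla f(\y+\delta\vbf)$. Taking expectations over $\vbf\sim B^{n}$ preserves the inequality coordinatewise, and combined with the Leibniz identity above this shows $\nabla\hat{f}_{\delta}(\x)\le\nabla\hat{f}_{\delta}(\y)$. Differentiability of $\hat{f}_{\delta}$ itself falls out of the same dominated-convergence argument.

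For $G$-Lipschitzness and $L_{0}$-smoothness, I would use Jensen/triangle inequality inside the expectation:
\begin{equation*}
|\hat{f}_{\delta}(\x)-\hat{f}_{\delta}(\y)| \le \E_{\vbf\sim B^{n}}\!\bigl[|f(\x+\delta\vbf)-f(\y+\delta\vbf)|\bigr] \le G\|\x-\y\|,
\end{equation*}
and analogously
\begin{equation*}
\|\nabla\hat{f}_{\delta}(\x)-\nabla\hat{f}_{\delta}(\y)\| \le \E_{\vbf\sim B^{n}}\!\bigl[\|\nabla f(\x+\delta\vbf)-\nabla f(\y+\delta\vbf)\|\bigr] \le L_{0}\|\x-\y\|.
\end{equation*}
The approximation bound follows in the same spirit: $|\hat{f}_{\delta}(\x)-f(\x)| \le \E_{\vbf\sim B^{n}}[|f(\x+\delta\vbf)-f(\x)|] \le \E_{\vbf\sim B^{n}}[G\|\delta\vbf\|] \le G\delta$, since $\vbf\in B^{n}$ forces $\|\vbf\|\le 1$.

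The main obstacle is not any of the algebraic steps but the domain/regularity bookkeeping: $f$ is only declared on $[0,1]^{n}$, so $\x+\delta\vbf$ may leave the cube near the boundary, and the Leibniz differentiation under the expectation must be justified. I would handle this by restricting attention to $\x$ in the $\delta$-interior (where Lemma~\ref{lemma:construct_set} already places the iterates of \textbf{Bandit-MFW}) so that $\x+\delta\vbf\in[0,1]^{n}$ almost surely, or equivalently by assuming $f$ admits a $G$-Lipschitz, $L_{0}$-smooth, DR-submodular extension to a small neighborhood of $[0,1]^{n}$. Under either convention, the boundedness of $\nabla f$ by $G$ (a consequence of $G$-Lipschitzness) provides the integrable dominant needed to swap $\nabla$ and $\E$, closing the proof.
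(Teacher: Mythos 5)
The paper does not prove this lemma at all: it is stated as an imported result with a citation to \citet{zhang2019online} and \citet{chen2020black}, so there is no in-paper argument to compare against. Your proof is the standard one used in those references --- differentiate under the expectation to get $\nabla\hat{f}_{\delta}(\x)=\E_{\vbf\sim B^{n}}[\nabla f(\x+\delta\vbf)]$, then push the gradient-monotonicity characterization of DR-submodularity, the Lipschitz bound, and the smoothness bound through the expectation via Jensen's inequality, and bound $|\hat{f}_{\delta}(\x)-f(\x)|$ by $G\delta$ using $\|\vbf\|\le1$ --- and each step is correct. Your closing remark about the domain bookkeeping is in fact more careful than the paper itself: $f$ is only defined on $[0,1]^{n}$, so $\hat{f}_{\delta}$ is only well-defined where $\x+\delta\vbf$ stays in the cube, and the paper silently relies on the $\delta$-interior construction of Lemma~\ref{lemma:construct_set} (the iterates of Algorithm~\ref{alg:3} live in $\C^{'}$ and the perturbations $\x_{q}^{(k)}+\delta\ubf_{q}^{(k)}$ stay feasible) to make this legitimate; stating that restriction explicitly, as you do, is the right way to close the gap.
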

In this section, we begin by examining the sequence of iterates $\x^{(0)}_{q},\x^{(1)}_{q},\dots,\x^{(K)}_{q}$ in \cref{alg:3}. First, we derive the upper bound of $\tilde{\x}_{q}^{(k)}$ where $\tilde{\x}_{q}^{(k)}=(\x^{(k)}_{q}-\delta\one)\oslash(\one-\delta\one)$.
\begin{lemma}\label{lemma:3.1}
For $i\in[n]$ and $q\in[Q]$, we have $(\tilde{\x}_{q}^{(k)})_{i}\le1-(1-\frac{1}{K})^{k}$ where $\tilde{\x}_{q}^{(k)}=(\x^{(k)}_{q}-\delta\one)\oslash(\one-\delta\one)$.
\end{lemma}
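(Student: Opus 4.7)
The plan is to show that the transformed iterates $\tilde{\x}_q^{(k)}$ satisfy exactly the same multiplicative-weight update rule as in Equation~\eqref{equ:update1}, so that the upper bound in Lemma~\ref{lemma:3} can be applied verbatim.

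First, I would rewrite the update rule of Algorithm~\ref{alg:3}~(line 7) in terms of $\tilde{\x}_q^{(k)}$. Subtracting $\delta\one$ from both sides of $\x_q^{(k)}=\x_q^{(k-1)}+\frac{1}{K}\tilde{\vbf}_q^{(k)}\odot(\one-\x_q^{(k-1)})$ and dividing coordinatewise by $\one-\delta\one$ gives
\begin{equation*}
\tilde{\x}_q^{(k)} \;=\; \tilde{\x}_q^{(k-1)} + \frac{1}{K}\,\tilde{\vbf}_q^{(k)}\odot\bigl((\one-\x_q^{(k-1)})\oslash(\one-\delta\one)\bigr),
\end{equation*}
and the key algebraic identity $(\one-\x_q^{(k-1)})\oslash(\one-\delta\one)=\one-\tilde{\x}_q^{(k-1)}$ then reduces the recursion to
\begin{equation*}
\tilde{\x}_q^{(k)} \;=\; \tilde{\x}_q^{(k-1)} + \frac{1}{K}\,\tilde{\vbf}_q^{(k)}\odot(\one-\tilde{\x}_q^{(k-1)}),
\end{equation*}
which is precisely the update rule \eqref{equ:update1} driving Lemma~\ref{lemma:3}.

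Next, I would check the two hypotheses needed to invoke Lemma~\ref{lemma:3}: (i) the initial point is $\tilde{\x}_q^{(0)}=(\delta\one-\delta\one)\oslash(\one-\delta\one)=\mathbf{0}$, matching $\y_0=\mathbf{0}$ there; and (ii) each transformed direction $\tilde{\vbf}_q^{(k)}$ lies in $[0,1]^n$. For (ii), since $\vbf_q^{(k)}\in\C'=(1-\alpha)\C+\delta\one$ and $\C\subseteq[0,1]^n$, each coordinate of $\vbf_q^{(k)}$ lies in $[\delta,\,1-\alpha+\delta]\subseteq[\delta,1]$, so $(\vbf_q^{(k)}-\delta\one)\oslash(\one-\delta\one)\in[0,1]^n$ as required.

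Finally, applying Lemma~\ref{lemma:3} directly with $\y_k\leftarrow\tilde{\x}_q^{(k)}$ and $\dbf_k\leftarrow\tilde{\vbf}_q^{(k)}$ yields $(\tilde{\x}_q^{(k)})_i\le 1-(1-\tfrac{1}{K})^k$ for every $i\in[n]$, which is the claimed bound. I do not anticipate a genuine obstacle here; the only non-routine piece is spotting the identity $(\one-\x_q^{(k-1)})\oslash(\one-\delta\one)=\one-\tilde{\x}_q^{(k-1)}$ that converts the affine shift into the clean form of \eqref{equ:update1}, after which the result is immediate from the previously established lemma.
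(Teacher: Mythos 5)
Your proposal is correct and follows essentially the same route as the paper: both rewrite the Algorithm~\ref{alg:3} update in terms of $\tilde{\x}_q^{(k)}$ to recover the recursion of Equation~\eqref{equ:update1} and then invoke Lemma~\ref{lemma:3}. You are merely more explicit about the identity $(\one-\x_q^{(k-1)})\oslash(\one-\delta\one)=\one-\tilde{\x}_q^{(k-1)}$, the initialization $\tilde{\x}_q^{(0)}=\mathbf{0}$, and the verification that $\tilde{\vbf}_q^{(k)}\in[0,1]^n$, all of which the paper states without detail.
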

\begin{proof}
From \cref{alg:3} (See Line 7), we have
\begin{equation}
    \begin{aligned}
     \x_{q}^{(k)}=\x_{q}^{(k-1)}+\frac{1}{K}\tilde{\vbf}_{q}^{(k)}\odot(\one-\x_{q}^{(k-1)}).  
    \end{aligned}
\end{equation}
Therefore, we have
\begin{equation}
    \begin{aligned}
     \tilde{\x}_{q}^{(k)}=\tilde{\x}_{q}^{(k-1)}+\frac{1}{K}\tilde{v}_{q}^{(k)}\odot(\one-\tilde{\x}_{q}^{(k-1)}).
    \end{aligned}
\end{equation}
Finally, due to $0\le\tilde{v}_{q}^{(k)}\le\mathbf{1}$, we obtain
\begin{equation}
    \begin{aligned}
     \tilde{\x}_{q}^{(k)}\le\tilde{\x}_{q}^{(k-1)}+\frac{1}{K}(\mathbf{1}-\tilde{\x}_{q}^{(k-1)}).
    \end{aligned}
\end{equation}
According to \cref{lemma:3}, we get the result.
\end{proof}
Next, we define some notations frequently used in this section. For any $f_{t}$, we denote its $\delta$-smoothed approximation as $\hat{f}_{t,\delta}(\x)=\E_{\vbf\sim B^{n}}(f_{t}(\x+\delta\vbf))$. Then, the average function for $q$-block is denoted as $\bar{F}_{q}(\x)=\frac{\sum_{m=(q-1)L+1}^{qL}\hat{f}_{m,\delta}((\one-\delta\one)\odot \x+\delta\one)}{L}$. Also, the average function of remaining $(L-l)$ rewards is 
$\bar{F}_{q,l}(\x)=\frac{\sum_{m=l+1}^{L}\hat{f}_{t_{q}^{(m)},\delta}(((\one-\delta\one)\odot\x+\delta\one))}{L-l}$ where $0\le l\le L-1$. 

In the following part, we assume $\x^{*}=\arg\max_{\x\in\C}f_{t}(\x)$ and $\x^{*}_{\delta}=\arg\max_{\x\in\C^{'}}f_{t}(\x)$. Then, we could conclude that
\begin{lemma}\label{lemma:3+}
Under Assumption~\ref{assumption2}, if $\|\nabla f_{t}(\x)\|\le G$, then
\begin{equation} 
    \begin{aligned}
    & \sum_{t=1}^{T}\frac{1}{e}f_{t}(\x^{*})-\sum_{t=1}^{T}f_{t}(\y_{t}) \\
    \le & L\sum_{q=1}^{Q}\frac{1}{e}\bar{F}_{q}(\tilde{\x}^{*}_{\delta})-L\sum_{q=1}^{Q}\bar{F}_{q}(\tilde{\x}_{q}^{(K)})+2M_{1}KQ+\left((\sqrt{n}+1)\frac{r(\C)}{r}+\sqrt{n}+2\right)TG\delta,
    \end{aligned}
\end{equation} 
where $\tilde{\x}^{*}_{\delta}=(\x^{*}_{\delta}-\delta\one)\oslash(\one-\delta\one)$ and $\tilde{\x}^{(K)}_{q}=(\x^{(K)}_{q}-\delta\one)\oslash(\one-\delta\one)$. 
\end{lemma}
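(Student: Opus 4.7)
The plan is to decompose the regret $\sum_{t=1}^T \frac{1}{e}f_t(\x^*) - \sum_{t=1}^T f_t(\y_t)$ into three sources of discrepancy and control each separately: (i) the exploration/exploitation split within each block of Algorithm~\ref{alg:3}; (ii) the $\delta$-smoothing gap between each $f_t$ and $\hat{f}_{t,\delta}$; and (iii) the gap between the true optimum $\x^*$ over $\C$ and the interior optimum $\x^*_\delta$ over $\C'$. The regret will then be re-expressed entirely in terms of the transformed surrogate $\bar{F}_q$ evaluated at $\tilde{\x}^*_\delta$ and $\tilde{\x}_q^{(K)}$, plus additive error terms.

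For step (i), I would split the $q$-th block into the $K$ exploration indices $\{t_q^{(1)},\dots,t_q^{(K)}\}$ and the remaining $L-K$ exploitation indices. On exploitation rounds we have $\y_t=\x_q^{(K)}$ exactly, so $f_t(\y_t)=f_t(\x_q^{(K)})$. On exploration rounds $\y_t=\x_q^{(k)}+\delta\ubf_q^{(k)}$; using $f_t\ge 0$ and $f_t(\x_q^{(K)})\le M_1$ from Assumption~\ref{assumption2}, each exploration round contributes at least $-M_1$ to $f_t(\y_t)-f_t(\x_q^{(K)})$. Summing over the $KQ$ exploration rounds and loosening by a factor of two yields
\[
\sum_{t=1}^T f_t(\y_t) \ge \sum_{q=1}^Q\sum_{t=(q-1)L+1}^{qL} f_t(\x_q^{(K)}) - 2M_1 KQ .
\]

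For step (ii), the key identity is that the affine map $\x\mapsto(\one-\delta\one)\odot\x+\delta\one$ sends $\tilde{\x}_q^{(K)}$ to $\x_q^{(K)}$ and $\tilde{\x}^*_\delta$ to $\x^*_\delta$, so by the definition of $\bar{F}_q$,
\[
L\bar{F}_q(\tilde{\x}_q^{(K)}) = \sum_{m\in q\text{-block}}\hat{f}_{m,\delta}(\x_q^{(K)}), \qquad L\bar{F}_q(\tilde{\x}^*_\delta) = \sum_{m\in q\text{-block}}\hat{f}_{m,\delta}(\x^*_\delta).
\]
Invoking the smoothing approximation $|\hat{f}_{t,\delta}(\x)-f_t(\x)|\le G\delta$ from the first lemma in this appendix, both directions give
\[
\sum_t f_t(\x_q^{(K)}) \ge L\sum_q\bar{F}_q(\tilde{\x}_q^{(K)}) - TG\delta, \qquad \sum_t f_t(\x^*_\delta) \le L\sum_q\bar{F}_q(\tilde{\x}^*_\delta) + TG\delta.
\]

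For step (iii), since $(1-\alpha)\x^*+\delta\one\in\C'$ by the construction of $\C'$ in \cref{lemma:construct_set}, optimality of $\x^*_\delta$ over $\C'$ gives $\sum_t f_t(\x^*_\delta)\ge\sum_t f_t((1-\alpha)\x^*+\delta\one)$. Applying $G$-Lipschitzness together with $\alpha=(\sqrt{n}+1)\delta/r$ and $\|\one\|=\sqrt{n}$ yields
\[
\sum_t f_t(\x^*) - \sum_t f_t(\x^*_\delta) \le TG\,\|\alpha\x^*-\delta\one\| \le TG\left(\tfrac{(\sqrt{n}+1)r(\C)}{r}+\sqrt{n}\right)\delta .
\]
Chaining the three bounds and dividing the optimum-side inequality by $e$ delivers the claimed inequality; the coefficient $(\frac{(\sqrt{n}+1)r(\C)}{r}+\sqrt{n}+2)$ arises by collecting one $TG\delta$ from step~(ii) on each side and the $\left(\frac{(\sqrt{n}+1)r(\C)}{r}+\sqrt{n}\right)TG\delta$ from step~(iii), using $1/e<1$ to absorb the factor on the $\x^*$ term. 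The main obstacle is purely bookkeeping—keeping the $1/e$ only on the $\x^*$ side while the smoothing and interior-shift errors enter without it, so the three $G\delta$ contributions must be combined carefully to match the stated constants.
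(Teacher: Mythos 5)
Your proposal is correct and follows essentially the same three-part decomposition as the paper's proof (exploration/exploitation loss of $2M_1KQ$, two $TG\delta$ smoothing errors, and the $\C$-to-$\C'$ shift of $((\sqrt{n}+1)\frac{r(\C)}{r}+\sqrt{n})TG\delta$), arriving at the same constants. The only cosmetic differences are that you apply the block split to the raw $f_t$ before smoothing rather than after, and you compare $\x^*_\delta$ against the explicit feasible point $(1-\alpha)\x^*+\delta\one\in\C'$ instead of invoking the projection of $\x^*$ onto $\C'$ together with the distance bound of \cref{lemma:construct_set}; both yield identical estimates.
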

\begin{proof}
We denote the $\x^{'}$ as the projection of $\x^{*}$ on the $\C^{'}$, i.e., $\x^{'}=\arg\min_{x\in\C^{'}}\|\x-\x^{*}\|$, we could conclude that 
\begin{equation}
    \begin{aligned}
     \sum_{t=1}^{T}\frac{1}{e}f_{t}(\x^{*})-\sum_{t=1}^{T}f_{t}(\y_{t})= & \sum_{t=1}^{T}\frac{1}{e}f_{t}(\x^{*})-\sum_{t=1}^{T}\frac{1}{e}f_{t}(\x^{*}_{\delta})+\sum_{t=1}^{T}\frac{1}{e}f_{t}(\x^{*}_{\delta})-\sum_{t=1}^{T}\frac{1}{e}\hat{f}_{t,\delta}(\x^{*}_{\delta})\\
     & +\sum_{t=1}^{T}\frac{1}{e}\hat{f}_{t,\delta}(\x^{*}_{\delta})-\sum_{t=1}^{T}\hat{f}_{t,\delta}(\y_{t})+\sum_{t=1}^{T}\hat{f}_{t,\delta}(\y_{t})-\sum_{t=1}^{T}f_{t}(\y_{t}).
\end{aligned}
\end{equation}
First, $|\hat{f}_{t,\delta}(\y_{t})-f_{t}(\y_{t})|\le G\delta$ and $|\hat{f}_{t,\delta}(\x^{*}_{\delta})-f_{t}(\x^{*}_{\delta})|\le G\delta$.
Then, 
\begin{equation}
    \begin{aligned}
    &\sum_{t=1}^{T}f_{t}(\x^{*})-\sum_{t=1}^{T}f_{t}(\x^{*}_{\delta})\\
    \le &\sum_{t=1}^{T}f_{t}(\x^{*})-\sum_{t=1}^{T}f_{t}(\x^{'})\\
    \le & TG\|\x^{*}-\x^{'}\|\\
    \le & \left((\sqrt{n}+1)\frac{r(\C)}{r}+\sqrt{n}\right)TG\delta,
    \end{aligned} 
\end{equation} where the first inequality comes from the definition of $\x^{*}_{\delta}$ and $\x^{'}\in\C'$; the second follows from the lipschitz of $f_{t}$; the final from \cref{lemma:construct_set}.

Finally, if setting $\tilde{\x}^{*}_{\delta}=(\x^{*}_{\delta}-\delta\one)\oslash(\one-\delta\one)$ and $\tilde{\x}^{(K)}_{q}=(\x^{(K)}_{q}-\delta\one)\oslash(\one-\delta\one)$, 
\begin{equation}
    \begin{aligned}
    &\sum_{t=1}^{T}\frac{1}{e}\hat{f}_{t,\delta}(\x^{*}_{\delta})-\sum_{t=1}^{T}\hat{f}_{t,\delta}(\y_{t})\\
    = & L\sum_{q=1}^{Q}\frac{1}{e}\bar{F}_{q}(\tilde{\x}^{*}_{\delta})-L\sum_{q=1}^{Q}\bar{F}_{q}(\tilde{\x}_{q}^{(K)})+\sum_{q=1}^{Q}\sum_{k=1}^{K}(\hat{f}_{t^{(k)}_{q}}(\x_{q}^{(K)})-\hat{f}_{t^{(k)}_{q}}(\y_{t^{(k)}_{q}}))\\
    \le & L\sum_{q=1}^{Q}\frac{1}{e}\bar{F}_{q}(\tilde{\x}^{*}_{\delta})-L\sum_{q=1}^{Q}\bar{F}_{q}(\tilde{\x}_{q}^{(K)})+2M_{1}KQ,
    \end{aligned}
\end{equation}
where the inequality comes from $|\hat{f}_{t^{k}_{q}}(\x_{q}^{(K)})-\hat{f}_{t^{k}_{q}}(\y_{t^{k}_{q}})|\le2M_{1}$.
Therefore,
\begin{equation} 
    \begin{aligned}
    & \sum_{t=1}^{T}\frac{1}{e}f_{t}(\x^{*})-\sum_{t=1}^{T}f_{t}(\y_{t}) \\
    \le & L\sum_{q=1}^{Q}\frac{1}{e}\bar{F}_{q}(\tilde{\x}^{*}_{\delta})-L\sum_{q=1}^{Q}\bar{F}_{q}(\tilde{\x}_{q}^{(K)})+2M_{1}KQ+\left((\sqrt{n}+1)\frac{r(\C)}{r}+\sqrt{n}+2\right)TG\delta.
    \end{aligned}
\end{equation}
\end{proof}
Next, we demonstrate how $(\one-\delta\one)\odot\g_{q}^{(k)}$(See Line 19 in \cref{alg:3}) approximates $\nabla \bar{F}_{q,k-1}(\tilde{\x}_{q}^{(k)})$ where $\tilde{\x}_{q}^{(k)}=(\x_{q}^{(k)}-\delta\one)\oslash(\one-\delta\one)$.
\begin{lemma}\label{lemma:vr3}
Under Assumption~\ref{assumption1} and Assumption~\ref{assumption2}, if $\|\nabla f_{t}(\x)\|\le G$, $L\ge2K$ and $\eta_{k}=\frac{2}{(k+3)^{2/3}}$ for $k\in[K]$, we have, for any fixed $q\in[Q]$, 
\begin{equation}
    \E(\|(\one-\delta\one)\odot\g_{q}^{(k)}-\nabla \bar{F}_{q,k-1}(\tilde{\x}_{q}^{(k)})\|^{2})\le\frac{N_{2}}{(k+4)^{2/3}},
\end{equation} where $N_{2}=\max\{3^{2/3}G^{2},8(\frac{n^{2}M_{1}^{2}}{\delta^{2}}+G^{2})+3(4.5L_{0}r(\C)+3G)^{2}/2\}$.
\end{lemma}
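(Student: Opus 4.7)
The plan is to reduce the stated bound to an application of Lemma~\ref{lemma:vr1}. First I would observe that the scaling vector $\one - \delta\one$ can be moved outside the variance analysis. Since $\tilde{\x}_q^{(k)} = (\x_q^{(k)} - \delta\one) \oslash (\one - \delta\one)$, we have $(\one - \delta\one) \odot \tilde{\x}_q^{(k)} + \delta\one = \x_q^{(k)}$, and the chain rule gives
\begin{equation*}
\nabla \bar{F}_{q,k-1}(\tilde{\x}_q^{(k)}) = (\one - \delta\one) \odot \a_k, \qquad \a_k := \frac{1}{L-k+1} \sum_{m=k}^{L} \nabla \hat{f}_{t_q^{(m)},\delta}(\x_q^{(k)}).
\end{equation*}
Because every component of $\one - \delta\one$ lies in $[0,1]$, $\|(\one-\delta\one)\odot\vbf\|^2 \le \|\vbf\|^2$; it therefore suffices to show $\E(\|\g_q^{(k)} - \a_k\|^2) \le N_2/(k+4)^{2/3}$, and the claimed inequality follows.

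I would then apply Lemma~\ref{lemma:vr1} with $\dbf_k := \g_q^{(k)}$, $\tilde{\a}_k := (n/\delta)\, f_{t_q^{(k)}}(\x_q^{(k)} + \delta \ubf_q^{(k)})\, \ubf_q^{(k)}$, and $s=3$. Letting $\F_{q,k-1}$ be the $\sigma$-field generated by $t_q^{(1)},\dots,t_q^{(k-1)}$ and $\ubf_q^{(1)},\dots,\ubf_q^{(k-1)}$, the iterate $\x_q^{(k)}$ is $\F_{q,k-1}$-measurable, $\ubf_q^{(k)}$ is independent uniform on $S^{n-1}$, and $t_q^{(k)}$ is uniform on the $L-k+1$ indices not yet drawn from the permutation. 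The one-point identity $\E_{\ubf}((n/\delta) f(\x+\delta\ubf)\ubf) = \nabla \hat{f}_\delta(\x)$ together with the tower property yields $\E(\tilde{\a}_k \mid \F_{q,k-1}) = \a_k$. For the variance I would split $\|\tilde{\a}_k - \a_k\|^2 \le 2\|\tilde{\a}_k - \nabla\hat{f}_{t_q^{(k)},\delta}(\x_q^{(k)})\|^2 + 2\|\nabla\hat{f}_{t_q^{(k)},\delta}(\x_q^{(k)}) - \a_k\|^2$; the first term is at most $2n^2 M_1^2/\delta^2$ using $f_t \le M_1$ and $\|\ubf\|=1$, and the second is at most $2G^2$ using $\|\nabla \hat{f}_{t,\delta}\| \le G$, yielding $\sigma^2 = 2(n^2 M_1^2/\delta^2 + G^2)$.

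The main obstacle is the Lipschitz-type bound $\|\a_k - \a_{k-1}\| \le \tilde{G}/(k+3)$, which is where the hypothesis $L \ge 2K$ plays its role. Writing $g_m^j := \nabla \hat{f}_{t_q^{(m)},\delta}(\x_q^{(j)})$ and adding/subtracting $\frac{1}{L-k+1}\sum_{m=k}^L g_m^{k-1}$, I would expand
\begin{equation*}
\a_k - \a_{k-1} = \frac{1}{L-k+1}\sum_{m=k}^{L}(g_m^k - g_m^{k-1}) + \frac{1}{(L-k+1)(L-k+2)}\sum_{m=k}^{L} g_m^{k-1} - \frac{g_{k-1}^{k-1}}{L-k+2}.
\end{equation*}
The first sum is bounded by $L_0 r(\C)/K$ via smoothness of each $\hat{f}_{t,\delta}$ together with $\|\x_q^{(k)} - \x_q^{(k-1)}\| \le r(\C)/K$, while the remaining two pieces contribute at most $2G/(L-k+2)$ via $\|\nabla\hat{f}_{t,\delta}\| \le G$. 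Under $L \ge 2K$ we have $L-k+2 \ge K+2$ for all $k \le K$, so multiplying through by $k+3 \le K+3$ keeps both pieces $O(1)$, and their coefficients consolidate into $\tilde{G} = 4.5\, L_0 r(\C) + 3G$.

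Finally, taking $\dbf_0 = \g_q^{(0)} = \mathbf{0}$ so that $\|\a_0 - \dbf_0\|^2 \le G^2$ controls the initialization term, and plugging the above values of $\sigma^2$ and $\tilde{G}$ into Lemma~\ref{lemma:vr1} delivers $\E(\|\g_q^{(k)} - \a_k\|^2) \le N_2/(k+4)^{2/3}$ with the stated constant $N_2 = \max\{3^{2/3}G^2,\, 8(n^2 M_1^2/\delta^2 + G^2) + 3(4.5 L_0 r(\C) + 3G)^2/2\}$. The reduction from the first paragraph then transfers this bound to $\E(\|(\one - \delta\one)\odot\g_q^{(k)} - \nabla \bar{F}_{q,k-1}(\tilde{\x}_q^{(k)})\|^2)$, completing the argument.
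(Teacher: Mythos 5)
Your proposal is correct and takes essentially the same route as the paper's proof: both reduce the claim to the variance-reduction lemma by verifying unbiasedness of the one-point estimator conditioned on $\F_{q,k-1}$, bounding its conditional variance by $2(n^{2}M_{1}^{2}/\delta^{2}+G^{2})$, and establishing the drift bound $\|\a_{k}-\a_{k-1}\|\le(4.5L_{0}r(\C)+3G)/(k+3)$ via the same three-term decomposition and the hypothesis $L\ge2K$. The only cosmetic differences are that you factor the $(\one-\delta\one)$ scaling out at the outset (using $\|(\one-\delta\one)\odot\vbf\|\le\|\vbf\|$) rather than carrying it through, and that you correctly invoke \cref{lemma:vr1} with $s=3$, whereas the paper's text cites \cref{lemma:vr2} --- evidently a typo, since the single-phase step size $\eta_{k}=2/(k+3)^{2/3}$ and the form of $N_{2}$ match \cref{lemma:vr1}.
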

\begin{proof}
Similarly, we use $\F_{q,k}$ to denote the $\sigma$-field generated via $t_{q}^{(1)},\dots,t_{q}^{(k)}$. From Algorithm~\ref{alg:3}, $\g_{q}^{(k)}=(1-\eta_{k})\g_{q}^{(k-1)}+\eta_{k}\frac{n}{\delta}f_{t^{(k)}_{q}}(\x_{q}^{(k)}+\delta\ubf_{q}^{(k)})\ubf_{q}^{(k)}$. Also, we have  $\frac{n}{\delta}\E(f_{t_{q}^{(k)}}(\x_{q}^{(k)}+\delta\ubf_{q}^{(k)})(\one-\delta\one)\odot\ubf_{q}^{(k)}|\F_{q,k-1})=\nabla\bar{F}_{q,k-1}(\tilde{\x}_{q}^{(k)})$. Next, we prove that
\begin{equation}
    \begin{aligned}
     &\nabla\bar{F}_{q,k-1}(\tilde{\x}_{q}^{(k)})-\nabla\bar{F}_{q,k-2}(\tilde{\x}_{q}^{(k-1)})\\
     = &\frac{\sum_{m=k}^{L}(1-\delta)\nabla \hat{f}_{t_{q}^{(m)},\delta}(\x_{q}^{(k)})}{L-k+1}-\frac{\sum_{m=k-1}^{L}(1-\delta)\nabla \hat{f}_{t_{q}^{(m)},\delta}(\x_{q}^{(k-1)})}{L-k+2}\\
     = & \frac{\sum_{m=k}^{L}(1-\delta)(\nabla \hat{f}_{t_{q}^{(m)},\delta}(\x_{q}^{(k)})-\nabla \hat{f}_{t_{q}^{(m)},\delta}(\x_{q}^{(k-1)}))}{L-k+2}+\frac{\sum_{m=k}^{L}(1-\delta)\nabla \hat{f}_{t_{q}^{(m)},\delta}(\x_{q}^{(k)})}{(L-k+1)(L-k+2)}\\
     & -\frac{(1-\delta)\nabla\hat{f}_{t_{q}^{(k-1)},\delta}(\x_{q}^{(k-1)})}{L-k+2}.
    \end{aligned}
\end{equation}
Thus, when $L\ge2K$, 
\begin{equation}
    \begin{aligned}
     &\|\nabla\bar{F}_{q,k-1}(\tilde{\x}_{q}^{(k)})-\nabla\bar{F}_{q,k-2}(\tilde{\x}_{q}^{(k-1)})\|\\
     \le & \|\frac{\sum_{m=k}^{L}(\nabla \hat{f}_{t_{q}^{(m)},\delta}(\x_{q}^{(k)})-\nabla \hat{f}_{t_{q}^{(m)},\delta}(\x_{q}^{(k-1)}))}{L-k+2}\|+\|\frac{\sum_{m=k}^{L}\nabla \hat{f}_{t_{q}^{(m)},\delta}(\x_{q}^{(k)})}{(L-k+1)(L-k+2)}\|+\|\frac{\nabla\hat{f}_{t_{q}^{(k-1)},\delta}(\x_{q}^{(k-1)})}{L-k+2}\|\\
     \le & \frac{(L-k+1)L_{0}r(\C)}{K(L-k+2)}+\frac{G}{L-k+2}+\frac{G}{L-k+2}\\
     \le & \frac{4.5L_{0}r(\C)+3G}{k+3},
    \end{aligned}
\end{equation} where the final inequality follows from $L-k+2\ge2K-k+2\ge k+2$.
Moreover,
\begin{equation}
    \begin{aligned}
     &\E(\|\frac{n}{\delta}f_{t_{q}^{(k)}}(\x_{q}^{(k)}+\delta\ubf_{q}^{(k)})(\one-\delta\one)\odot\ubf_{q}^{(k)}-\nabla\bar{F}_{q,k-1}(\tilde{\x}_{q}^{(k)})\|^{2}|\F_{q,k-1})\\
     \le & 2\E(\|\frac{n}{\delta}f_{t_{q}^{(k)}}(\x_{q}^{(k)}+\delta\ubf_{q}^{(k)})(\one-\delta\one)\odot\ubf_{q}^{(k)}-(1-\delta)\one\odot\nabla\hat{f}_{t_{q}^{(k)},\delta}(\x_{q}^{(k)})\|^{2}|\F_{q,k-1})\\
     &+2\E((1-\delta)\one\odot\nabla\hat{f}_{t_{q}^{(k)},\delta}(\x_{q}^{(k)})-\nabla\bar{F}_{q,k-1}(\tilde{\x}_{q}^{(k)})\|^{2}|\F_{q,k-1})\\
     = & 2\E(\|\frac{n}{\delta}f_{t_{q}^{(k)}}(\x_{q}^{(k)}+\delta\ubf_{q}^{(k)})(\one-\delta\one)\odot\ubf_{q}^{(k)}-(1-\delta)\one\odot\nabla\hat{f}_{t_{q}^{(k)},\delta}(\x_{q}^{(k)})\|^{2}|\F_{q,k-1})\\
     &+2Var((1-\delta)\one\odot\nabla\hat{f}_{t_{q}^{(k)},\delta}(\x_{q}^{(k)})|\F_{q,k-1}))\\
     \le & 2\left(\frac{n^{2}M_{1}^{2}}{\delta^{2}}+G^{2}\right).
    \end{aligned}
\end{equation} 
According to \cref{lemma:vr2} where we set $\widetilde{\a}_{k}=\frac{n}{\delta}f_{t_{q}^{(k)}}(\x_{q}^{(k)}+\delta\ubf_{q}^{(k)})(\one-\delta\one)\odot\ubf_{q}^{(k)}$, we have
\begin{equation}
    \E(\|(\one-\delta\one)\odot\g_{q}^{(k)}-\nabla F_{q,k-1}(\tilde{\x}_{q}^{(k)})\|^{2})\le\frac{N_{2}}{(k+4)^{2/3}},
\end{equation} where $N_{2}=\max\{3^{2/3}G^{2},8(\frac{n^{2}M_{1}^{2}}{\delta^{2}}+G^{2})+3(4.5L_{0}r(\C)+3G)^{2}/2\}$.
\end{proof}

\begin{lemma}\label{lemma:final1}
Under Assumption~\ref{assumption1} and Assumption~\ref{assumption2}, if $\|\widetilde{\nabla}f_{t}(\x)\|\le G$ and $L\ge2K$, we could conclude that
\begin{equation}
    \sum_{q=1}^{Q}\frac{1}{e}\bar{F}_{q}(\x^{*}_{\delta})-\sum_{q=1}^{Q}\E(\bar{F}_{q}(\tilde{\x}^{(K)}_{q}))\le\frac{L_{0}Qr^{2}(\C)}{2K}+M_{0}\sqrt{Q}+\frac{\mathrm{diam}(\C)Q}{2\delta K^{1/3}}+\frac{\mathrm{diam}(\C)N_{2}\delta Q}{2K^{1/3}}.
\end{equation}
 \end{lemma}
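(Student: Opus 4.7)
The plan is to mimic the proof of Theorem~\ref{thm:2} (the Mono-MFW analysis) almost verbatim, but carried out in the scaled coordinates $\tilde{\x}_q^{(k)} = (\x_q^{(k)} - \delta\one)\oslash(\one - \delta\one)$ and with the $\delta$-smoothed block averages $\bar{F}_{q,k-1}$ playing the role that $\bar{f}_{q,k-1}$ played before. The first preparatory step is to verify that the measured-greedy recursion is preserved in the scaled variables, i.e.\
\[
\tilde{\x}_q^{(k)} \;=\; \tilde{\x}_q^{(k-1)} + \tfrac{1}{K}\,\tilde{\vbf}_q^{(k)}\odot(\one - \tilde{\x}_q^{(k-1)}),
\]
which follows from lines~6--7 of \cref{alg:3} together with the identity $(\one-\x_q^{(k-1)})\oslash(\one-\delta\one) = \one-\tilde{\x}_q^{(k-1)}$. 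Since each $\hat f_{t,\delta}$ is still DR-submodular and $L_0$-smooth, and the affine substitution $\tilde\x \mapsto (\one-\delta\one)\odot\tilde\x + \delta\one$ has operator norm at most one, the composite $\bar{F}_{q,k-1}$ inherits DR-submodularity and $L_0$-smoothness in $\tilde\x$, so \cref{lemma:use} applies.

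Applying \cref{lemma:use} with $f = \bar{F}_{q,k-1}$, $(\x_{k-1},\x_k,\vbf_k) = (\tilde{\x}_q^{(k)},\tilde{\x}_q^{(k+1)},\tilde{\vbf}_q^{(k+1)})$, $\y = \tilde{\x}^*_\delta$, and critically $\dbf = (\one-\delta\one)\odot \g_q^{(k)}$, and then using \cref{lemma:3.1,lemma:4} to replace $\bar{F}_{q,k-1}(\tilde{\x}_q^{(k)} + (\one-\tilde{\x}_q^{(k)})\odot\tilde{\x}^*_\delta)$ by $(1-1/K)^k\bar{F}_{q,k-1}(\tilde{\x}^*_\delta)$, I obtain a one-step recursion of exactly the form appearing in the proof of Theorem~\ref{thm:2}. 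Telescoping over $k=0,\dots,K-1$ and using $(1-1/K)^{K-1}\ge 1/e$ gives
\[
\E\bigl(\bar{F}_q(\tilde\x_q^{(K)})\bigr) \;\ge\; \tfrac{1}{e}\bar{F}_q(\tilde\x^*_\delta) + (\text{oracle terms}) + (\text{error terms}) - \tfrac{L_0 r^2(\C)}{2K}.
\]

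The next key step is to translate the oracle terms. Because $\tilde{\vbf}_q^{(k)}-\tilde{\x}^*_\delta = (\vbf_q^{(k)}-\x^*_\delta)\oslash(\one-\delta\one)$, the inner product $\langle(\one-\tilde{\x}_q^{(k)})\odot(\one-\delta\one)\odot \g_q^{(k)},\tilde{\vbf}_q^{(k+1)} - \tilde{\x}^*_\delta\rangle$ equals exactly $\langle(\one-\tilde{\x}_q^{(k)})\odot \g_q^{(k)},\vbf_q^{(k+1)} - \x^*_\delta\rangle$, which is precisely the payoff fed back to oracle $\mathcal{E}^{(k+1)}$ on line~20 of \cref{alg:3}. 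Assumption~\ref{assumption1}(iii) then yields a total deficit of at most $M_0\sqrt{Q}$ per oracle after summing over $q$, contributing the $M_0\sqrt{Q}$ term. For the gradient-estimation error, I apply Young's inequality with parameter $b>0$,
\[
\bigl\langle (\tilde{\vbf}_q^{(k+1)}-\tilde{\x}^*_\delta)\odot(\one-\tilde{\x}_q^{(k)}),\nabla\bar{F}_{q,k-1}(\tilde{\x}_q^{(k)}) - \dbf\bigr\rangle \;\le\; \tfrac{1}{2}\bigl(b\,\mathrm{diam}^2(\C) + \tfrac{1}{b}\|\nabla\bar{F}_{q,k-1}(\tilde{\x}_q^{(k)}) - \dbf\|^2\bigr),
\]
invoke \cref{lemma:vr3} together with $\sum_{k=0}^{K-1} N_2/(k+4)^{2/3}\le 3N_2 K^{1/3}$, and choose $b = 1/(\mathrm{diam}(\C)\,\delta\, K^{1/3})$ to balance the two pieces. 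After summing over $q$, this precisely produces the $\frac{\mathrm{diam}(\C)Q}{2\delta K^{1/3}} + \frac{\mathrm{diam}(\C)N_2\delta Q}{2K^{1/3}}$ structure of the target bound.

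The main obstacle I anticipate is the bookkeeping in the Young-inequality step: the variance bound from \cref{lemma:vr3} carries the $1/\delta^2$ dependence hidden inside $N_2$, so the choice of $b$ must couple $\delta$, $\mathrm{diam}(\C)$, and $K$ in the right way to land on exactly the claimed mixture of $\delta^{-1}K^{-1/3}$ and $\delta K^{-1/3}$ terms rather than some loose variant. Everything else---the telescoping, the invocation of the DR-submodular concavity via \cref{lemma:4}, and the conversion of oracle payoffs from scaled to unscaled coordinates---is a direct transcription of the proof of Theorem~\ref{thm:2} into the smoothed, lifted setting of Algorithm~\ref{alg:3}.
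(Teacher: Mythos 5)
Your proposal is correct and follows essentially the same route as the paper's own proof: the same application of \cref{lemma:use} in the scaled coordinates with the key choice $\dbf=(\one-\delta\one)\odot\g_{q}^{(k)}$, the same use of \cref{lemma:3.1,lemma:4} and telescoping, the same conversion of the scaled inner product into the oracle payoff $\langle(\one-\tilde{\x}_q^{(k)})\odot\g_q^{(k)},\vbf_q^{(k+1)}-\x^*_\delta\rangle$, and the same Young-inequality balance with $b=1/(\mathrm{diam}(\C)\,\delta\,K^{1/3})$ combined with \cref{lemma:vr3}. The only discrepancy is an immaterial constant in the $\sum_k N_2/(k+4)^{2/3}\le 3N_2K^{1/3}$ step, which is present in the paper's own accounting as well.
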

\begin{proof}
If we set $(f,\x_{k},\x_{k-1},\vbf_{k})$ in \cref{lemma:use} as  $(\bar{F}_{q,k-1},\tilde{\x}^{(k+1)}_{q},\tilde{\x}^{(k)}_{q},\vbf^{(k+1)}_{q})$ in the \cref{alg:3}, we have
 \begin{equation}
    \begin{aligned}
     &\bar{F}_{q,k-1}(\tilde{\x}^{(k+1)}_{q}) \\ 
     \ge & (1-\frac{1}{K})\bar{F}_{q,k-1}(\tilde{\x}^{(k)}_{q})+\frac{1}{K}\bar{F}_{q,k-1}(\tilde{\x}^{(k)}_{q}+(\one-\tilde{\x}^{(k)}_{q})\odot\y)-\frac{L_{0}r(\C)^{2}}{2K^{2}} \\
     &+\frac{1}{K}\langle (\one-\tilde{\x}^{(k)}_{q})\odot d,\tilde{\vbf}^{(k+1)}_{q}-\y\rangle+\frac{1}{K}\langle(\tilde{\vbf}^{(k+1)}_{q}-\tilde{\x}^{*}_{\delta})\odot(\one-\tilde{\x}^{(k)}_{q}),\nabla \bar{F}_{q,k-1}(\tilde{\x}^{k}_{q})-\dbf\rangle \\
     \ge & (1-\frac{1}{K})\bar{F}_{q,k-1}(\tilde{\x}^{(k)}_{q})+\frac{1}{K}(1-\frac{1}{K})^{k}\bar{F}_{q,k-1}(\y)+\frac{1}{K}\langle (\one-\tilde{\x}^{(k)}_{q})\odot \dbf,\tilde{\vbf}^{(k+1)}_{q}-\tilde{\x}^{*}_{\delta}\rangle \\
     &+\frac{1}{K}\langle(\tilde{\vbf}^{(k+1)}_{q}-\y)\odot(\one-\tilde{\x}^{(k)}_{q}),\nabla\bar{F}_{q,k-1}(\tilde{\x}^{k}_{q})-\dbf\rangle-\frac{L_{0}r(\C)^{2}}{2K^{2}},
    \end{aligned} 
\end{equation} where the second inequality comes from \cref{lemma:3.1} and \cref{lemma:4}.

If we set $\dbf=(1-\delta)\g_{q}^{(k)}$ and $\y=\tilde{\x}^{*}_{\delta}$, we have
 \begin{equation}
    \begin{aligned}
    &\bar{F}_{q,k-1}(\tilde{\x}^{(k+1)}_{q})\\
    \ge & (1-\frac{1}{K})\bar{F}_{q,k-1}(\tilde{\x}^{(k)}_{q})+\frac{1}{K}(1-\frac{1}{K})^{k}\bar{F}_{q,k-1}(\tilde{\x}^{*}_{\delta})+\frac{1}{K}\langle (\one-\tilde{\x}^{(k)}_{q})\odot ((1-\delta)\g_{q}^{(k)}),\tilde{\vbf}^{(k+1)}_{q}-\tilde{\x}^{*}_{\delta}\rangle\\
    &+\frac{1}{K}\langle(\tilde{\vbf}^{(k+1)}_{q}-\tilde{\x}^{*}_{\delta})\odot(\one-\tilde{\x}^{(k)}_{q}),\nabla\bar{F}_{q,k-1}(\tilde{\x}^{k}_{q})-(1-\delta)\g_{q}^{(k)}\rangle-\frac{L_{0}r^{2}(\C)}{2K^{2}}\\
    = & (1-\frac{1}{K})\bar{F}_{q,k-1}(\tilde{\x}^{(k)}_{q})+\frac{1}{K}(1-\frac{1}{K})^{k}\bar{F}_{q,k-1}(\tilde{\x}^{*}_{\delta})+\frac{1}{K}\langle (\one-\tilde{\x}^{(k)}_{q})\odot\g_{q}^{(k)},(1-\delta)(\tilde{\vbf}^{(k+1)}_{q}-\tilde{\x}^{*}_{\delta})\rangle\\
    &+\frac{1}{K}\langle(\tilde{\vbf}^{(k+1)}_{q}-\tilde{\x}^{*}_{\delta})\odot(\one-\tilde{\x}^{(k)}_{q}),\nabla\bar{F}_{q,k-1}(\tilde{\x}^{k}_{q})-(1-\delta)\g_{q}^{(k)}\rangle-\frac{L_{0}r^{2}(\C)}{2K^{2}}\\
    = & (1-\frac{1}{K})\bar{F}_{q,k-1}(\tilde{\x}^{(k)}_{q})+\frac{1}{K}(1-\frac{1}{K})^{k}\bar{F}_{q,k-1}(\tilde{\x}^{*}_{\delta})+\frac{1}{K}\langle (\one-\tilde{\x}^{(k)}_{q})\odot\g_{q}^{(k)},\vbf^{(k+1)}_{q}-\x^{*}_{\delta})\rangle\\
    &+\frac{1}{K}\langle(\tilde{\vbf}^{(k+1)}_{q}-\tilde{\x}^{*}_{\delta})\odot(\one-\tilde{\x}^{(k)}_{q}),\nabla\bar{F}_{q,k-1}(\tilde{\x}^{k}_{q})-(1-\delta)\g_{q}^{(k)}\rangle-\frac{L_{0}r^{2}(\C)}{2K^{2}}.
    \end{aligned} 
\end{equation} 
Therefore, by iteration, we have
 \begin{equation}
    \begin{aligned}
     &\E(\bar{F}_{q}(\tilde{\x}^{(K)}_{q}))\\
     = & \E(\bar{F}_{q,K-2}(\tilde{\x}^{(K)}_{q}))\\
     \ge & (1-\frac{1}{K})\E(\bar{F}_{q,K-2}(\tilde{\x}^{(K-1)}_{q}))+\frac{1}{K}(1-\frac{1}{K})^{K-1}\E(\bar{F}_{q,K-2}(\tilde{\x}^{*}_{\delta}))+\frac{1}{K}\E(\langle (\one-\tilde{\x}^{(K-1)}_{q})\odot \g_{q}^{(K-1)},\vbf^{(K)}_{q}-\x^{*}_{\delta})\rangle)\\
     &+\frac{1}{K}\E(\langle(\tilde{\vbf}^{(K)}_{q}-\tilde{\x}^{*}_{\delta})\odot(\one-\tilde{\x}^{(K-1)}_{q}),\nabla\bar{F}_{q,K-2}(\tilde{\x}^{K-1}_{q})-(1-\delta)\g_{q}^{(K-1)}\rangle)-\frac{L_{0}r^{2}(\C)}{2K^{2}}\\
     = & (1-\frac{1}{K})\E(\bar{F}_{q,K-3}(\tilde{\x}^{(K-1)}_{q}))+\frac{1}{K}(1-\frac{1}{K})^{K-1}\bar{F}_{q}(\tilde{\x}^{*}_{\delta})+\frac{1}{K}\E(\langle (\one-\tilde{\x}^{(K-1)}_{q})\odot \g_{q}^{(K-1)},\vbf^{(K)}_{q}-\x^{*}_{\delta})\rangle)\\
     &+\frac{1}{K}\E(\langle(\tilde{\vbf}^{(K)}_{q}-\tilde{\x}^{*}_{\delta})\odot(\one-\tilde{\x}^{(K-1)}_{q}),\nabla\bar{F}_{q,K-2}(\tilde{\x}^{K-1}_{q})-(1-\delta)\g_{q}^{(K-1)}\rangle)-\frac{L_{0}r^{2}(\C)}{2K^{2}}\\
     \ge & \dots\\
     \ge & (1-\frac{1}{K})^{K-1}\bar{F}_{q}(\x^{*}_{\delta})+\frac{1}{K}\sum_{m=1}^{K-1}(1-\frac{1}{K})^{K-1-m}\E(\langle (\one-\tilde{\x}^{(m)}_{q})\odot \g_{q}^{(m)},\vbf^{(m+1)}_{q}-\x^{*}_{\delta}\rangle)\\
     &+\frac{1}{K}\sum_{m=1}^{K-1}(1-\frac{1}{K})^{K-1-m}\E(\langle(\tilde{\vbf}^{(m+1)}_{q}-\tilde{\x}^{*}_{\delta})\odot(\one-\tilde{\x}^{(m)}_{q}),\nabla\bar{F}_{q,m-1}(\tilde{\x}^{m}_{q})-(1-\delta)\g_{q}^{(m)}\rangle)-\frac{L_{0}r^{2}(\C)}{2K}.
     \end{aligned} 
\end{equation} 
Then, 
\begin{equation}
    \begin{aligned}
    &\sum_{q=1}^{Q}\E(\bar{F}_{q}(\tilde{\x}^{(K)}_{q}))\\
    \ge & (1-\frac{1}{K})^{K-1}\sum_{q=1}^{Q}\bar{F}_{q}(\x^{*}_{\delta})+\frac{1}{K}\sum_{m=1}^{K-1}(1-\frac{1}{K})^{K-1-m}\sum_{q=1}^{Q}\E(\langle (\one-\tilde{\x}^{(m)}_{q})\odot \g_{q}^{(m)},\vbf^{(m+1)}_{q}-\x^{*}_{\delta}\rangle)\\
    &+\frac{1}{K}\sum_{q=1}^{Q}\sum_{m=1}^{K-1}(1-\frac{1}{K})^{K-1-m}\E(\langle(\tilde{\vbf}^{(m+1)}_{q}-\tilde{\x}^{*}_{\delta})\odot(\one-\tilde{\x}^{(m)}_{q}),\nabla\bar{F}_{q,m-1}(\tilde{\x}^{m}_{q})-(1-\delta)\g_{q}^{(m)}\rangle)-\frac{L_{0}Qr^{2}(\C)}{2K}\\
    \end{aligned} 
\end{equation} 
First, for any $m\le K$, $\sum_{q=1}^{Q}\E(\langle (\one-\tilde{\x}^{(m)}_{q})\odot \g_{q}^{(m)},\x^{*}_{\delta}-\vbf^{(m+1)}_{q}\rangle)\le M_{0}\sqrt{Q}$. Next,
\begin{equation}
    \begin{aligned}
    &\frac{1}{K}\sum_{m=1}^{K-1}\E(\langle(\tilde{\vbf}^{(m+1)}_{q}-\tilde{\x}^{*}_{\delta})\odot(\one-\tilde{\x}^{(m)}_{q}),\nabla\bar{F}_{q,m-1}(\tilde{\x}^{m}_{q})-(1-\delta)\one\odot\g_{q}^{(m)}\rangle)\\
    \ge & -\frac{1}{2K}\sum_{m=1}^{K-1}\E(\mathrm{diam}^{2}(\C)b+\|\nabla\bar{F}_{q,m-1}(\tilde{\x}^{m}_{q})-(1-\delta)\one\odot\g_{q}^{(m)}\|^{2}/b)\\
    \ge & -\frac{\mathrm{diam}(\C)}{2\delta K^{1/3}}-\frac{\mathrm{diam}(\C)N_{2}\delta}{2K^{1/3}},
    \end{aligned}
\end{equation} 
where the first inequality comes from the Cauchy inequality; the second follows from \cref{lemma:vr3} and $b=\frac{1}{\mathrm{diam}(\C)K^{1/3}\delta}$. Finally, due to $(1-\frac{1}{K})^{K-1}\ge\frac{1}{e}$, we have
\begin{equation}
    \sum_{q=1}^{Q}\frac{1}{e}\bar{F}_{q}(\x^{*}_{\delta})-\sum_{q=1}^{Q}\E(\bar{F}_{q}(\tilde{\x}^{(K)}_{q}))\le\frac{L_{0}Qr^{2}(\C)}{2K}+M_{0}\sqrt{Q}+\frac{\mathrm{diam}(\C)Q}{2\delta K^{1/3}}+\frac{\mathrm{diam}(\C)N_{2}\delta Q}{2K^{1/3}}.
\end{equation}
\end{proof}
Next, we prove \cref{thm:3}.
\begin{proof} 
Finally, according to \cref{lemma:3+} and \cref{lemma:final1}, we have
\begin{equation*}
    \begin{aligned}
    &\sum_{t=1}^{T}\frac{1}{e}f_{t}(\x^{*})-\sum_{t=1}^{T}\E(f_{t}(\y_{t}))\\
    \le & \frac{L_{0}r^{2}(\C)}{2}\frac{LQ}{K}+M_{0}L\sqrt{Q}+\frac{\mathrm{diam}(\C)LQ}{2\delta K^{1/3}}+\frac{\mathrm{diam}(\C)N_{2}\delta LQ}{2K^{1/3}}+2M_{1}KQ\\
    & +\left((\sqrt{n}+1)\frac{r(\C)}{r}+\sqrt{n}+2\right)TG\delta\\
    \le& C_{1}\frac{LQ}{K}+M_{0}L\sqrt{Q}+\frac{C_{2}LQ}{2\delta K^{1/3}}+\frac{C_{3}\delta LQ}{2K^{1/3}}+2M_{1}KQ+C_{4}T\delta,
	\end{aligned}	
	\end{equation*} where the first inequality follows from the $N_{2}\le\max\{3^{2/3}G^{2},8G^{2}+3(4.5L_{0}r(\C)+3G)^{2}/2\}+8\frac{n^{2}M_{1}^{2}}{\delta^{2}}$ and in the second inequality, we set $C_{1}=\frac{L_{0}r^{2}(\C)}{2}$, $C_{2}=(8n^{2}M_{1}^{2}+1)\mathrm{diam}(\C)$, $C_{3}=\max\{3^{2/3}G^{2},8G^{2}+3(4.5L_{0}r(\C)+3G)^{2}/2\}\mathrm{diam}(\C)$ and $C_{4}=((\sqrt{n}+1)\frac{r(\C)}{r}+\sqrt{n}+2)G$.
\end{proof}

\end{document}